\documentclass{article}

\usepackage[preprint]{neurips_2026}

\usepackage[utf8]{inputenc} 
\usepackage[T1]{fontenc}    
\usepackage{hyperref}       
\usepackage{url}            
\usepackage{booktabs}       
\usepackage{amsfonts}       
\usepackage{amsmath}        
\usepackage{amssymb}
\usepackage{nicefrac}       
\usepackage{microtype}      
\usepackage{xcolor}         

\usepackage{hyperref}
\usepackage{graphicx}
\usepackage{float}
\usepackage{url}
\usepackage{amsthm}
\usepackage{amsmath}
\usepackage{caption}
\usepackage{subcaption}
\usepackage{wrapfig}
\usepackage{comment}
\usepackage{booktabs}
\usepackage{siunitx}
\usepackage{multirow}
\usepackage{tikz}
\usetikzlibrary{arrows.meta,positioning}
\usepackage{pgfplots}
\pgfplotsset{compat=1.18}
\usepgfplotslibrary{groupplots}

\usepackage[most]{tcolorbox}
\usepackage{xcolor}

\definecolor{propback}{HTML}{EFF5FC}
\definecolor{propborder}{HTML}{A9C6E5}

\definecolor{scaleback}{HTML}{FFF0F0}
\definecolor{scaleborder}{HTML}{C96868}

\tcolorboxenvironment{proposition}{
  enhanced,
  breakable,
  colback=propback,
  colframe=propborder,
  boxrule=0.6pt,
  arc=2pt,
  left=7pt,
  right=7pt,
  top=5pt,
  bottom=5pt,
  before skip=8pt,
  after skip=8pt
}

\newtcolorbox{scalinglaw}{
  enhanced,
  breakable,
  colback=scaleback,
  colframe=scaleborder,
  boxrule=0.6pt,
  arc=2pt,
  left=7pt,
  right=7pt,
  top=6pt,
  bottom=6pt,
  before skip=8pt,
  after skip=8pt
}

\usepackage{algorithm}
\usepackage{algpseudocode}

\newtheorem{theorem}{Theorem}[section]
\newtheorem{proposition}[theorem]{Proposition}

\newtheorem{corollary}{Corollary}[theorem]

\newtheorem*{theorem*}{Theorem}
\newtheorem*{lemma*}{Lemma}
\newtheorem*{proposition*}{Proposition}

\newcommand{\model}{\mathrm{m}}

\newcommand{\E}{\mathbb{E}}
\newcommand{\argmin}{\operatorname*{argmin}}

\title{Information-Theoretic Scaling Laws \\ for Generative Diffusion Training}

\author{%
    Druv Pai\thanks{Equal contribution, ordered alphabetically.
} \\
    UC Berkeley \\
    \And
    Dejan Stan\v{c}evi\'{c} \footnotemark[1] \\
    Radboud University \\
    \And
    Yi Ma \\
    HKU, UC Berkeley \\
    \And
    Luca Ambrogioni \\
    Radboud University \\
}

\begin{document}

\maketitle

\begin{abstract}
    Diffusion models trained on increasingly large datasets sampled from the same data distribution will converge to the same learned distribution. However, in practice, a model trained on finite data depends on the particular training dataset; this dependence is reflected in properties of the model's outputs, such as memorization and sample quality. To understand this dependence through an information-theoretic lens, we study \textit{model information}, a computationally tractable measure of how much samples from the trained model depend on the particular training dataset. We explore its scaling behavior as model size and training dataset size vary. Specifically, we uncover two qualitatively distinct scaling regimes: (1) a nonlinear model information regime marking the transition from pure memorization to imperfect generalization, and (2) a generalization regime characterized by linear scaling of the model information and progressively improved generalization and sample quality. We then posit an information-theoretically principled scaling law to quantify these two regimes, and show that this law yields induced scaling laws for memorization and sample quality. We finally demonstrate that these induced scaling laws extrapolate well outside the fit's domain. Our results demonstrate information-theoretic principles for memorization and sample quality scaling laws in diffusion models. Our code is publicly available on \href{https://github.com/DruvPai/diffusion_info_theory_scaling_law}{GitHub}.
\end{abstract}

\section{Introduction}

Diffusion models \citep{sohldickstein2015deepunsupervisedlearningusing,ho2020denoisingdiffusionprobabilisticmodels,song2021scorebasedgenerativemodelingstochastic} have emerged as one of the most successful approaches to deep generative modeling, achieving remarkable results across diverse domains, including images, videos, audio, and scientific applications \citep{rombach2022highresolutionimagesynthesislatent,ho2022videodiffusionmodels,chen2020wavegradestimatinggradientswaveform,watson2023de}. Their training objective, denoising score matching \citep{vincent2011connection}, ensures that when sufficiently powerful models are trained with arbitrarily large amounts of training data, their output distribution converges to the true data distribution, independently of the exact training data used, and such trained models \textit{generalize} \citep{yoon2023diffusion,gu2023memorization} --- they generate realistic samples not contained in the training data. However, when trained on finite datasets, these models can \textit{memorize} and reproduce individual training examples \citep{carlini2023extracting,somepalli2023diffusion}, producing realistic samples without capturing the underlying data distribution \citep{marion2026understanding}. A fundamental methodological question is: {\em can we characterize the occurrence of memorization and generalization in diffusion models?}

Early empirical studies found that memorization diminishes with increasing training-set size and that this transition from memorization to generalization depends on the underlying denoiser neural network capacity, architecture, and training procedure \citep{yoon2023diffusion,gu2023memorization}. Theoretical work has further investigated why these models may generalize despite training objectives that favor memorization, highlighting the roles of model capacity \citep{buchanan2025edge}, implicit regularization \citep{bonnaire2025why}, and architectural inductive biases \citep{kamb2024analytic}. A complementary perspective on generalization emerges from studying the dependence of trained models on their training datasets: memorizing models are highly sensitive to the particular examples encountered during training, whereas generalizing models tend to capture the underlying data distribution more independently of the training set. For example, \citet{kadkhodaie2023generalization} related generalization to the convergence of independently trained models to the same output distribution, while other studies showed that the influence of individual training examples and the variability of learned denoisers decrease with dataset size \citep{dai2026outputs,wang2026randommatrix}. However, the quantitative behavior of this dependence of the learned distribution on the training dataset across dataset sizes and model capacities remains under-explored.

\paragraph{Our contributions.} We define the \textit{model information}, an information-theoretic measure of how much a generated sample reveals about the identity of the model that produced it (and thus, under deterministic training, the exact training dataset used). The idea is sketched in Figure \ref{fig:model_information_schematic}. We show:
\begin{enumerate}
    \item Theoretically, model information characterizes statistical measures of bias, variance, memorization, and generalization (Section \ref{sec:model_information}).
    \item Empirically, model information is relatively cheap to compute and follows a predictable scaling law, the form of which we derive via solving a simple and analytically tractable linear-Gaussian model, as training-set size and model capacity increase (Section \ref{sub:information_scaling_law}).
    \item Model information is strongly correlated with empirically designed measures of memorization, generalization, and sample quality (Section \ref{sub:information_vs_memorization}) to the extent that it induces scaling laws for these quantities that extrapolate well to new parameters (Section \ref{sub:extrapolation}).
\end{enumerate}
Together, these results establish a quantitative framework for information-theoretically principled predictable scaling of memorization, generalization, and sample quality in diffusion models.

\begin{figure}[t]
    \centering
    \includegraphics[width=0.85\linewidth]{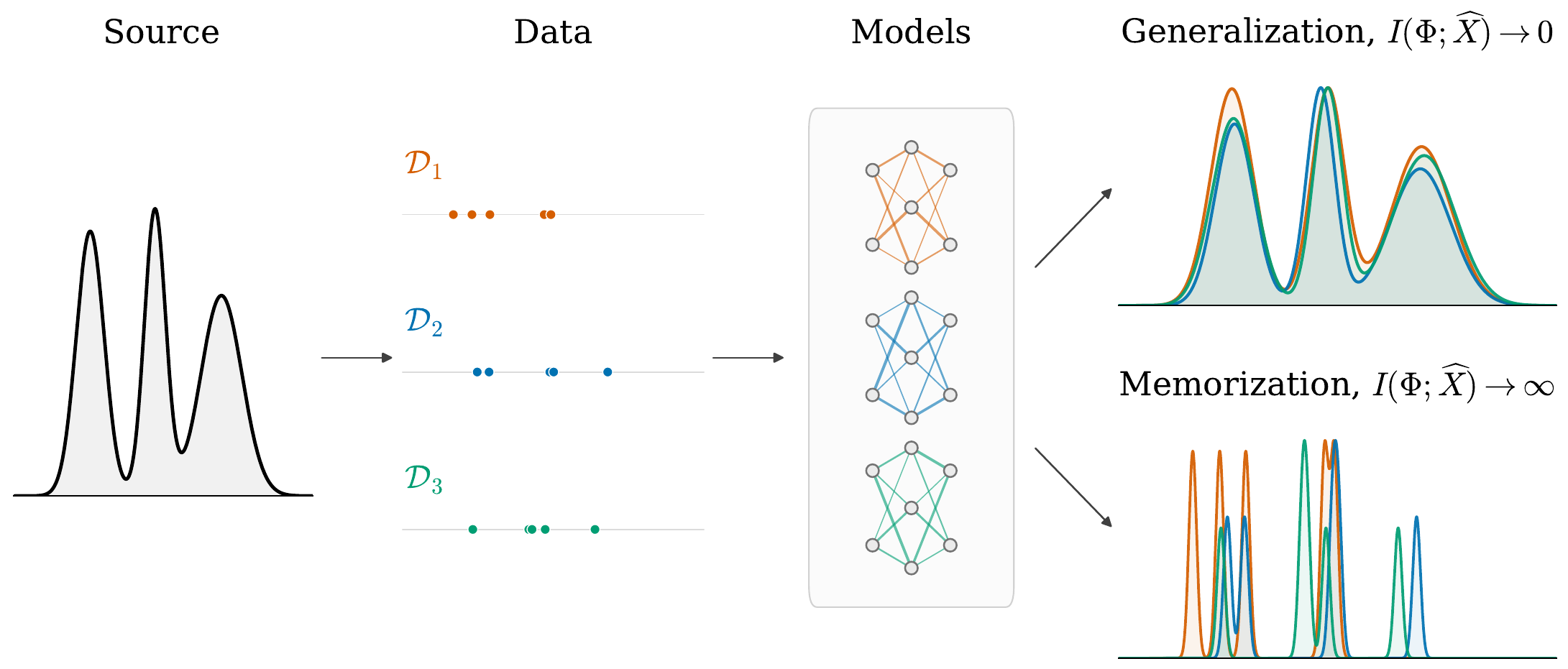}
    \caption{\small \textbf{Model information distinguishes
        generalization from memorization.} A source distribution
        $p_{\phi_\star}$ generates independent finite datasets $\mathcal
        D_i$, which are used to train models $\Phi_i$. In the generalization regime, the
        trained models have nearly
        indistinguishable output distributions, so a sample
        $\widehat X$ carries little information about which model (or
        training dataset)
        produced it. In the memorization regime, each trained model's outputs
        concentrate around its own training data, so generated
    samples reveal the training dataset.}
    \label{fig:model_information_schematic}
\end{figure}

\section{Generative Diffusion Models}
Diffusion models learn to generate samples by learning to predict the
clean sample from a noisy one. To sample, they invert a stochastic
process that gradually transforms
the target distribution $\phi_{\star}$ into a simple distribution,
such as an isotropic Gaussian distribution. The process is usually
represented by a linear SDE driven by an independent Wiener process $W_t$:
\begin{equation}\label{eq:forward_sde}
    dX_t = f(t)\,X_t\,dt + g(t)\,dW_t, \qquad X_0 \sim \phi_{\star}.
\end{equation}
The marginals of the forward process \eqref{eq:forward_sde} can be
written in closed form as \citep{sarkka2019applied}
\begin{align}\label{eq:forward_solution}
    &X_t \stackrel{d}{=} \alpha_t X_0 + \sigma_t \epsilon,
    \qquad X_{0} \sim \phi_{\star}, \qquad
    \epsilon \sim \mathcal{N}(0,I),
\end{align}
where $\alpha_t$ and $\sigma_t$ are non-negative functions that can
be derived from $f(t)$ and $g(t)$. The time-reversed stochastic
process of \eqref{eq:forward_sde} is given by
\citep{anderson1982reverse}:
\begin{equation}\label{eq:reverse_sde}
    dX_t = (f(t)\,X_t - g^2(t)\,\nabla \log
    p_t^{\phi_{\star}}(X_t))\, dt
    + g(t)\,d\bar{W}_t,
\end{equation}
where $\bar{W}_t$ is a Wiener process in reverse time, and the
dynamics are solved from a large terminal time $T$ back to $0$. Here,
$p_t^{\phi_{\star}}$ is the density of the marginal distribution of
$X_t$, and $\nabla \log p_t^{\phi_{\star}}$ is the corresponding
score. The notation emphasizes that both
quantities depend on the
data source $\phi_{\star}$. Since this score is not directly
available to us, we learn it from a
dataset $\mathcal{D} =
\{x^1, \dots,
x^N\} \subseteq \mathbb{R}^d$ sampled from
$\phi_{\star}$. Once we learn a score estimator $\hat{s}_t$ for each
time $t \in [0,
T]$, we may run the reverse SDE
\eqref{eq:reverse_sde} with the learned score $\hat{s}$ to sample
from a learned model $\hat{\phi} \approx \phi_{\star}$.

\section{Model Information}
\label{sec:model_information}

For a ground truth source $\phi_\star$, we consider a probability
distribution over learned models $\Pi$, whose randomness
is induced by the selection of the dataset
$\mathcal{D}$.\footnote{and any randomness
    in the training algorithm, but empirically this seems to matter little
\citep{bertrand2025closed}.} We
write $\Phi \sim \Pi$ to denote this random learned model. In
this context, memorization and generalization can be succinctly described as:
\begin{center}
    \textit{Memorization:} The trained model
    $\Phi$ depends \textit{extremely strongly} on the specific dataset
    $\mathcal{D}$. \\ \vspace{0.5em}
    \textit{Generalization:} The trained model
    $\Phi$ depends \textit{extremely weakly} on the
    specific dataset $\mathcal{D}$.
\end{center}
To define this dependence quantitatively, we use an
information-theoretic perspective: given a sample $\hat{X}$ from the
learned model $\Phi$, how much information does it contain about the
dataset $\mathcal{D}$, or equivalently, the realization of
$\Phi$ which generated $\hat{X}$? This allows us to define the
\textit{model information} as the mutual information between $\Phi$
and $\hat{X}$, i.e., $I(\Phi;\hat{X})$. Concretely, define the
\textit{model information curve} $I_{\model}(t)$ as the mutual
information of $\Phi$ and $\hat{X}_t$ (s.t.~$I(\Phi;\hat{X}) = I_{\model}(0)$):
\begin{equation}\label{eq:model_information}
    I_\model(t) := I(\Phi;\hat{X}_t), \qquad \hat{X} \sim \Phi,
    \qquad \hat{X}_t
    \sim \mathcal{N}(\alpha_t \hat{X}, \sigma_t^2 I),
\end{equation}
i.e., $\hat{X}_t \sim p_t^{\Phi}$. Note that e.g., the neural network
architecture and
optimization procedure used to estimate the score, the sampling
algorithm, etc.,  all enter through the
learned model distribution $\Pi$. 
Schematic and graphical representations of the model information are shown in Figures \ref{fig:model_information_schematic} and \ref{fig:model_information_variables}.

\begin{figure}[t!]
    \centering
    \resizebox{0.72\linewidth}{!}{%
    \begin{tikzpicture}[
            node distance=1.0cm and 1.25cm,
            var/.style={draw, rounded corners, align=center, inner sep=4pt},
            obs/.style={draw, rounded corners, align=center, inner sep=4pt,
            fill=gray!10},
            arrow/.style={-Latex, thick},
            indirect/.style={-Latex, thick, densely dotted}
        ]
        \node[var] (source) {\(\phi_\star\)\\data source};
        \node[obs, below left=of source] (data)
        {\(\mathcal D=\{X^i\}_{i=1}^N\)\\training data};
        \node[obs, below right=of source, xshift=0.3cm] (pop)
        {\(X\sim\phi_\star\)\\population sample};
        \node[var, below=of data] (model)
        {\(\Phi\sim\Pi\)\\learned model};
        \node[obs, right=of model] (sample)
        {\(\hat{X}\sim\Phi\)\\model sample};
        \node[obs, right=of sample] (npop)
        {\(X_t = \alpha_t X+\sigma_t\epsilon\)\\noisy observation};
        \node[obs, below=of sample] (noised)
        {\(\hat{X}_t=\alpha_t\hat{X}+\sigma_t\epsilon\)\\noisy observation};
        \node[var, right=of noised] (noise)
        {\(\epsilon\sim\mathcal N(0,I)\)\\forward noise};

        \draw[arrow] (source) -- (data);
        \draw[arrow] (source) -- (pop);
        \draw[arrow] (data) -- (model);
        \draw[arrow] (model) -- (sample);
        \draw[arrow] (pop) -- (npop);
        \draw[arrow] (noise) -- (npop);
        \draw[arrow] (sample) -- (noised);
        \draw[arrow] (noise) -- (noised);
        \draw[indirect] (model) -- node[above, font=\small]
        {\(p_t^\Phi\)} (noised);
        \draw[indirect] (source) -- node[below, font=\small]
        {\(p_t^{\phi_\star}\)} (npop);
    \end{tikzpicture}}
    \caption{\small \textbf{Variable dependence for the model
        information.} The data
        source \(\phi_\star\) generates the training dataset
        \(\mathcal D\), whose randomness induces the distribution \(\Pi\) over
        learned models. A model \(\Phi\) trained on \(\mathcal D\) generates
    \(\hat{X}\), and the forward channel produces \(\hat{X}_t\).}
    \label{fig:model_information_variables}
\end{figure}

Mutual informations are in general extremely hard to compute
\citep{mcallester2020formal}.
However, \textit{surprisingly}, in this case we can tractably estimate the
model information by using the following score-based representation
for its time derivative.

\begin{proposition}
    \label{prop:model_information_rate_score_based}
    Suppose the learnable models $\phi \in
    \operatorname{supp}(\Pi)$ each have finite second moment. Let
    \begin{equation}
        \bar{p}_t(x) = \mathbb{E}_{\Phi \sim \Pi}[p_t^{\Phi}(x)]
        \label{eq:noisy_marginal_law}
    \end{equation}
    be the marginal density of noised samples drawn from the learned models. Then
    \begin{equation}
        \dot I_{\model}(t)
        =
        -\frac{g(t)^2}{2}
        \mathbb{E}_{\Phi \sim \Pi,~ \hat{X}_t \sim p_t^{\Phi}}
        [
            \|
            \nabla \log p_t^{\Phi}(\hat{X}_t)
            -
            \nabla \log \bar{p}_t(\hat{X}_t)
            \|^2
        ].
    \end{equation}
\end{proposition}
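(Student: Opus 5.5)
We write the model information as an expected relative entropy and differentiate it along the Fokker--Planck flow of the forward SDE \eqref{eq:forward_sde}; this is the de Bruijn-type identity for KL divergence along a common diffusion. Since $\Phi\to\hat X\to\hat X_t$ and $\hat X_t\mid\Phi\sim p_t^{\Phi}$ with marginal $\bar p_t$ from \eqref{eq:noisy_marginal_law}, we have
\begin{equation*}
  I_\model(t)=\mathbb{E}_{\Phi\sim\Pi}\bigl[\mathrm{KL}(p_t^{\Phi}\,\|\,\bar p_t)\bigr]
  =\mathbb{E}_{\Phi\sim\Pi}\int p_t^{\Phi}(x)\log\frac{p_t^{\Phi}(x)}{\bar p_t(x)}\,dx .
\end{equation*}
The key observation is that, for every fixed $\phi$, the density $p_t^{\phi}$ is the law at time $t$ of the forward SDE started at $\phi$, so it satisfies
\begin{equation*}
  \partial_t p_t^{\phi}=-\nabla\cdot\bigl(f(t)\,x\,p_t^{\phi}\bigr)+\tfrac{g(t)^2}{2}\Delta p_t^{\phi}.
\end{equation*}
Because this PDE is linear, averaging over $\Pi$ shows that $\bar p_t$ satisfies the same equation. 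For $t>0$ all these densities are smooth and strictly positive, being Gaussian convolutions.

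Fix $\phi$ and set $p=p_t^\phi$, $q=\bar p_t$. We differentiate $\int p\log(p/q)$ in $t$, which gives
\begin{equation*}
  \frac{d}{dt}\int p\log\frac{p}{q}=\int \partial_t p\,\log\frac{p}{q}+\int\partial_t p-\int \frac{p}{q}\,\partial_t q .
\end{equation*}
The middle term vanishes by mass conservation. Write $\partial_t p=-\nabla\cdot(J_p)$ with $J_p=p\,(f x-\tfrac{g^2}{2}\nabla\log p)$, and similarly $J_q$. Integrating by parts turns the remaining two terms into
\begin{equation*}
  \int J_p\cdot\nabla\log\frac{p}{q}-\int J_q\cdot\nabla\frac{p}{q}.
\end{equation*}
Using $\nabla(p/q)=(p/q)\nabla\log(p/q)$, this becomes
\begin{equation*}
  \int p\,\Bigl[(fx-\tfrac{g^2}{2}\nabla\log p)-(fx-\tfrac{g^2}{2}\nabla\log q)\Bigr]\cdot\nabla\log\frac{p}{q}
  =-\frac{g^2}{2}\int p\,\|\nabla\log p-\nabla\log q\|^2 .
\end{equation*}
The drift terms $f(t)x$ cancel exactly because both densities evolve under the same linear drift; this is the step where the shared forward channel matters. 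Taking the expectation over $\Phi\sim\Pi$, and exchanging $d/dt$ with $\mathbb{E}_\Pi$, yields the claimed formula.

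The main obstacle is justifying the analytic steps: differentiating under the integral and under $\mathbb{E}_\Pi$, and dropping boundary terms in the integrations by parts. I would handle these with the finite second moment assumption. Write $p_t^\phi=\phi_{\alpha_t}*\mathcal N(0,\sigma_t^2I)$, where $\phi_{\alpha_t}$ denotes $\phi$ rescaled by $\alpha_t$. Then Tweedie's formula gives
\begin{equation*}
  \nabla\log p_t^\phi(x)=\frac{\alpha_t\,\mathbb{E}[\hat X\mid \hat X_t=x]-x}{\sigma_t^2},
\end{equation*}
so the score has second moment under $p_t^\phi$ bounded by $C(1+\mathbb{E}_\phi\|\hat X\|^2)/\sigma_t^2$. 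The same holds for $\bar p_t$, again by Tweedie applied to the mixture. The log-ratio grows at most quadratically in $x$, while the densities have Gaussian tails. Together these give the dominated-convergence and integrability bounds needed on compact subintervals of $t>0$, and the boundary terms vanish after truncating with cutoff functions. Because the moment bounds must hold uniformly enough to swap with $\mathbb{E}_\Pi$, I would state them in terms of $\mathbb{E}_\Pi\mathbb{E}_\Phi\|\hat X\|^2$, implicitly assuming this quantity is finite.
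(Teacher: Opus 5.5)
Your proof is correct and follows the paper's argument. Like the paper, you write $I_\model(t)=\mathbb{E}_\Pi[D_{\mathrm{KL}}(p_t^\Phi\,\|\,\bar p_t)]$, use linearity to see that $\bar p_t$ solves the same Fokker--Planck equation, and integrate by parts so the drift cancels and only the Fisher-divergence term survives; the one difference is that you integrate by parts once in flux form, while the paper moves the adjoint generator onto $\log r$ and $r$ (with $r=p_t^\phi/\bar p_t$) and uses the pointwise identity $r\Delta\log r-\Delta r=-r\|\nabla\log r\|^2$.

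Your regularity sketch goes further than the paper, which simply assumes the needed integrability, but one claim needs correcting: with only finite second moments, $p_t^\phi$ need not have Gaussian tails. What actually gives the integrability is the quadratic bound on the log-ratio combined with the second-moment assumption, together with your Tweedie bound on the scores.
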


\begin{proof}
    See Appendix \ref{sec:model_information_rate_score_space}.
\end{proof}

Note that the score $\nabla \log \bar{p}_t$ is the \textit{posterior}
mean of the
individual model scores:
\begin{equation}\label{eq:mixture_score_expectation}
    \nabla \log \bar{p}_t(x) = \mathbb{E}_{\Phi \sim \Pi \mid
    \{\hat{X}_t = x\}}[
    \nabla \log p_t^{\Phi}(x)] =
    \int\nabla \log p_{t}^{\phi}(x)~\mathrm{d}\mathbb{P}_{\Pi \mid
    \hat{X}_t}(\phi \mid x).
\end{equation}
This identity gives three ways to estimate the mixture score: we can
fit it directly by denoising samples generated by trained models,
learn the posterior probabilities with a classifier, or compute these
probabilities directly from the trained models via integration.
In Appendix \ref{sub:conditional_score_expectation_estimator} we discuss the
three estimators and the estimation error from using a finite and
small number of trained models. In practice, we use the third method,
since all three estimators yield qualitatively similar results.

When the terminal distribution is independent of the learned model,
we can recover the model information by integrating this rate. In
particular, if $\alpha_T = 0$, such as in DDPM
\citep{ho2020denoisingdiffusionprobabilisticmodels,song2021scorebasedgenerativemodelingstochastic},
DDIM
\citep{song2022denoisingdiffusionimplicitmodels}, and flow
matching \citep{lipman2022flow}, then $\hat{X}_T = \sigma_T \epsilon$
is independent of $\Phi$, and
hence $I_{\model}(T) = I(\Phi;\hat{X}_T) = 0$. Therefore, we may compute
\begin{equation}\label{eq:model_information_integration}
    I_{\model}(t)
    =
    \int_T^t \dot I_{\model}(s)\,ds
    =
    -\int_t^T \dot I_{\model}(s)\,ds,
    \qquad
    \text{in particular}
    \qquad
    I(\Phi;\hat{X})
    =
    -\int_0^T \dot I_{\model}(s)\,ds,
\end{equation}
where \(\hat{X} = \hat{X}_0\) is a clean sample from \(\Phi\). When
\(\alpha_T > 0\) but \(\sigma_T/\alpha_T\) is large, such as in EDM
\citep{karras2022elucidating}, the same equality approximately holds.
The dependency graph of the random variables required for the definition of model information is in
Figure~\ref{fig:model_information_variables}.

\subsection{Properties of Model Information}
\label{sub:properties_model_information}

We present several structural properties of model information that
clarify what the quantity measures beyond its definition. Proofs of
the results stated in this section can be found in
Appendix~\ref{app:properties_model_information}.


\begin{proposition}
    \label{thm:model_information_bias_variance}
    For every fixed $x$ such that the relevant conditional
    expectations are well-defined,
    \begin{align}
        &\mathbb E_{\Phi \sim \Pi \mid \{\hat
        X_t=x\}}
        [
            \|\nabla \log p_t^{\Phi}(x)-\nabla \log p_t^{\phi_{\star}}(x)\|^2
        ] \\
        \label{eq:score_bias_variance_decomposition}
        &=
        \|\nabla \log \bar{p}_t(x)-\nabla \log p_t^{\phi_{\star}}(x)\|^2
        +
        \mathbb E_{\Phi \sim \Pi \mid \{\hat{X}_t = x\}}
        [
            \|\nabla \log p_t^{\Phi}(x)-\nabla \log \bar{p}_t(x)\|^2
        ].
    \end{align}
\end{proposition}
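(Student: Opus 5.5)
This is the standard bias--variance (Pythagorean) decomposition in $L^2$ of the posterior $\mathbb{P}_{\Pi \mid \hat X_t = x}$. The key input is the posterior-mean identity \eqref{eq:mixture_score_expectation}. For fixed $x$, abbreviate
\[
S(\Phi) := \nabla \log p_t^{\Phi}(x), \qquad \bar s := \nabla \log \bar p_t(x), \qquad s_\star := \nabla \log p_t^{\phi_\star}(x).
\]
Here $\bar s$ and $s_\star$ are deterministic vectors once $x$ is fixed, and only $S(\Phi)$ is random under the conditional law of $\Phi$ given $\hat X_t = x$.

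First I would verify \eqref{eq:mixture_score_expectation} directly, so the proof is self-contained. Differentiating $\bar p_t(x) = \mathbb{E}_{\Phi}[p_t^{\Phi}(x)]$ under the integral sign gives
\[
\nabla \bar p_t(x) = \mathbb{E}_{\Phi}\bigl[p_t^{\Phi}(x)\, \nabla \log p_t^{\Phi}(x)\bigr].
\]
Dividing by $\bar p_t(x)$ and noting that the posterior density of $\Phi$ given $\hat X_t = x$ is $p_t^{\Phi}(x)/\bar p_t(x)$ with respect to $\Pi$, we obtain $\bar s = \mathbb{E}[S(\Phi) \mid \hat X_t = x]$. The interchange of derivative and expectation is the only analytic point. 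Since each $p_t^{\phi}$ is a Gaussian convolution, it is smooth with $\nabla p_t^{\phi}$ locally dominated, and the finite second moment assumption suffices. In any case the statement already restricts to $x$ where the conditional expectations are well-defined, so I would simply invoke that hypothesis.

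Next, write $S(\Phi) - s_\star = (S(\Phi) - \bar s) + (\bar s - s_\star)$ and expand the squared norm:
\[
\|S - s_\star\|^2 = \|S - \bar s\|^2 + \|\bar s - s_\star\|^2 + 2\langle S - \bar s,\ \bar s - s_\star\rangle.
\]
Now take the conditional expectation given $\hat X_t = x$. The middle term is constant in $\Phi$. The cross term equals $2\langle \mathbb{E}[S \mid \hat X_t = x] - \bar s,\ \bar s - s_\star\rangle$, which vanishes by the identity above. What remains is exactly \eqref{eq:score_bias_variance_decomposition}.

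The only real obstacle is justifying the posterior-mean identity, namely differentiation under the integral and the requirement $\bar p_t(x) > 0$. The latter holds automatically for $\sigma_t > 0$ because of the Gaussian smoothing. Integrability of $\|S\|^2$ under the posterior, which is needed for the expansion, is exactly the ``well-defined'' hypothesis of the statement.
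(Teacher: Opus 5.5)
Your proposal is correct and matches the paper's proof: the paper first establishes the posterior-mean identity (Proposition~\ref{prop:posterior_score_identity}) by differentiating $\bar p_t$ under the integral and applying Bayes' rule, then expands the squared norm around $\nabla\log\bar p_t(x)$ and uses that identity to make the cross term vanish. One small refinement: for $\sigma_t>0$ the gradient of the Gaussian kernel is bounded uniformly in its center, so the interchange of derivative and integral needs no second-moment assumption at all.
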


We may connect the two terms in
\eqref{eq:score_bias_variance_decomposition} and a conditional bias--variance
decomposition, as discussed by
\citet{kadkhodaie2023generalization}.
The first term $\|\nabla \log \bar
p_t(x)-\nabla \log p_t^{\phi_{\star}}(x)\|^2$ measures the systematic
discrepancy between
the ensemble score and the population score, and therefore
corresponds to a bias term.
The second term $\mathbb E_{\Phi \sim \Pi \mid
\{\hat{X}_t = x\}}
[
    \|\nabla \log p_t^{\Phi}(x)-\nabla \log \bar{p}_t(x)\|^2
]$ measures the dispersion of trained model scores around their
posterior average, and therefore corresponds to a variance term.

Furthermore, the model information tells us how much an observable
can fluctuate around the mean. Let $P_t^\phi$ and $\bar P_t$ denote the probability
measures on $\mathcal X$ with densities $p_t^\phi$ and $\bar p_t$ respectively.

\begin{proposition}
    \label{thm:event_stability_bound}
    Let $Z$ be any random variable such that $Z\to\Phi\to\hat X_t$ forms a Markov chain.
    Then, for any measurable assignment $(Z,\Phi)\mapsto
    A_{Z,\Phi}\subseteq\mathcal X$,
    \begin{equation}
        \mathbb E_{Z,\Phi}\left[\left|P_t^\Phi(A_{Z,\Phi})-\bar
        P_t(A_{Z,\Phi})\right|\right]
        \leq
        \sqrt{1-\exp\left(-I_m(t)\right)}.
        \label{eq:event_stability_bound}
    \end{equation}
\end{proposition}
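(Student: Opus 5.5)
Conditionally on $(Z,\Phi)$, the quantity $|P_t^\Phi(A)-\bar P_t(A)|$ is bounded by the total variation distance $\mathrm{TV}(P_t^\Phi,\bar P_t)$. The set $A_{Z,\Phi}$ depends on $Z$, but only through a single event, and $\bar P_t$ is a fixed measure that depends on neither $Z$ nor $\Phi$. So pointwise in $(z,\phi)$ we have $|P_t^\phi(A_{z,\phi})-\bar P_t(A_{z,\phi})|\le \sup_A|P_t^\phi(A)-\bar P_t(A)| = \mathrm{TV}(P_t^\phi,\bar P_t)$. Taking expectations over $(Z,\Phi)$ gives the bound $\mathbb E_\Phi[\mathrm{TV}(P_t^\Phi,\bar P_t)]$, and $Z$ drops out entirely. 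The Markov condition $Z\to\Phi\to\hat X_t$ is only needed to make sure that $P_t^\Phi$ is the correct conditional law of $\hat X_t$ given $(Z,\Phi)$. The pointwise bound itself does not use it.

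Next, I will bound each TV term by the KL divergence using the Bretagnolle--Huber inequality, $\mathrm{TV}(P,Q)\le\sqrt{1-\exp(-D_{\mathrm{KL}}(P\|Q))}$. This gives
\[
\mathbb E_\Phi[\mathrm{TV}(P_t^\Phi,\bar P_t)] \le \mathbb E_\Phi\Big[\sqrt{1-\exp(-D_{\mathrm{KL}}(P_t^\Phi\|\bar P_t))}\Big].
\]
The function $h(u)=\sqrt{1-e^{-u}}$ is concave on $[0,\infty)$: it is the composition of the concave, nondecreasing map $v\mapsto\sqrt v$ with the concave map $u\mapsto 1-e^{-u}$. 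Jensen's inequality therefore gives $\mathbb E[h(D)]\le h(\mathbb E[D])$. By the standard identity for mutual information, $\mathbb E_\Phi[D_{\mathrm{KL}}(P_t^\Phi\|\bar P_t)] = I(\Phi;\hat X_t)=I_\model(t)$, because $\bar p_t$ in \eqref{eq:noisy_marginal_law} is exactly the marginal law of $\hat X_t$. Combining these steps yields \eqref{eq:event_stability_bound}.

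The main points to check are technical. First, $(z,\phi)\mapsto P_t^\phi(A_{z,\phi})$ must be measurable so that the expectations are defined; this follows from the measurability assumption on the assignment together with joint measurability of the Gaussian-smoothed density $p_t^\phi(x)$. Second, if $I_\model(t)=\infty$ the bound is trivially $1$. I would also cite (or quickly re-derive) the Bretagnolle--Huber inequality, since it is the one nonelementary ingredient. A quick derivation goes through $1-\mathrm{TV}^2\ge(\int\sqrt{pq})^2\ge \exp(-D_{\mathrm{KL}})$, where the first step is Cauchy--Schwarz applied to $\int\min(p,q)\int\max(p,q)$ and the second is Jensen applied to $\log\int\sqrt{pq}$.
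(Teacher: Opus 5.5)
Your proposal is correct and matches the paper's proof: a pointwise total-variation bound, then Bretagnolle--Huber, then Jensen using the concavity of $u\mapsto\sqrt{1-e^{-u}}$, and finally the identity $\mathbb E_\Phi[D_{\mathrm{KL}}(p_t^\Phi\|\bar p_t)]=I_\model(t)$. The only difference is that the paper first proves the bound for general $[a,b]$-valued observables and then specializes to indicator functions.
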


Alternatively, we can substitute the right-hand side of the inequality
with $\sqrt{I_m(t)/2}$.

A particularly relevant case is dataset-specific memorization. Let
$\mathcal D$ be the random training dataset jointly inducing the
learned model $\Phi$, and let $A_{\mathrm{mem}}(\mathcal
D)\subseteq\mathcal X$ be any measurable set of samples declared
memorized relative to $\mathcal D$. Since generation from the trained
model satisfies $\mathcal D\to\Phi\to\hat X_t$,
Proposition~\ref{thm:event_stability_bound} applies with $Z=\mathcal
D$ and $A_{\mathcal D,\Phi}=A_{\mathrm{mem}}(\mathcal D)$, yielding
\begin{equation}
    \mathbb E_{\mathcal
    D,\Phi}\left[P_t^\Phi\left(A_{\mathrm{mem}}(\mathcal D)\right)\right]
    \leq
    \mathbb E_{\mathcal D}\left[\bar
    P_t\left(A_{\mathrm{mem}}(\mathcal D)\right)\right]
    +
    \sqrt{1-\exp\left(-I_m(t)\right)}.
    \label{eq:memorization_ratio_bound}
\end{equation}

\section{Scaling Laws for Model Information, Memorization, and Sample Quality}
\label{sec:model_information_scaling}


We now study how model information depends on the model size $P$ and the
training-set size $N$, and how it relates to memorization and sample quality.
We train unconditional EDM diffusion models \citep{karras2022elucidating}
with DDPM++ backbones \citep{song2021scorebasedgenerativemodelingstochastic} (a UNet with, say, $L$ layers in each of the encoder and decoder)
on ImageNet \citep{deng2009imagenet}. We report the following findings:
\begin{enumerate}
    \item Model information follows a scaling law whose form we derive from
        the Marchenko--Pastur law for the spectrum of an empirical covariance
        (Section~\ref{sub:information_scaling_law}). We can measure it from a
        few trained models per \((P, N)\) pair.
    \item We show that model information is strongly correlated with memorization when $N$ is small, and with FID when $N$ is large
        (Section~\ref{sub:information_vs_memorization}).
    \item From these two findings, we extrapolate model information,
        memorization, and FID to a new \(P\) and \(N\) and show that these predictions are accurate (Section~\ref{sub:extrapolation}).
\end{enumerate}
In Appendix~\ref{sec:experiments} we repeat the analysis on CIFAR-10,
Fashion-MNIST, and FFHQ.

\subsection{A Scaling Law for Model Information}
\label{sub:information_scaling_law}

\paragraph{Setup.} We train DDPM++ models of four sizes, from $6.7\times10^6$ to
$8.3\times10^7$ parameters (i.e., encoder/decoder depth $L \in [2, 5]$), on training sets of twelve sizes from $N=10$ to
$N=100{,}000$, drawn from the $N_{\mathrm{full}}=1{,}271{,}167$ images of
ImageNet. For each $(P,N)$ we train four models on independently drawn
training sets. We use two of them for the mixture score and evaluate the rate
on the other two; later in this section we show that this is surprisingly enough for a good estimate. When $N$ is small enough that the models memorize their
training sets, $I_\model(0)=\infty$, so we integrate the estimated rate over
$[t_{\min},T]$ for a small $t_{\min}>0$ and write $\hat I_\model(P,N)$ for
the result. The estimator is described and justified in
Appendix~\ref{sec:model_information_computations}; all other experimental setup details are in
Appendix~\ref{sub:appendix_experimental_protocol}.

\paragraph{The linear-Gaussian model.} We derive the form of the scaling law from a
model in which model information can be computed in closed form
(Appendix~\ref{app:linear_gaussian}). The data are \(\mathcal N(0,I_d)\), the
learned model is the empirical covariance of \(N\) samples, and model
information is evaluated after adding Gaussian noise of variance \(\rho\). As
\(N,d\to\infty\) with \(N/d\to\eta\), the eigenvalues of the empirical
covariance follow the Marchenko--Pastur law, and for small \(\rho\) model
information per dimension converges to
(Propositions~\ref{prop:linear_gaussian_asymptotic}
and~\ref{prop:linear_gaussian_small_rho})
\begin{scalinglaw}
\begin{equation}
    \label{eq:linear_gaussian_information_profile}
    \mathcal I_\ell(\eta)
    =
    \begin{cases}
        \ell(1-\eta)
        +\tfrac12\big[\eta\log\eta+\eta+(1-\eta)\log(1-\eta)\big],
        & \eta<1,\\[2pt]
        \tfrac12\big[1+(\eta-1)\log(1-\eta^{-1})\big],
        & \eta\ge 1,
    \end{cases}
\end{equation}
\end{scalinglaw}
where \(\ell=\tfrac12\log(1/\rho)\). For \(N<d\) the empirical covariance is
zero on \(d-N\) directions, so the model memorizes its training set, and each
of these directions adds \(\ell\) nats to model information. For \(N>d\) the
covariance has full rank, the model generalizes, and model information decays
as \(d^2/(4N)\).

\paragraph{From the linear-Gaussian model to the scaling law.} In the linear-Gaussian
model, all \(d\) directions stop memorizing at the same training-set size
\(N=d\), so the transition from memorization to generalization is sharp. In
our measurements the transition is gradual, extending over more than a
decade of \(N\) (Figure~\ref{fig:imagenet_scaling}). We reconcile
these observations with the model by positing that different parts of
the data are memorized more easily than others. To model this, we
split the coordinates into independent
groups of different sizes. Each group is modeled using the
linear-Gaussian model, so a group of size \(d_z\) stops memorizing at
\(N=d_z\), and model information is the sum over the groups. We want
the distribution of group sizes to have only two parameters: one for
the location of the transition and one for its width. The transition
is spread over a range of \(\log N\), so we take \(\log d_z\) to be
normal with mean \(\log N_{\mathrm c}(P)\) and standard deviation
\(\sigma\). This is the distribution of largest entropy for a given
mean and variance of \(\log d_z\). Then \(N_{\mathrm c}(P)\) is the
training-set size at which a typical group stops memorizing; we
call it the critical training-set size. A larger model can memorize a
larger training set, so \(N_{\mathrm c}(P)\) increases with \(P\). We
take \(N_{\mathrm c}(P)\) and the number of groups to be power laws in \(P\).
Finally, the estimate is positive even when all models train on the same
data, because of the variation between training seeds and the bias of the
estimator (Appendix~\ref{sub:appendix_scaling_law_bias}). We add a term
\(F(P)\) for the value of the estimate when all models train on the same data. With \(\widetilde P=P/P_{\mathrm{ref}}\) for a fixed unit \(P_{\mathrm{ref}}\), the such-derived scaling law is
\begin{scalinglaw}
\vspace{-1em}
\begin{align}
    \label{eq:linear_gaussian_information_law}
    \hat I_\model(P,N)
    &\approx
    \underbrace{c_0+c_1\widetilde P^{-\beta}}_{F(P)}
    +\underbrace{k\widetilde P^{-b}\,
        \E_{z\sim\mathcal N(0,1)}\Big[d_z\,
            \mathcal
    I_\ell\Big(\frac{N}{d_z}\Big)\Big]}_{E(P,N)},
    \\
    d_z&=N_{\mathrm c}(P)\,e^{\sigma z},
    \qquad
    N_{\mathrm c}(P)=N_{\mathrm c}(P_{\mathrm{ref}})\widetilde P^{a}.
    \nonumber
\end{align}
\end{scalinglaw}
For \(N\ll N_{\mathrm c}(P)\), \(E(P,N)\) is approximately constant,
and the scaling law attains its maximum over \(N\),
\(\hat I_{\max}(P):=\sup_{N>0}\hat I_\model(P,N)=F(P)+k\widetilde
P^{-b}\ell N_{\mathrm c}(P)e^{\sigma^2/2}\). For \(N\gg N_{\mathrm
c}(P)\), \(E(P,N)\) decays as \(N^{-1}\); we call the range
\(N\ge e^{2\sigma}N_{\mathrm c}(P)\), which contains all but about \(2\%\) of
the group sizes \(d_z\), the \emph{\(N^{-1}\) regime}. We
fit the nine parameters
\(c_0,c_1,\beta,k,b,N_{\mathrm c}(P_{\mathrm{ref}}),a,\sigma,\ell\) by least
squares in log-information on all \((P,N)\) pairs
(Appendix~\ref{sub:appendix_scaling_law_fit}).\footnote{\citet{buchanan2025edge}
    estimate a critical dataset size
that grows linearly in \(P\) in a toy setup.}

\begin{figure}[t!]
    \centering
    \input{figures/pgfplots/edm_styles.tex}

\noindent\makebox[\linewidth][c]{%
    \begin{tikzpicture}
        \begin{axis}[
                name=main,
                width=0.56\linewidth,
                height=0.33\linewidth,
                xmode=log,
                ymode=log,
                xmin=8, xmax=2e5, ymin=150, ymax=1.5e5,
                xlabel={$N$},
                ylabel={model information (nats)},
                grid=major,
                grid style={line width=0.1pt, draw=gray!24},
                tick label style={font=\scriptsize},
                label style={font=\scriptsize},
                legend style={font=\scriptsize, draw=none, fill=white,
                at={(0.03,0.04)}, anchor=south west, legend columns=2},
                axis line style={black!65},
                tick style={black!65}
            ]
            \edmRawFit{figures/pgfplots/data/imagenet64_lawfit_l2.csv}{edmDepthTwo}
            \edmRawPoints{figures/pgfplots/data/imagenet64_raw_l2.csv}{edmDepthTwo}{square*}{$L=2$}
            \edmRawFit{figures/pgfplots/data/imagenet64_lawfit_l3.csv}{edmDepthThree}
            \edmRawPoints{figures/pgfplots/data/imagenet64_raw_l3.csv}{edmDepthThree}{triangle*}{$L=3$}
            \edmRawFit{figures/pgfplots/data/imagenet64_lawfit_l4.csv}{edmDepthFour}
            \edmRawPoints{figures/pgfplots/data/imagenet64_raw_l4.csv}{edmDepthFour}{diamond*}{$L=4$}
            \edmRawFit{figures/pgfplots/data/imagenet64_lawfit_l5.csv}{edmDepthFive}
            \edmRawPoints{figures/pgfplots/data/imagenet64_raw_l5.csv}{edmDepthFive}{pentagon*}{$L=5$}
            \draw[black!55, line width=0.5pt]
            (axis cs:4000,200) rectangle (axis cs:1.5e5,1300);
            \coordinate (zoomtop) at (axis cs:1.5e5,1300);
            \coordinate (zoombot) at (axis cs:1.5e5,200);
        \end{axis}
        \begin{axis}[
                name=zoom,
                at={(main.south east)}, anchor=south west,
                xshift=0.08\linewidth,
                width=0.37\linewidth,
                height=0.33\linewidth,
                xmode=log, ymode=log,
                xmin=4000, xmax=1.5e5, ymin=200, ymax=1300,
                xtick={1e4,1e5}, xticklabels={$10^4$,$10^5$},
                ytick={300,500,1000}, yticklabels={300,500,1000},
                yticklabel pos=right,
                grid=major,
                grid style={line width=0.1pt, draw=gray!24},
                tick label style={font=\scriptsize},
                axis line style={black!55},
                tick style={black!55}
            ]
            \edmRawFit{figures/pgfplots/data/imagenet64_lawfit_l2.csv}{edmDepthTwo}
            \edmZoomPoints{figures/pgfplots/data/imagenet64_raw_l2.csv}{edmDepthTwo}{square*}
            \edmRawFit{figures/pgfplots/data/imagenet64_lawfit_l3.csv}{edmDepthThree}
            \edmZoomPoints{figures/pgfplots/data/imagenet64_raw_l3.csv}{edmDepthThree}{triangle*}
            \edmRawFit{figures/pgfplots/data/imagenet64_lawfit_l4.csv}{edmDepthFour}
            \edmZoomPoints{figures/pgfplots/data/imagenet64_raw_l4.csv}{edmDepthFour}{diamond*}
            \edmRawFit{figures/pgfplots/data/imagenet64_lawfit_l5.csv}{edmDepthFive}
            \edmZoomPoints{figures/pgfplots/data/imagenet64_raw_l5.csv}{edmDepthFive}{pentagon*}
        \end{axis}
        \draw[black!55, line width=0.4pt, dashed] (zoomtop) --
        (zoom.north west);
        \draw[black!55, line width=0.4pt, dashed] (zoombot) --
        (zoom.south west);
    \end{tikzpicture}%
}
    \vspace{-1.5em}
    \caption{\small \textbf{Model information scaling on ImageNet.}
        Model information as a function of training-set size for four
        model sizes (written in terms of depth \(L\)), with the fit of
        \eqref{eq:linear_gaussian_information_law}. The right panel 
        shows the $N^{-1}$ regime.}
    \label{fig:imagenet_scaling}
\end{figure}

\paragraph{Results.} The fitted scaling law is close to the measurements at
every \((P,N)\) pair (Figure~\ref{fig:imagenet_scaling}). At every model size
we see the three regimes of the linear-Gaussian model: model information is
close to \(\hat I_{\max}(P)\) while the models memorize, it decreases around
\(N_{\mathrm c}(P)\), and in the \(N^{-1}\) regime it decreases to \(F(P)\)
as \(N^{-1}\). For larger models the transition is at larger
\(N\), roughly in proportion to the width of the network.
Importantly, our scaling law's theoretical foundation is crucial: if we replace \(\mathcal I_\ell\) in \eqref{eq:linear_gaussian_information_law}
by a sigmoid in \(\log N\) and add a power of \(N\) whose exponent is a
fitted parameter (i.e., a naive and purely empirical scaling law), the fit is non-trivially worse (Appendix~\ref{sub:appendix_law_comparison}).

\paragraph{Computing model information from only a few trained
models.} Model information is a property of the
distribution of trained models, and we estimate it from a few draws from this
distribution. The estimate changes little with the number of draws.

\begin{wrapfigure}{r}{0.44\linewidth}
    \vspace{-1em}
    \centering
    \begin{minipage}{\linewidth}
        \centering
        \input{figures/pgfplots/edm_styles.tex}

\begin{tikzpicture}
\begin{axis}[
  width=\linewidth, height=0.80\linewidth,
  xmode=log, ymode=log,
  xmin=0.003, xmax=60, ymin=40, ymax=4e4,
  xlabel={$N/N_{\mathrm c}(P)$},
  ylabel={model information (nats)},
    ytick={1e2,1e3,1e4},
  grid=major, grid style={line width=0.1pt, draw=gray!24},
  tick label style={font=\scriptsize}, label style={font=\scriptsize},
  title style={font=\small},
  legend style={font=\scriptsize, draw=none, fill=white,
    at={(0.03,0.03)}, anchor=south west, legend columns=1},
  axis line style={black!65}, tick style={black!65}
]
\edmInfoSeries{figures/pgfplots/data/fashion_mnist_budget_l4_m4.csv}{edmDepthOne}{*}{$4$ models}
\edmInfoSeries{figures/pgfplots/data/fashion_mnist_budget_l4_m6.csv}{edmDepthTwo}{square*}{$6$ models}
\edmInfoSeries{figures/pgfplots/data/fashion_mnist_budget_l4_m8.csv}{edmDepthThree}{triangle*}{$8$ models}
\edmInfoSeries{figures/pgfplots/data/fashion_mnist_budget_l4_m10.csv}{edmDepthFour}{diamond*}{$10$ models}
\end{axis}
\end{tikzpicture}
    \end{minipage}
    \caption{\small \textbf{The estimate depends weakly on the number of
        trained models.} Fashion-MNIST at depth \(L=4\), from four to ten
    trained models per \((P,N)\), has nearly identical model information across different scales (Appendix~\ref{sub:appendix_bank_sensitivity}).}
    \label{fig:fmnist_budget}
    \vspace{-1em}
\end{wrapfigure}
To show this, we trained ten models per \((P, N)\) on Fashion-MNIST
(to save computational resources; see
    Appendix~\ref{sub:appendix_bank_sensitivity} for further details, including
the experimental setup), and we compute the estimate
from four, six, eight, and ten of them, with half for the mixture score and
half for evaluation (Figure~\ref{fig:fmnist_budget}). The four curves are
almost the same. The scaling law fitted to the estimates from four
models is close to the scaling law fitted to the estimates from ten
(Appendix~\ref{sub:appendix_bank_sensitivity}). Thus four trained models per
\((P,N)\) are enough to fit the scaling law, so we use the same number for
all other experiments. A priori, this result should be
\textit{extremely surprising}. Model
informations are mutual informations, which are both computationally
and statistically
intractable to estimate, especially in high dimensions
\citep{mcallester2020formal}. We provide no formal explanation for
the accuracy of these estimations. However, our understanding is that
our score representation (Proposition
\ref{prop:model_information_rate_score_based}) reduces the problem,
therefore pushing the computational and statistical difficulty of
estimating this mutual information, into training a small number of
neural networks. This latter problem is itself computationally and
statistically intractable in the worst case, yet very tractable in
the average case (i.e., all of modern deep learning). This is one
possible reason why this particular mutual information is
surprisingly cheap to estimate.

\subsection{Model Information vs Memorization and Sample Quality}
\label{sub:information_vs_memorization}

\begin{figure}[t!]
    \centering
    \input{figures/pgfplots/imagenet64_regimes.tex}
    \caption{\small \textbf{Memorization occurs below $N=N_{\mathrm c}(P)$,
        while sample quality improves in the $N^{-1}$ regime.} The sample size
        is normalized by the critical training-set size $N_{\mathrm c}(P)$ of
        \eqref{eq:linear_gaussian_information_law}. The dark band,
        $N/N_{\mathrm c}(P)\in[e^{-2\sigma},e^{2\sigma}]$, contains the sizes
        at which most groups stop memorizing; the lighter band is the
        $N^{-1}$ regime. Below, we show random samples from models at depth
        $L=5$ at three training-set sizes $N$. At (d) the model does not copy
        its training images, and its samples are new but blurred. At (e) the
    samples are more realistic, and FID is still decreasing in \(N\).}
    \label{fig:imagenet_regimes}
    \vspace{-1.5em}
\end{figure}

We compare model information with a nearest-neighbor memorization
test \citep{yoon2023diffusion}, in which a sample is memorized if the
ratio between its distances to its first and second nearest neighbors in the
training set is less than \(1/3\), and with the FID
\citep{heusel2017gans,parmar2022aliased}. In
Figure~\ref{fig:imagenet_regimes} we plot both against \(N/N_{\mathrm
c}(P)\). We observe that memorization stops near \(N_{\mathrm
c}(P)\). At every model size, the memorization ratio is near one for
small \(N\), is one half near
\(N_{\mathrm c}(P)/4\), and is zero just above \(N_{\mathrm c}(P)\). For small
\(N\) the samples are copies of training images
(Figure~\ref{fig:imagenet_regimes}(c)), and in the \(N^{-1}\) regime they are
new images (Figure~\ref{fig:imagenet_regimes}(e)). FID is not monotone in
\(N\). While the models memorize, FID decreases, since a larger set of
memorized images covers more of the data distribution. Just after
memorization stops, FID increases to a local maximum. At these \(N\) the
models do not copy their training images, the samples are blurred
(Figure~\ref{fig:imagenet_regimes}(d)), and FID is high. In the
linear-Gaussian model this is
the point \(N=d\), where the smallest eigenvalue of the empirical covariance is
zero. In the \(N^{-1}\) regime, FID decreases with \(N\).

\begin{figure}[t!]
    \centering
    \input{figures/pgfplots/edm_styles.tex}

\providecommand{\corrPoints}[5]{%
  \addplot[color=#2, mark=#3, mark size=1.7pt, only marks,
    mark options={solid, fill=#2, draw=#2}]
    table[x=#4,y=#5,col sep=comma] {#1};
}
\providecommand{\corrLine}[2]{%
  \addplot[color=#2, line width=0.8pt, no marks, forget plot]
    table[x=exc,y=fid,col sep=comma] {#1};
}

\noindent\makebox[\linewidth][c]{%
\hspace*{-0.02\linewidth}%
\begin{minipage}[t]{0.465\linewidth}
\centering
\begin{tikzpicture}
\begin{axis}[
width=\linewidth, height=0.80\linewidth,
xmode=log,
xmin=2e-3, xmax=2, ymin=-0.04, ymax=1.04,
xlabel={$\hat I_\model/(\hat I_{\max}-F)$},
ylabel={memorization ratio},
title={(a) memorization},
ytick={0,0.5,1},
grid=major, grid style={line width=0.1pt, draw=gray!24},
tick label style={font=\scriptsize},
label style={font=\scriptsize},
title style={font=\small},
legend style={font=\scriptsize, draw=none, fill=white,
at={(0.03,0.97)}, anchor=north west, legend columns=1},
axis line style={black!65},
tick style={black!65}
]
\addplot[black!55, line width=0.8pt, no marks, dashed]
  table[x=rel,y=mem,col sep=comma] {figures/pgfplots/data/imagenet64_corr_memcurve.csv};
\addlegendentry{logistic fit}
\corrPoints{figures/pgfplots/data/imagenet64_corr_l2.csv}{edmDepthTwo}{square*}{rel}{mem}
\addlegendentry{$L=2$}
\corrPoints{figures/pgfplots/data/imagenet64_corr_l3.csv}{edmDepthThree}{triangle*}{rel}{mem}
\addlegendentry{$L=3$}
\corrPoints{figures/pgfplots/data/imagenet64_corr_l4.csv}{edmDepthFour}{diamond*}{rel}{mem}
\addlegendentry{$L=4$}
\corrPoints{figures/pgfplots/data/imagenet64_corr_l5.csv}{edmDepthFive}{pentagon*}{rel}{mem}
\addlegendentry{$L=5$}
\end{axis}
\end{tikzpicture}
\end{minipage}
\hspace{0.01\linewidth}
\begin{minipage}[t]{0.465\linewidth}
\centering
\begin{tikzpicture}
\begin{axis}[
width=\linewidth, height=0.80\linewidth,
xmode=log, ymode=log,
xmin=6, xmax=3000, ymin=28,
xlabel={$E(P,N)$ (nats)}, ymax=95,
ylabel={FID},
title={(b) FID for $N\ge e^{2\sigma}N_{\mathrm c}(P)$},
ytick={30,40,60,80}, yticklabels={30,40,60,80},
xtick={10,100,1000}, xticklabels={10,100,1000},
grid=major, grid style={line width=0.1pt, draw=gray!24},
tick label style={font=\scriptsize},
label style={font=\scriptsize},
title style={font=\small},
axis line style={black!65},
tick style={black!65}
]
\corrLine{figures/pgfplots/data/imagenet64_corr_fidline_l2.csv}{edmDepthTwo}
\corrPoints{figures/pgfplots/data/imagenet64_corr_regime_l2.csv}{edmDepthTwo}{square*}{exc}{fid}
\corrLine{figures/pgfplots/data/imagenet64_corr_fidline_l3.csv}{edmDepthThree}
\corrPoints{figures/pgfplots/data/imagenet64_corr_regime_l3.csv}{edmDepthThree}{triangle*}{exc}{fid}
\corrLine{figures/pgfplots/data/imagenet64_corr_fidline_l4.csv}{edmDepthFour}
\corrPoints{figures/pgfplots/data/imagenet64_corr_regime_l4.csv}{edmDepthFour}{diamond*}{exc}{fid}
\corrLine{figures/pgfplots/data/imagenet64_corr_fidline_l5.csv}{edmDepthFive}
\corrPoints{figures/pgfplots/data/imagenet64_corr_regime_l5.csv}{edmDepthFive}{pentagon*}{exc}{fid}
\end{axis}
\end{tikzpicture}
\end{minipage}%
\hspace*{0.02\linewidth}%
}
    \vspace{-1.5em}
    \caption{\small \textbf{Model information is correlated with memorization
        and with FID.} \textbf{(a)} The memorization ratio at all $(P,N)$
        pairs against model information divided by
        $\hat I_{\max}(P)-F(P)=\sup_N E(P,N)$, computed from the fitted scaling law
        \eqref{eq:linear_gaussian_information_law}, with a logistic fit.
        \textbf{(b)} FID against $E(P,N)$ of the fitted scaling law in the
        $N^{-1}$ regime. The
        lines are one linear fit of $\log\mathrm{FID}$ on $\log E$ and
    $\log P$, evaluated at each model size.}
    \label{fig:imagenet_correlation}
\end{figure}

Model information is strongly correlated with memorization for small \(N\) and
with FID for large \(N\) (Figure~\ref{fig:imagenet_correlation}). In
particular, all observations lie on a single increasing curve of
memorization ratio versus model information (as a fraction of its
maximum). On the other hand, in the \(N^{-1}\) regime no sample is
memorized, and FID decreases as model information decreases. In this
regime the measured model information is close to \(F(P)\), while FID is
still decreasing. Therefore, we use the term \(E(P,N)\) of the fitted scaling
law \eqref{eq:linear_gaussian_information_law}. At each model size,
\(\log\mathrm{FID}\) is linear in \(\log E\), and a single linear regression
of \(\log\mathrm{FID}\) on \(\log E\) and \(\log P\) has \(R^2=0.95\) across
all of our observations.

\subsection{Predicting Memorization and Sample Quality for a New Model}
\label{sub:extrapolation}

With the scaling law and the two fits of
Figure~\ref{fig:imagenet_correlation}, we can predict memorization and FID
before we train a model: we predict model information with the scaling law,
and then predict memorization and FID from model information. We test this
on a new pair, depth \(L=5\) and \(N=250{,}000\), which is outside the range
of our experiments in both \(P\) and \(N\). We fit the scaling law and both
regressions only on the models at \(L\in\{2,3,4\}\). Thus the critical
training-set size at \(L=5\), and with it the range of \(N\) in which we
predict FID, is also a prediction. We then train four models in the
new setting. In this extrapolation, the empirically estimated model
information is within \(7\%\) of the prediction, no sample is
memorized, as predicted, and the FID is within \(15\%\) of the
prediction. In Figure~\ref{fig:imagenet_extrapolation} we compare the
predictions with the measurements at all values of \(N\) for the new value of
\(P\) corresponding to \(L=5\). We observe that the predicted
memorization ratio is
close to the measured memorization ratio, and likewise for predicted
and measured FID, for all values of \(N\).

\begin{figure}[t!]
    \centering
    \input{figures/pgfplots/edm_styles.tex}

\providecommand{\exContext}[5]{%
  \addplot[color=#2, mark=#3, mark size=1.3pt, only marks, forget plot, opacity=0.45,
    mark options={solid, fill=#2, draw=#2}]
    table[x=n,y=#5,col sep=comma] {#1};
  \addplot[color=#2, line width=0.6pt, no marks, forget plot, opacity=0.45]
    table[x=n,y=#5,col sep=comma] {#4};
}

\providecommand{\exHeld}[3]{%
  \addplot[color=edmSetWarm, line width=1.1pt, no marks, dashed]
    table[x=n,y=#3,col sep=comma] {#1};
  \addplot[color=edmSetWarm, mark=pentagon*, mark size=1.9pt, only marks,
    mark options={solid, fill=white, draw=edmSetWarm, line width=0.7pt}]
    table[x=n,y=#3,col sep=comma] {#2};
}

\providecommand{\exBeyond}[1]{%
  \addplot[color=edmSetWarm, mark=star, mark size=2.6pt, only marks,
    mark options={solid, draw=edmSetWarm, line width=0.8pt}]
    table[x=n,y=#1,col sep=comma] {figures/pgfplots/data/imagenet64_extrap_250k.csv};
}

\noindent\makebox[\linewidth][c]{%
\begin{minipage}[t]{0.325\linewidth}
\centering
\begin{tikzpicture}
\begin{axis}[
  width=\linewidth,
  height=1.0\linewidth,
  xmode=log, xmin=7, xmax=4e5,
  xlabel={$N$}, xtick={1e1,1e3,1e5},
  grid=major,
  grid style={line width=0.1pt, draw=gray!24},
  tick label style={font=\scriptsize},
  label style={font=\scriptsize},
  title style={font=\small},
  axis line style={black!65},
  tick style={black!65},
  ymode=log, ymin=150, ymax=1.5e5,
  ytick={1e3,1e4,1e5},
  ylabel={model information (nats)},
  title={(a) model information}
]
\exContext{figures/pgfplots/data/imagenet64_extrap_obs_l2.csv}{edmDepthTwo}{square*}{figures/pgfplots/data/imagenet64_extrap_law_l2.csv}{info}
\exContext{figures/pgfplots/data/imagenet64_extrap_obs_l3.csv}{edmDepthThree}{triangle*}{figures/pgfplots/data/imagenet64_extrap_law_l3.csv}{info}
\exContext{figures/pgfplots/data/imagenet64_extrap_obs_l4.csv}{edmDepthFour}{diamond*}{figures/pgfplots/data/imagenet64_extrap_law_l4.csv}{info}
\exHeld{figures/pgfplots/data/imagenet64_extrap_law_l5.csv}{figures/pgfplots/data/imagenet64_extrap_obs_l5.csv}{info}
\exBeyond{info}
\end{axis}
\end{tikzpicture}
\end{minipage}\hfill
\begin{minipage}[t]{0.325\linewidth}
\centering
\begin{tikzpicture}
\begin{axis}[
  width=\linewidth,
  height=1.0\linewidth,
  xmode=log, xmin=7, xmax=4e5,
  xlabel={$N$}, xtick={1e1,1e3,1e5},
  grid=major,
  grid style={line width=0.1pt, draw=gray!24},
  tick label style={font=\scriptsize},
  label style={font=\scriptsize},
  title style={font=\small},
  axis line style={black!65},
  tick style={black!65},
  ymin=-0.1, ymax=1.30,
  ytick={0,0.5,1},
  ylabel={memorization ratio},
  title={(b) memorization},
  legend style={
    at={(1,1)},
    anchor=north east,
    font=\tiny,
    draw=none,
    fill=none,
    legend columns=1,
    inner xsep=0.5pt,
    inner ysep=0.5pt,
    row sep=-1pt,
    column sep=1pt,
    /tikz/every even column/.append style={column sep=1pt}
  }
]
\exContext{figures/pgfplots/data/imagenet64_extrap_obs_l2.csv}{edmDepthTwo}{square*}{figures/pgfplots/data/imagenet64_extrap_law_l2.csv}{mem}
\exContext{figures/pgfplots/data/imagenet64_extrap_obs_l3.csv}{edmDepthThree}{triangle*}{figures/pgfplots/data/imagenet64_extrap_law_l3.csv}{mem}
\exContext{figures/pgfplots/data/imagenet64_extrap_obs_l4.csv}{edmDepthFour}{diamond*}{figures/pgfplots/data/imagenet64_extrap_law_l4.csv}{mem}
\exHeld{figures/pgfplots/data/imagenet64_extrap_law_l5.csv}{figures/pgfplots/data/imagenet64_extrap_obs_l5.csv}{mem}
\exBeyond{mem}
\legend{{prediction, $L=5$},{measured, $L=5$},{$N=250{,}000$}}
\end{axis}
\end{tikzpicture}
\end{minipage}
\hfill
\begin{minipage}[t]{0.325\linewidth}
\centering
\begin{tikzpicture}
\begin{axis}[
  width=\linewidth,
  height=1.0\linewidth,
  xmode=log, xmin=7, xmax=4e5,
  xlabel={$N$}, xtick={1e1,1e3,1e5},
  grid=major,
  grid style={line width=0.1pt, draw=gray!24},
  tick label style={font=\scriptsize},
  label style={font=\scriptsize},
  title style={font=\small},
  axis line style={black!65},
  tick style={black!65},
  ymode=log, ymin=25, ymax=300,
  ytick={30,100,300},
  yticklabels={30,100,300},
  ylabel={FID},
  title={(c) FID}
]
\path[fill=black!7] (axis cs:7,25) rectangle (axis cs:3130,300);

\node[
  font=\scriptsize,
  black!60,
  anchor=south west,
  align=left
] at (axis cs:9,27)
  {$N<e^{2\sigma}N_{\mathrm c}(P)$};

\addplot[
  color=edmDepthTwo,
  mark=square*,
  mark size=1.3pt,
  only marks,
  forget plot,
  opacity=0.45,
  mark options={solid, fill=edmDepthTwo}
] table[x=n,y=fid,col sep=comma]
  {figures/pgfplots/data/imagenet64_extrap_obs_l2.csv};

\addplot[
  color=edmDepthThree,
  mark=triangle*,
  mark size=1.3pt,
  only marks,
  forget plot,
  opacity=0.45,
  mark options={solid, fill=edmDepthThree}
] table[x=n,y=fid,col sep=comma]
  {figures/pgfplots/data/imagenet64_extrap_obs_l3.csv};

\addplot[
  color=edmDepthFour,
  mark=diamond*,
  mark size=1.3pt,
  only marks,
  forget plot,
  opacity=0.45,
  mark options={solid, fill=edmDepthFour}
] table[x=n,y=fid,col sep=comma]
  {figures/pgfplots/data/imagenet64_extrap_obs_l4.csv};

\exHeld{figures/pgfplots/data/imagenet64_extrap_fid_l5.csv}{figures/pgfplots/data/imagenet64_extrap_obs_l5.csv}{fid}
\exBeyond{fid}
\end{axis}
\end{tikzpicture}
\end{minipage}%
}
    \vspace{-1.5em}
    \caption{\small \textbf{Predicting model information, memorization, and
        FID at a new model size.} All fits use only the models at depths
        $L\in\{2,3,4\}$ (faint markers and curves). Open markers are the
        measurements at $L=5$, the dashed curves are the predictions for $L=5$,
        and the star is the new pair $(L=5,N=250{,}000)$, outside the
        range of our experiments. \textbf{(a)} Model information and the
        fitted scaling law
        \eqref{eq:linear_gaussian_information_law}. \textbf{(b)} Memorization
        ratio. \textbf{(c)} FID, predicted in the $N^{-1}$ regime; the shaded
    region is below the predicted start of this regime for $L=5$.}
    \label{fig:imagenet_extrapolation}
\end{figure}

\section{Related work}

\paragraph{Memorization and generalization.}
Empirical work has proposed sample-level diagnostics for replication and memorization \citep{somepalli2023diffusion,wen2024detecting,ross2024geometric}, while theoretical studies have investigated the mechanisms and transitions underlying memorization and generalization \citep{zhang2024emergence,bonnaire2025why,kamb2024analytic,Biroli_2024,pham2025memorization,niedoba2024mechanistic,buchanan2025edge}. \citet{kadkhodaie2023generalization} related generalization to the convergence of models trained on independent datasets. Our model information quantifies the variability of their learned distributions, connecting naturally to their bias--variance perspective. More recently, \citet{hunt2026exact} used information theory to characterize memorization--generalization transitions in Bayesian diffusion models. Most closely related to our work, \citet{yi2023generalization} proposed mutual information between training datasets and generated samples as a measure of generalization, which coincides with model information under deterministic training. However, they did not develop a practical estimator for this quantity or characterize its behavior. Our work addresses these questions through an estimator that uses a few independently trained models, an analytically tractable model, and empirical scaling laws.

\paragraph{Information-theoretic approaches to diffusion models.}
Diffusion models can be viewed as noisy communication channels through which information is progressively lost. The evolution of mutual information under Gaussian noising can be expressed through denoising and score-based identities \citep{guo2005mutual,kong2023information,franzese2024minde}, enabling practical mutual-information estimation using diffusion models. This perspective has been used to study information about clean samples \citep{kong2023information,stancevic2026entropic}, latent abstractions and semantic classes \citep{franzese2025latent,handke2026the}, and relationships between images and text \citep{kong2023interpretable}. Building on this line of work, we extend score-based mutual-information estimation to ensembles of trained generative models by approximating the ensemble with finitely many trained models, and demonstrate stable estimation using only a few independently trained networks.

\paragraph{Scaling laws.}
Scaling laws have played a central role in the development of large language models, providing quantitative predictions of how performance improves with increasing dataset size, model capacity, and compute \citep{kaplan2020scaling,hoffmann2022trainingcomputeoptimallargelanguagemodels}. More recently, similar relationships have been established for diffusion models, showing that both training loss and generative quality exhibit predictable scaling behavior \citep{li2024scalability,liang2026scaling,chickering2026abra}. Our work establishes scaling laws for model information across memorization and generalization regimes, enabling predictions of both memorization and generative quality.

\section{Conclusion}
In this work, we show that model information can be efficiently estimated and exhibits predictable scaling behavior across training-set sizes and model capacities. An analytically tractable model motivates a scaling law that captures the transition from memorization to generalization observed in neural networks. We show that this transition closely tracks the disappearance of memorization, while decreasing model information is associated with improvements in generative fidelity in the generalization regime. These relationships allow us to predict memorization and sample quality for previously unseen combinations of training-set size and model capacity. Our results thus provide a quantitative connection between dataset dependence, memorization, and generative fidelity, offering a new perspective on how generalization emerges in diffusion models.

Several questions remain open. A deeper theoretical understanding of the scaling law is needed. Understanding how its parameters, including the critical training-set size $N_{\mathrm c}(P)$ and the width $\sigma$ of the transition, emerge from network architecture and learning dynamics remains open. Another direction is to investigate how random initialization and stochastic optimization contribute to model information independently of training-set variability. Finally, empirical improvements to the estimation procedure such that it can be applied reliably or efficiently to pretrained models would also be a way to expand this methodology to frontier settings.

\bibliography{main.bib}

@article{sohldickstein2015deepunsupervisedlearningusing,
  title   = {Deep unsupervised learning using nonequilibrium thermodynamics},
  author  = {Sohl-Dickstein, Jascha and Weiss, Eric and Maheswaranathan, Niru and Ganguli, Surya},
  journal = {International Conference on Machine Learning},
  year    = {2015}
}

@article{ho2020denoisingdiffusionprobabilisticmodels,
  title   = {Denoising diffusion probabilistic models},
  author  = {Ho, Jonathan and Jain, Ajay and Abbeel, Pieter},
  journal = {Advances in neural information processing systems},
  volume  = {33},
  pages   = {6840--6851},
  year    = {2020}
}

@article{song2022denoisingdiffusionimplicitmodels,
  title   = {Denoising Diffusion Implicit Models},
  author  = {Jiaming Song and Chenlin Meng and Stefano Ermon},
  journal = {International COnference on Learning Representations},
  year    = {2022}
}

@article{anderson1982reverse,
  title     = {Reverse-time diffusion equation models},
  author    = {Anderson, Brian DO},
  journal   = {Stochastic Processes and their Applications},
  volume    = {12},
  number    = {3},
  pages     = {313--326},
  year      = {1982},
  publisher = {Elsevier}
}

@book{sarkka2019applied,
  title     = {Applied stochastic differential equations},
  author    = {S{\"a}rkk{\"a}, Simo and Solin, Arno},
  volume    = {10},
  year      = {2019},
  publisher = {Cambridge University Press}
}

@article{hoffmann2022trainingcomputeoptimallargelanguagemodels,
  title   = {Training Compute-Optimal Large Language Models},
  author  = {Hoffmann, Jordan and Borgeaud, Sebastian and Mensch, Arthur and Buchatskaya, Elena and Cai, Trevor and Rutherford, Eliza and Casas, Diego de Las and Hendricks, Lisa Anne and Welbl, Johannes and Clark, Aidan and Hennigan, Tom and Noland, Eric and Millican, Katie and van den Driessche, George and Damoc, Bogdan and Guy, Aurelia and Osindero, Simon and Simonyan, Karen and Elsen, Erich and Rae, Jack W. and Vinyals, Oriol and Sifre, Laurent},
  journal = {arXiv preprint arXiv:2203.15556},
  year    = {2022}
}

@article{song2021scorebasedgenerativemodelingstochastic,
  title   = {Score-Based Generative Modeling through Stochastic Differential Equations},
  author  = {Yang Song and Jascha Sohl-Dickstein and Diederik P. Kingma and Abhishek Kumar and Stefano Ermon and Ben Poole},
  journal = {International Conference on Learning Representations},
  year    = {2021}
}

@article{rombach2022highresolutionimagesynthesislatent,
  title   = {High-resolution image synthesis with latent diffusion models},
  author  = {Rombach, Robin and Blattmann, Andreas and Lorenz, Dominik and Esser, Patrick and Ommer, Bj{\"o}rn},
  journal = {Proceedings of the IEEE/CVF conference on computer vision and Pattern Recognition},
  year    = {2022}
}

@article{ho2022videodiffusionmodels,
  title   = {Video diffusion models},
  author  = {Ho, Jonathan and Salimans, Tim and Gritsenko, Alexey and Chan, William and Norouzi, Mohammad and Fleet, David J},
  journal = {Advances in Neural Information Processing Systems},
  year    = {2022}
}

@article{chen2020wavegradestimatinggradientswaveform,
  title   = {WaveGrad: Estimating Gradients for Waveform Generation},
  author  = {Nanxin Chen and Yu Zhang and Heiga Zen and Ron J. Weiss and Mohammad Norouzi and William Chan},
  journal = {International Conference on Learning Representations},
  year    = {2021}
}

@article{Biroli_2024,
  title   = {Dynamical regimes of diffusion models},
  volume  = {15},
  doi     = {10.1038/s41467-024-54281-3},
  number  = {1},
  journal = {Nature Communications},
  author  = {Biroli, Giulio and Bonnaire, Tony and de Bortoli, Valentin and Mézard, Marc},
  year    = {2024},
  month   = nov
}

@article{karras2022elucidating,
  title   = {Elucidating the design space of diffusion-based generative models},
  author  = {Karras, Tero and Aittala, Miika and Aila, Timo and Laine, Samuli},
  journal = {Advances in neural information processing systems},
  volume  = {35},
  pages   = {26565--26577},
  year    = {2022}
}

@inproceedings{
kong2023information,
title={Information-Theoretic Diffusion},
author={Xianghao Kong and Rob Brekelmans and Greg Ver Steeg},
booktitle={The Eleventh International Conference on Learning Representations },
year={2023},
url={https://openreview.net/forum?id=UvmDCdSPDOW}
}

@article{kong2023interpretable,
  title   = {Interpretable diffusion via information decomposition},
  author  = {Kong, Xianghao and Liu, Ollie and Li, Han and Yogatama, Dani and Steeg, Greg Ver},
  journal = {arXiv preprint arXiv:2310.07972},
  year    = {2023}
}

@article{franzese2025latent,
  title     = {Latent Abstractions in Generative Diffusion Models},
  author    = {Franzese, Giulio and Martini, Mattia and Corallo, Giulio and Papotti, Paolo and Michiardi, Pietro},
  journal   = {Entropy},
  volume    = {27},
  number    = {4},
  pages     = {371},
  year      = {2025},
  publisher = {MDPI}
}

@article{guo2005mutual,
  title     = {Mutual information and minimum mean-square error in Gaussian channels},
  author    = {Guo, Dongning and Shamai, Shlomo and Verd{\'u}, Sergio},
  journal   = {IEEE transactions on information theory},
  volume    = {51},
  number    = {4},
  pages     = {1261--1282},
  year      = {2005},
  publisher = {IEEE}
}

@article{vincent2011connection,
  title     = {A connection between score matching and denoising autoencoders},
  author    = {Vincent, Pascal},
  journal   = {Neural computation},
  volume    = {23},
  number    = {7},
  pages     = {1661--1674},
  year      = {2011},
  publisher = {MIT Press}
}

@article{heusel2017gans,
  title   = {Gans trained by a two time-scale update rule converge to a local nash equilibrium},
  author  = {Heusel, Martin and Ramsauer, Hubert and Unterthiner, Thomas and Nessler, Bernhard and Hochreiter, Sepp},
  journal = {Advances in neural information processing systems},
  volume  = {30},
  year    = {2017}
}

@article{krizhevsky2009learning,
  title     = {Learning multiple layers of features from tiny images},
  author    = {Krizhevsky, Alex and Hinton, Geoffrey and others},
  year      = {2009},
  journal   = {University of Toronto},
  publisher = {Toronto, ON, Canada}
}

@inproceedings{zhang2024emergence,
  title     = {The emergence of reproducibility and consistency in diffusion models},
  author    = {Zhang, Huijie and Zhou, Jinfan and Lu, Yifu and Guo, Minzhe and Wang, Peng and Shen, Liyue and Qu, Qing},
  booktitle = {Forty-first International Conference on Machine Learning},
  year      = {2024}
}

@article{kadkhodaie2023generalization,
  title   = {Generalization in diffusion models arises from geometry-adaptive harmonic representations},
  author  = {Kadkhodaie, Zahra and Guth, Florentin and Simoncelli, Eero P and Mallat, St{\'e}phane},
  journal = {arXiv preprint arXiv:2310.02557},
  year    = {2023}
}

@article{buchanan2025edge,
  title   = {On the edge of memorization in diffusion models},
  author  = {Buchanan, Sam and Pai, Druv and Ma, Yi and De Bortoli, Valentin},
  journal = {arXiv preprint arXiv:2508.17689},
  year    = {2025}
}

@article{gu2023memorization,
  title   = {On memorization in diffusion models},
  author  = {Gu, Xiangming and Du, Chao and Pang, Tianyu and Li, Chongxuan and Lin, Min and Wang, Ye},
  journal = {arXiv preprint arXiv:2310.02664},
  year    = {2023}
}

@inproceedings{wen2024detecting,
  title     = {Detecting, explaining, and mitigating memorization in diffusion models},
  author    = {Wen, Yuxin and Liu, Yuchen and Chen, Chen and Lyu, Lingjuan},
  booktitle = {The Twelfth International Conference on Learning Representations},
  year      = {2024}
}

@article{bertrand2025closed,
  title   = {On the closed-form of flow matching: Generalization does not arise from target stochasticity},
  author  = {Bertrand, Quentin and Gagneux, Anne and Massias, Mathurin and Emonet, R{\'e}mi},
  journal = {arXiv preprint arXiv:2506.03719},
  year    = {2025}
}

@inproceedings{deng2009imagenet,
  title        = {Imagenet: A large-scale hierarchical image database},
  author       = {Deng, Jia and Dong, Wei and Socher, Richard and Li, Li-Jia and Li, Kai and Fei-Fei, Li},
  booktitle    = {2009 IEEE conference on computer vision and pattern recognition},
  pages        = {248--255},
  year         = {2009},
  organization = {Ieee}
}

@inproceedings{yoon2023diffusion,
  title     = {Diffusion probabilistic models generalize when they fail to memorize},
  author    = {Yoon, TaeHo and Choi, Joo Young and Kwon, Sehyun and Ryu, Ernest K},
  booktitle = {ICML 2023 workshop on structured probabilistic inference $\{$$\backslash$\&$\}$ generative modeling},
  year      = {2023}
}

@inproceedings{carlini2023extracting,
  title     = {Extracting Training Data from Diffusion Models},
  author    = {Carlini, Nicholas and Hayes, Jamie and Nasr, Milad and Jagielski, Matthew and Sehwag, Vikash and Tram{\`e}r, Florian and Balle, Borja and Ippolito, Daphne and Wallace, Eric},
  booktitle = {32nd USENIX Security Symposium},
  pages     = {5253--5270},
  year      = {2023},
  publisher = {USENIX Association}
}

@inproceedings{somepalli2023diffusion,
  title     = {Diffusion Art or Digital Forgery? Investigating Data Replication in Diffusion Models},
  author    = {Somepalli, Gowthami and Singla, Vasu and Goldblum, Micah and Geiping, Jonas and Goldstein, Tom},
  booktitle = {Proceedings of the IEEE/CVF Conference on Computer Vision and Pattern Recognition},
  pages     = {6048--6058},
  year      = {2023}
}

@article{kamb2024analytic,
  title   = {An Analytic Theory of Creativity in Convolutional Diffusion Models},
  author  = {Kamb, Mason and Ganguli, Surya},
  journal = {arXiv preprint arXiv:2412.20292},
  year    = {2024}
}

@article{niedoba2024mechanistic,
  title   = {Towards a Mechanistic Explanation of Diffusion Model Generalization},
  author  = {Niedoba, Matthew and Zwartsenberg, Berend and Murphy, Kevin and Wood, Frank},
  journal = {arXiv preprint arXiv:2411.19339},
  year    = {2024}
}

@article{kaplan2020scaling,
  title   = {Scaling Laws for Neural Language Models},
  author  = {Kaplan, Jared and McCandlish, Sam and Henighan, Tom and Brown, Tom B. and Chess, Benjamin and Child, Rewon and Gray, Scott and Radford, Alec and Wu, Jeffrey and Amodei, Dario},
  journal = {arXiv preprint arXiv:2001.08361},
  year    = {2020}
}

@article{pham2025memorization,
  title   = {Memorization to generalization: Emergence of diffusion models from associative memory},
  author  = {Pham, Bao and Raya, Gabriel and Negri, Matteo and Zaki, Mohammed J and Ambrogioni, Luca and Krotov, Dmitry},
  journal = {arXiv preprint arXiv:2505.21777},
  year    = {2025}
}

@article{stancevic2026entropic,
  title   = {Entropic time schedulers for generative diffusion models},
  author  = {Stancevic, Dejan and Handke, Florian and Ambrogioni, Luca},
  journal = {Advances in Neural Information Processing Systems},
  volume  = {38},
  pages   = {44222--44252},
  year    = {2026}
}

@article{watson2023de,
  title   = {De novo design of protein structure and function with RFdiffusion},
  author  = {Watson, Joseph L. and Juergens, David and Bennett, Nathaniel R. and others},
  journal = {Nature},
  volume  = {620},
  pages   = {1089--1100},
  year    = {2023},
  doi     = {10.1038/s41586-023-06415-8}
}

@inproceedings{ross2024geometric,
  title     = {A Geometric Framework for Understanding Memorization in Generative Models},
  author    = {Ross, Brendan Leigh and Kamkari, Hamidreza and Wu, Tongzi and Hosseinzadeh, Rasa and Liu, Zhaoyan and Stein, George and Cresswell, Jesse C. and Loaiza-Ganem, Gabriel},
  booktitle = {The Thirteenth International Conference on Learning Representations},
  year      = {2025}
}

@inproceedings{
bonnaire2025why,
title={Why Diffusion Models Don{\textquoteright}t Memorize:  The Role of Implicit Dynamical Regularization in Training},
author={Tony Bonnaire and Rapha{\"e}l Urfin and Giulio Biroli and Marc Mezard},
booktitle={The Thirty-ninth Annual Conference on Neural Information Processing Systems},
year={2025},
url={https://openreview.net/forum?id=BSZqpqgqM0}
}

@inproceedings{
handke2026the,
title={The Entropic Signature of Class Speciation in Diffusion Models},
author={Florian Handke and Dejan Stancevic and Felix Koulischer and Thomas Demeester and Luca Ambrogioni},
booktitle={Forty-third International Conference on Machine Learning},
year={2026},
url={https://openreview.net/forum?id=rp02mO31RL}
}

@article{hunt2026exact,
  title   = {An Exact Information Theory of Generalization Phase Transitions in Bayesian Diffusion Models},
  author  = {Hunt, Henry and Kamb, Mason and Ganguli, Surya},
  journal = {arXiv preprint arXiv:2607.08041},
  year    = {2026}
}

@inproceedings{mcallester2020formal,
  title        = {Formal limitations on the measurement of mutual information},
  author       = {McAllester, David and Stratos, Karl},
  booktitle    = {International Conference on Artificial Intelligence and Statistics},
  pages        = {875--884},
  year         = {2020},
  organization = {PMLR}
}

@article{lipman2022flow,
  title   = {Flow matching for generative modeling},
  author  = {Lipman, Yaron and Chen, Ricky TQ and Ben-Hamu, Heli and Nickel, Maximilian and Le, Matt},
  journal = {arXiv preprint arXiv:2210.02747},
  year    = {2022}
}

@inproceedings{parmar2022aliased,
  title     = {On Aliased Resizing and Surprising Subtleties in {GAN} Evaluation},
  author    = {Parmar, Gaurav and Zhang, Richard and Zhu, Jun-Yan},
  booktitle = {CVPR},
  year      = {2022}
}

@book{bai2010spectral,
  title={Spectral Analysis of Large Dimensional Random Matrices},
  author={Bai, Zhidong and Silverstein, Jack W.},
  edition={2},
  series={Springer Series in Statistics},
  publisher={Springer},
  year={2010}
}

@book{cochran1977sampling,
  title={Sampling Techniques},
  author={Cochran, William G.},
  edition={3},
  publisher={John Wiley \& Sons},
  year={1977}
}

@article{yi2023generalization,
  title   = {On the Generalization of Diffusion Model},
  author  = {Yi, Mingyang and Sun, Jiacheng and Li, Zhenguo},
  journal = {arXiv preprint arXiv:2305.14712},
  year    = {2023},
  url     = {https://arxiv.org/abs/2305.14712}
}

@inproceedings{franzese2024minde,
  title     = {{MINDE}: Mutual Information Neural Diffusion Estimation},
  author    = {Franzese, Giulio and Bounoua, Mustapha and Michiardi, Pietro},
  booktitle = {International Conference on Learning Representations},
  year      = {2024},
  url       = {https://proceedings.iclr.cc/paper_files/paper/2024/hash/47f75e809409709c6d226ab5ca0c9703-Abstract-Conference.html}
}

@inproceedings{wang2026randommatrix,
  title     = {A Random Matrix Theory Perspective on the Consistency of Diffusion Models},
  author    = {Wang, Binxu and Zavatone-Veth, Jacob A. and Pehlevan, Cengiz},
  booktitle = {Proceedings of the 43rd International Conference on Machine Learning},
  series    = {Proceedings of Machine Learning Research},
  volume    = {306},
  pages     = {132077--132128},
  publisher = {PMLR},
  year      = {2026},
  url       = {https://proceedings.mlr.press/v306/wang26kf.html}
}

@article{dai2026outputs,
  title   = {Outputs of Generative Diffusion Models Are Often Unattributable},
  author  = {Dai, Zheng and Gifford, David K.},
  journal = {Nature Communications},
  volume  = {17},
  pages   = {6974},
  year    = {2026},
  doi     = {10.1038/s41467-026-75667-5},
  url     = {https://doi.org/10.1038/s41467-026-75667-5}
}

@article{marion2026understanding,
  title   = {Understanding Diffusion Models Requires Rethinking (Again) Generalization},
  author  = {Marion, Pierre and Wu, Yu-Han},
  journal = {arXiv preprint arXiv:2605.06077},
  year    = {2026},
  url     = {https://arxiv.org/abs/2605.06077}
}

@inproceedings{li2024scalability,
  title     = {On the Scalability of Diffusion-based Text-to-Image Generation},
  author    = {Li, Hao and Zou, Yang and Wang, Ying and Majumder, Orchid and Xie, Yusheng and Manmatha, R. and Swaminathan, Ashwin and Tu, Zhuowen and Ermon, Stefano and Soatto, Stefano},
  booktitle = {Proceedings of the IEEE/CVF Conference on Computer Vision and Pattern Recognition (CVPR)},
  pages     = {9400--9409},
  year      = {2024},
  doi       = {10.1109/CVPR52733.2024.00898},
  url       = {https://openaccess.thecvf.com/content/CVPR2024/html/Li_On_the_Scalability_of_Diffusion-based_Text-to-Image_Generation_CVPR_2024_paper.html}
}

@inproceedings{liang2026scaling,
  title     = {Scaling Laws for Diffusion Transformers},
  author    = {Liang, Zhengyang and He, Hao and Yang, Ceyuan and Dai, Bo},
  booktitle = {International Conference on Learning Representations (ICLR)},
  year      = {2026},
  url       = {https://proceedings.iclr.cc/paper_files/paper/2026/hash/880d8999c07a8efc9bbbeb0c38f50765-Abstract-Conference.html}
}

@article{chickering2026abra,
  title   = {{Abra}: Scaling Diffusion Image Training},
  author  = {Chickering, Kyle and Lin, Wei-An and Bhanded, Swayam and Saunders, Dan and Tripathi, Akshat and Song, Jiaming and Buch, Shyamal and Yan, Xinchen},
  journal = {arXiv preprint arXiv:2608.17286},
  year    = {2026},
  doi     = {10.48550/arXiv.2608.17286},
  url     = {https://arxiv.org/abs/2608.17286}
}

\appendix

\newpage

\section{Computing the Model Information}
\label{sec:model_information_computations}
In this section, we derive the score-space representation of the model
information rate and equivalent computational forms.

\subsection{Score-Space
    Representation of the Model Information
Rate}\label{sec:model_information_rate_score_space}

For \(t>0\) with \(\sigma_t>0\), the Gaussian smoothing in
\eqref{eq:forward_solution} makes the densities $p_t^{\psi}$ smooth and
strictly positive. We assume the standard integrability and decay conditions
needed to differentiate under the integral in \eqref{eq:noisy_marginal_law}
and to integrate by parts below; for example, these conditions hold if the
model laws have finite second moment.

\begin{proposition*}[Proposition \ref{prop:model_information_rate_score_based}]
    Under the preceding assumptions, the model information rate satisfies
    \begin{equation}
        \dot I_\model(t)
        =
        -\frac{g(t)^2}{2}
        \E_{\Phi\sim\Pi, \hat{X}_t\sim p_t^\Phi}
        \|
        \nabla \log p_t^\Phi(\hat{X}_t)
        -
        \nabla \log \bar{p}_t(\hat{X}_t)
        \|^2.
        \label{eq:appendix_model_information_rate_score_space}
    \end{equation}
\end{proposition*}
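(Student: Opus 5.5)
We will write the model information as an average Kullback--Leibler divergence and differentiate it along the Fokker--Planck flow of the forward SDE \eqref{eq:forward_sde}. Because $\hat X_t$ depends on $\Phi$ only through $p_t^\Phi$, and $\bar p_t$ is the marginal density of $\hat X_t$,
\begin{equation*}
    I_\model(t) = \E_{\Phi\sim\Pi}\left[\int p_t^\Phi(x)\log\frac{p_t^\Phi(x)}{\bar p_t(x)}\,dx\right].
\end{equation*}
The key observation is that every $p_t^\phi$ solves the same linear Fokker--Planck equation
\begin{equation*}
    \partial_t p = -\nabla\cdot(f(t)\,x\,p) + \tfrac{g(t)^2}{2}\Delta p .
\end{equation*}
This holds because $p_t^\phi$ is the time-$t$ marginal of \eqref{eq:forward_sde} started from $X_0\sim\phi$. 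By linearity, and after differentiating under the integral in \eqref{eq:noisy_marginal_law}, $\bar p_t$ solves the same equation. So the statement reduces to the classical fact that the relative entropy between two solutions of the same Fokker--Planck equation decays at a rate given by their relative Fisher information. We then average this fact over $\Phi$.

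Fix $\phi$ and write $p=p_t^\phi$, $q=\bar p_t$. Then
\begin{equation*}
    \tfrac{d}{dt}\mathrm{KL}(p\|q)=\int \partial_t p\,\log\tfrac{p}{q}\,dx+\int \partial_t p\,dx-\int \tfrac{p}{q}\,\partial_t q\,dx ,
\end{equation*}
and the middle term vanishes by mass conservation. Substituting the Fokker--Planck equation and integrating by parts, we treat the two parts of the generator separately.

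\emph{Drift part.} The contributions are $\int f\,x\,p\cdot\nabla\log(p/q)\,dx$ and $-\int f\,x\,q\cdot\nabla(p/q)\,dx$. Since $q\nabla(p/q)=p\,\nabla\log(p/q)$, these cancel exactly.

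\emph{Diffusion part.} This gives
\begin{equation*}
    \tfrac{g^2}{2}\left[-\int \nabla p\cdot\nabla\log\tfrac{p}{q}\,dx+\int \nabla q\cdot\nabla\tfrac{p}{q}\,dx\right].
\end{equation*}
Writing $\nabla p=p\nabla\log p$, $\nabla q=q\nabla\log q$, and $\nabla(p/q)=(p/q)\nabla\log(p/q)$, the bracket equals $-\int p\,\|\nabla\log p-\nabla\log q\|^2\,dx$.

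Averaging over $\Phi\sim\Pi$ gives exactly \eqref{eq:appendix_model_information_rate_score_space}. There is one subtlety: $q=\bar p_t$ itself depends on $t$. This is already handled, since the third term in the derivative accounts for $\partial_t q$. Equivalently, one can write $I_\model=\E_\Phi\int p\log p-\int\bar p\log\bar p$ and check that the cross terms from $\partial_t\bar p$ vanish.

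The main obstacle is justification rather than algebra. Two steps need care.
\begin{itemize}
    \item Interchanging $d/dt$ with $\E_\Phi$ and with $\int dx$.
    \item Discarding boundary terms in the integrations by parts, which requires that $p\log(p/q)\,f\,x$ and $\nabla p\log(p/q)$ decay at infinity.
\end{itemize}
For $t>0$ with $\sigma_t>0$, $p_t^\phi$ is a Gaussian convolution of the rescaled law $\phi(\cdot/\alpha_t)$, so it is smooth and strictly positive. Its score has at most linear growth, $\|\nabla\log p_t^\phi(x)\|\le (\|x\|+\alpha_t\E_\phi\|X\|)/\sigma_t^2$ up to constants, since the score equals $(\alpha_t\E[X_0\mid x]-x)/\sigma_t^2$. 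The same bound holds for $\bar p_t$, because its score is the posterior average \eqref{eq:mixture_score_expectation}.

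Combined with Gaussian tails and the finite second moment assumption on the models, these bounds give integrable dominating functions uniformly on compact subintervals of $t$. That justifies dominated convergence for the interchanges and the vanishing of boundary terms, for example by integrating against smooth cutoffs $\chi(x/R)$ and letting $R\to\infty$.
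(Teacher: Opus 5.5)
Your proposal is correct and follows essentially the same route as the paper's proof: both write $I_\model(t)$ as the $\Pi$-average of $D_{\mathrm{KL}}(p_t^\phi\,\|\,\bar p_t)$, use that $p_t^\phi$ and, by linearity, $\bar p_t$ solve the same Fokker--Planck equation, drop the mass-conservation term, and show that the drift contributions cancel while the diffusion contributions give the relative Fisher information. The differences are cosmetic: the paper moves the whole generator onto $\log r$ and $r$, with $r=p_t^\phi/\bar p_t$, via the adjoint and uses the pointwise identity $r\Delta\log r-\Delta r=-r\|\nabla\log r\|^2$, whereas you integrate by parts only once; your regularity sketch also goes beyond the paper, which simply assumes the needed integrability and decay conditions.
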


\begin{proof}
    We prove the claim by differentiating the mutual information
    between the model identity and the noised observation. Since
    $I_\model(t)=I(\Phi;\hat{X}_t)$, it is enough to compute
    $\frac{d}{dt}I_\model(t)$. By the mixture representation
    \eqref{eq:noisy_marginal_law},
    \begin{equation}
        I(\Phi;\hat{X}_t)
        =
        \int
        D_\mathrm{KL}(p_t^\phi \mid \bar{p}_t)\, \Pi(d\phi).
        \label{eq:appendix_mutual_information_kl}
    \end{equation}

    Fix $\phi$, and set
    \begin{equation}
        r_t^\phi(x):=\frac{p_t^\phi(x)}{\bar{p}_t(x)}.
    \end{equation}
    Both $p_t^\phi$ and $\bar p_t$ evolve under the same forward
    Fokker--Planck equation
    \begin{equation}
        \partial_t u_t
        =
        \mathcal L_t^*u_t,
        \qquad
        \mathcal L_t^*u
        =
        -\nabla\cdot(f(t)x\,u)
        +
        \frac{g(t)^2}{2}\Delta u,
        \label{eq:appendix_fokker_planck}
    \end{equation}
    whose adjoint on test functions is
    \begin{equation}
        \mathcal L_t\varphi
        =
        f(t)x\cdot\nabla \varphi
        +
        \frac{g(t)^2}{2}\Delta \varphi .
    \end{equation}
    Indeed, the claim for $\bar p_t$ follows from the linearity of
    the Fokker--Planck equation and the definition $\bar p_t=\int
    p_t^\phi\,\Pi(d\phi)$:
    \begin{equation}
        \partial_t\bar p_t
        =
        \int \partial_t p_t^\phi\,\Pi(d\phi)
        =
        \int \mathcal L_t^*p_t^\phi\,\Pi(d\phi)
        =
        \mathcal L_t^*\bar p_t.
    \end{equation}

    Define $D_t(\phi):=D_\mathrm{KL}(p_t^\phi \mid \bar{p}_t)=\int
    p_t^\phi\log r_t^\phi\,dx$. Differentiating and applying the
    product rule gives
    \begin{align}
        \frac{d}{dt}D_t(\phi)
        &=
        \int (\partial_t p_t^\phi)\log r_t^\phi\,dx
        +
        \int p_t^\phi\,\partial_t\log r_t^\phi\,dx \notag\\
        &=
        \int (\partial_t p_t^\phi)\log r_t^\phi\,dx
        +
        \int p_t^\phi
        \left(
            \frac{\partial_t p_t^\phi}{p_t^\phi}
            -
            \frac{\partial_t\bar p_t}{\bar p_t}
        \right)dx \notag\\
        &=
        \int (\partial_t p_t^\phi)\log r_t^\phi\,dx
        +
        \int \partial_t p_t^\phi\,dx
        -
        \int p_t^\phi\,\frac{\partial_t \bar{p}_t}{\bar{p}_t}\,dx.
    \end{align}
    Since $p_t^\phi$ is a probability density for every $t$, we have
    $\int p_t^\phi\,dx=1$, and therefore $\int\partial_t p_t^\phi\,dx=0$. Hence
    \begin{equation}
        \frac{d}{dt}D_t(\phi)
        =
        \int (\partial_t p_t^\phi)\log r_t^\phi\,dx
        -
        \int p_t^\phi\,\frac{\partial_t \bar{p}_t}{\bar{p}_t}\,dx.
    \end{equation}
    Substituting the Fokker--Planck equations for $p_t^\phi$ and
    $\bar p_t$, and using the adjoint relation, gives
    \begin{align}
        \frac{d}{dt}D_t(\phi)
        &=
        \int (\mathcal L_t^*p_t^\phi)\log r_t^\phi\,dx
        -
        \int r_t^\phi\,\mathcal L_t^*\bar p_t\,dx \notag\\
        &=
        \int p_t^\phi\,\mathcal L_t \log r_t^\phi\,dx
        -
        \int \bar{p}_t\,\mathcal L_t r_t^\phi\,dx.
        \label{eq:appendix_kl_derivative}
    \end{align}

    Using again $p_t^\phi=r_t^\phi\bar p_t$, we can rewrite
    \eqref{eq:appendix_kl_derivative} as
    \begin{equation}
        \frac{d}{dt}D_t(\phi)
        =
        \int \bar p_t
        \left(
            r_t^\phi\,\mathcal L_t\log r_t^\phi
            -
            \mathcal L_t r_t^\phi
        \right)dx.
    \end{equation}
    Now substitute the expression for $\mathcal L_t$. Suppressing the
    superscripts temporarily and writing $r=r_t^\phi$, the integrand becomes
    \begin{align}
        r\,\mathcal L_t\log r-\mathcal L_t r
        &=
        f(t)x\cdot
        \left(
            r\nabla\log r-\nabla r
        \right)
        +
        \frac{g(t)^2}{2}
        \left(
            r\Delta\log r-\Delta r
        \right).
    \end{align}
    The drift term vanishes pointwise. For the diffusion term, we use
    \begin{equation}
        \Delta\log r
        =
        \frac{\Delta r}{r}
        -
        \frac{\|\nabla r\|^2}{r^2},
    \end{equation}
    and therefore
    \begin{align}
        r\,\Delta \log r-\Delta r
        &=
        \Delta r-\frac{\|\nabla r\|^2}{r}-\Delta r \notag\\
        &=
        -\frac{\|\nabla r\|^2}{r}
        =
        -r\,\|\nabla\log r\|^2.
    \end{align}
    Consequently,
    \begin{align}
        \frac{d}{dt}D_t(\phi)
        &=
        -\frac{g(t)^2}{2}
        \int \bar p_t(x)r_t^\phi(x)
        \|
        \nabla\log r_t^\phi(x)
        \|^2\,dx \notag\\
        &=
        -\frac{g(t)^2}{2}
        \int p_t^\phi(x)
        \|
        \nabla \log r_t^\phi(x)
        \|^2\,dx.
        \label{eq:appendix_kl_dissipation}
    \end{align}
    Since
    \begin{equation}
        \nabla\log r_t^\phi(x)
        =
        \nabla\log p_t^\phi(x)
        -
        \nabla\log \bar{p}_t(x),
    \end{equation}
    we obtain
    \begin{equation}
        \frac{d}{dt}D_t(\phi)
        =
        -\frac{g(t)^2}{2}
        \int p_t^\phi(x)
        \|
        \nabla\log p_t^\phi(x)
        -
        \nabla\log\bar p_t(x)
        \|^2\,dx.
    \end{equation}
    Finally, averaging over $\phi\sim\Pi$ and interchanging
    differentiation and integration, we get
    \begin{align}
        \dot I_\model(t)
        &=
        \int \frac{d}{dt}D_t(\phi)\,\Pi(d\phi) \notag\\
        &=
        -\frac{g(t)^2}{2}
        \E_{\Phi\sim\Pi,\hat X_t\sim p_t^\Phi}
        \|
        \nabla\log p_t^\Phi(\hat X_t)
        -
        \nabla\log\bar p_t(\hat X_t)
        \|^2,
    \end{align}
    which proves \eqref{eq:appendix_model_information_rate_score_space}.
\end{proof}

\subsection{Approximating the Ensemble by Finitely Many Models}
\label{sub:direct_estimator}
The ensemble distribution $\bar p_t=\E_{\Phi\sim\Pi}[p_t^\Phi]$ and
its score $\bar s_t=\nabla\log\bar p_t$ generally cannot be computed
exactly. We therefore approximate the ensemble using a finite set of
independently trained models. There are two natural constructions,
depending on whether the same models are used both to approximate the
ensemble and to evaluate the score discrepancy, or whether an
independent set of models is held out for evaluation. We show that
these give lower and upper bounds, respectively, on model information.

\paragraph{Finite ensemble.} Let $B=(\Psi_1,\ldots,\Psi_K)$ with
$\Psi_1,\ldots,\Psi_K\sim_{\mathrm{i.i.d.}}\Pi$ be a set of reference
models, and define
\begin{equation}
    q_{B,t}(x)
    :=
    \frac{1}{K}
    \sum_{j=1}^K p_t^{\Psi_j}(x),
    \qquad
    s_{B,t}(x)
    :=
    \nabla\log q_{B,t}(x).
    \label{eq:finite_bank_ensemble}
\end{equation}

\paragraph{In-sample construction.} Using the same models to define the
ensemble and evaluate the discrepancy gives
\begin{equation}
    I_{\model,K}^{\mathrm{in}}(t)
    :=
    \E_B\left[
        \frac{1}{K}\sum_{j=1}^K
        D_{\mathrm{KL}}
        \left(
            p_t^{\Psi_j}\mid q_{B,t}
        \right)
    \right].
    \label{eq:one_bank_information}
\end{equation}

\begin{proposition}
    \label{prop:one_bank_lower_bound}
    The in-sample rate satisfies
    \begin{equation}
        \dot I_{\model,K}^{\mathrm{in}}(t)
        =
        -\frac{g(t)^2}{2}
        \E_B\left[
            \frac{1}{K}\sum_{j=1}^K
            \E_{X_t\sim p_t^{\Psi_j}}
            \|s_t^{\Psi_j}(X_t)-s_{B,t}(X_t)\|^2
        \right]
        \geq
        \dot I_\model(t).
        \label{eq:one_bank_rate_bound}
    \end{equation}
    Consequently,
    \begin{equation}
        I_{\model,K}^{\mathrm{in}}(t)
        \leq
        I_\model(t).
        \label{eq:one_bank_information_bound}
    \end{equation}
\end{proposition}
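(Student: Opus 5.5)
The plan has three steps: get the rate identity by applying Proposition~\ref{prop:model_information_rate_score_based} to a finite mixture, prove the rate inequality pointwise in $x$ using the fact that a weighted mean minimizes squared deviation, and then integrate in time.

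\textbf{Step 1: the equality.} Fix the bank $B=(\Psi_1,\ldots,\Psi_K)$. Replace $\Pi$ by the empirical measure $\frac1K\sum_j\delta_{\Psi_j}$. Its noisy marginal is exactly $q_{B,t}$, and $\frac1K\sum_j D_{\mathrm{KL}}(p_t^{\Psi_j}\mid q_{B,t})$ is the model information of this finite mixture. The proof of Proposition~\ref{prop:model_information_rate_score_based} uses only two facts: every $p_t^{\Psi_j}$ solves the same Fokker--Planck equation, and the mixture solves it too by linearity. Both hold here, so the computation leading to \eqref{eq:appendix_kl_dissipation} applies verbatim to each $\Psi_j$ with $\bar p_t$ replaced by $q_{B,t}$. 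Averaging over $j$, taking $\E_B$, and interchanging $\frac{d}{dt}$ with $\E_B$ (justified by the same integrability assumptions, since each $\Psi_j\sim\Pi$) gives the stated expression for $\dot I^{\mathrm{in}}_{\model,K}(t)$.

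\textbf{Step 2: the rate inequality.} I would work pointwise in $x$. Write $p_j=p_t^{\Psi_j}(x)$ and $s_j=s_t^{\Psi_j}(x)$. Since $\nabla q_{B,t}=\frac1K\sum_j p_j s_j$, the ensemble score is the weighted mean
\begin{equation*}
s_{B,t}(x)=\frac{\sum_j p_j s_j}{\sum_j p_j}.
\end{equation*}
A weighted mean minimizes the weighted sum of squared deviations, so for any vector $v$,
\begin{equation*}
\frac1K\sum_j p_j\|s_j-s_{B,t}(x)\|^2\le\frac1K\sum_j p_j\|s_j-v\|^2 .
\end{equation*}
Choose $v=\nabla\log\bar p_t(x)$, which is deterministic and independent of $B$. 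Integrate over $x$ and take $\E_B$. By linearity and $\Psi_j\sim\Pi$, the right-hand side becomes $\int\E_{\Phi\sim\Pi}\bigl[p_t^\Phi(x)\|\nabla\log p_t^\Phi(x)-\nabla\log\bar p_t(x)\|^2\bigr]dx$. By Proposition~\ref{prop:model_information_rate_score_based}, this is $-2\dot I_\model(t)/g(t)^2$. Multiplying by $-g(t)^2/2$ reverses the inequality and gives $\dot I^{\mathrm{in}}_{\model,K}(t)\ge\dot I_\model(t)$.

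\textbf{Step 3: integrating in time.} This is the main obstacle, because it needs a terminal condition. In the setting of \eqref{eq:model_information_integration}, with $\alpha_T=0$, every $p_T^{\phi}$ is the same Gaussian. Hence $I^{\mathrm{in}}_{\model,K}(T)=I_\model(T)=0$, and
\begin{equation*}
I^{\mathrm{in}}_{\model,K}(t)=-\int_t^T\dot I^{\mathrm{in}}_{\model,K}(s)\,ds\le-\int_t^T\dot I_\model(s)\,ds=I_\model(t).
\end{equation*}
When $\alpha_T>0$, I would send $T\to\infty$ and use that both divergences vanish as $\sigma_T/\alpha_T\to\infty$, so the boundary terms disappear in the limit. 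The remaining technical care is justifying differentiation under $\E_B$ and the $x$-integral; the finite-second-moment assumption already invoked in Appendix~\ref{sec:model_information_rate_score_space} should cover this.
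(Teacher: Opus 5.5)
Your proposal is correct and follows the paper's proof essentially step for step. The rate identity comes from the KL-dissipation computation applied conditionally on $B$. Your weighted-mean minimization is the same fact as the paper's vanishing cross term; the paper states it as an exact Pythagorean identity, which also exhibits the gap $\E_B\E_{X_t\sim q_{B,t}}\|s_{B,t}(X_t)-\bar s_t(X_t)\|^2$. The information bound then follows, as in the paper, by integrating from a terminal time at which both informations vanish.
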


\begin{proof}
    The expression for $\dot I_{\model,K}^{\mathrm{in}}(t)$ follows
    from the same KL-dissipation identity as
    Proposition~\ref{prop:model_information_rate_score_based},
    applied conditionally on $B$. For fixed $B$, write
    \begin{equation}
        s_t^{\Psi_j}(x)-\bar s_t(x)
        =
        \left(s_t^{\Psi_j}(x)-s_{B,t}(x)\right)
        +
        \left(s_{B,t}(x)-\bar s_t(x)\right).
    \end{equation}
    After squaring and averaging over $j$, the cross
    term vanishes since
    \begin{align}
        \frac{1}{K}\sum_{j=1}^K
        p_t^{\Psi_j}(x)
        \left(s_t^{\Psi_j}(x)-s_{B,t}(x)\right)
        &=
        \frac{1}{K}\sum_{j=1}^K\nabla p_t^{\Psi_j}(x)
        -
        q_{B,t}(x)s_{B,t}(x) \notag\\
        &=
        0.
    \end{align}
    Hence
    \begin{align}
        &\frac{1}{K}\sum_{j=1}^K
        \E_{X_t\sim p_t^{\Psi_j}}
        \|s_t^{\Psi_j}(X_t)-\bar s_t(X_t)\|^2 \notag\\
        &\qquad=
        \frac{1}{K}\sum_{j=1}^K
        \E_{X_t\sim p_t^{\Psi_j}}
        \|s_t^{\Psi_j}(X_t)-s_{B,t}(X_t)\|^2
        +
        \E_{X_t\sim q_{B,t}}
        \|s_{B,t}(X_t)-\bar s_t(X_t)\|^2.
        \label{eq:one_bank_rate_decomposition}
    \end{align}
    Taking expectation over $B$ and using
    Proposition~\ref{prop:model_information_rate_score_based} gives
    $\dot I_{\model,K}^{\mathrm{in}}(t)\geq\dot I_\model(t)$. Since
    both information quantities vanish as $t\to\infty$,
    \begin{equation}
        I_{\model,K}^{\mathrm{in}}(t)
        =
        -\int_t^\infty \dot I_{\model,K}^{\mathrm{in}}(s)\,ds
        \leq
        -\int_t^\infty \dot I_\model(s)\,ds
        =
        I_\model(t).
    \end{equation}
\end{proof}

The in-sample construction also satisfies
\begin{equation}
    I_{\model,K}^{\mathrm{in}}(t)\leq\log K,
    \label{eq:one_bank_logk_bound}
\end{equation}
since, conditional on $B$, it is the mutual information between a
uniformly distributed index $J\in[K]$ and $X_t\sim p_t^{\Psi_J}$.

\paragraph{Held-out construction.} Let
$B=(\Psi_1,\ldots,\Psi_K)$ and
$B'=(\Phi_1,\ldots,\Phi_K)$ be two independent sets of models, with all models drawn
i.i.d.\ from $\Pi$. We use $B$ to define the finite ensemble
$q_{B,t}$ and $B'$ for evaluation. Define
\begin{align}
    I_{\model,K}^{\mathrm{out}}(t)
    &:=
    \E_{B,B'}
    \left[
        \frac{1}{K}
        \sum_{i=1}^K
        D_{\mathrm{KL}}
        \left(
            p_t^{\Phi_i}\mid q_{B,t}
        \right)
    \right] \notag\\
    &=
    \E_{B,\Phi}
    D_{\mathrm{KL}}
    \left(
        p_t^\Phi\mid q_{B,t}
    \right),
    \qquad \Phi\sim\Pi,
    \label{eq:two_bank_information}
\end{align}
where the second equality follows because the models in $B'$ are i.i.d.\
draws from $\Pi$ and independent of $B$.

\begin{proposition}
    \label{prop:two_bank_upper_bound}
    The held-out rate satisfies
    \begin{equation}
        \dot I_{\model,K}^{\mathrm{out}}(t)
        =
        -\frac{g(t)^2}{2}
        \E_{B,\Phi,\,X_t\sim p_t^\Phi}
        \|s_t^\Phi(X_t)-s_{B,t}(X_t)\|^2
        \leq
        \dot I_\model(t).
        \label{eq:two_bank_rate_bound}
    \end{equation}
    Consequently,
    \begin{equation}
        I_{\model,K}^{\mathrm{out}}(t)
        \geq
        I_\model(t).
        \label{eq:two_bank_information_bound}
    \end{equation}
\end{proposition}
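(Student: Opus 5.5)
The plan has two parts: compute the held-out rate by reusing the KL-dissipation identity, then compare it with $\dot I_\model(t)$ through an orthogonal decomposition around the true mixture score.

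\textbf{The rate formula.} Fix $B$ and a model $\Phi$. Both $p_t^\Phi$ and $q_{B,t}$ solve the forward Fokker--Planck equation \eqref{eq:appendix_fokker_planck}; for $q_{B,t}$ this follows by linearity, since it is a finite average of solutions. The computation in the proof of Proposition~\ref{prop:model_information_rate_score_based}, with $\bar p_t$ replaced by $q_{B,t}$, used only that property. It therefore gives
\begin{equation*}
\frac{d}{dt}D_{\mathrm{KL}}(p_t^\Phi\mid q_{B,t}) = -\frac{g(t)^2}{2}\,\E_{X_t\sim p_t^\Phi}\|s_t^\Phi(X_t)-s_{B,t}(X_t)\|^2 .
\end{equation*}
Averaging over $B$ and over an independent $\Phi\sim\Pi$, and interchanging differentiation and expectation under the standing integrability assumptions, yields the stated formula for $\dot I^{\mathrm{out}}_{\model,K}(t)$.

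\textbf{The rate inequality.} The inequality is equivalent to
\begin{equation*}
\E\|s_t^\Phi-s_{B,t}\|^2 \ge \E\|s_t^\Phi-\bar s_t\|^2,
\end{equation*}
with expectations over $B$, $\Phi$ and $X_t\sim p_t^\Phi$. I would write $s_t^\Phi-s_{B,t}=(s_t^\Phi-\bar s_t)+(\bar s_t-s_{B,t})$ and expand the square. The key point is that the cross term vanishes because $\Phi$ is independent of $B$. Integrating over $\Phi$ first, for fixed $B$ and $x$, gives
\begin{equation*}
\E_\Phi\bigl[p_t^\Phi(x)\,(s_t^\Phi(x)-\bar s_t(x))\bigr] = \E_\Phi[\nabla p_t^\Phi(x)]-\bar p_t(x)\,\bar s_t(x) = \nabla\bar p_t(x)-\nabla\bar p_t(x) = 0 .
\end{equation*}
This is the same cancellation used for Proposition~\ref{prop:one_bank_lower_bound}, but now taken with respect to the population mixture. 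Hence the cross term
\begin{equation*}
\int \E_\Phi\bigl[p_t^\Phi(x)(s_t^\Phi-\bar s_t)(x)\bigr]\cdot(\bar s_t-s_{B,t})(x)\,dx
\end{equation*}
is zero for every $B$. What remains is
\begin{equation*}
\E\|s_t^\Phi-s_{B,t}\|^2 = \E\|s_t^\Phi-\bar s_t\|^2 + \E_B\,\E_{X_t\sim\bar p_t}\|\bar s_t-s_{B,t}\|^2 \ge \E\|s_t^\Phi-\bar s_t\|^2 .
\end{equation*}
Multiplying by $-g(t)^2/2$ gives $\dot I^{\mathrm{out}}_{\model,K}\le\dot I_\model$.

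\textbf{Integrating to the information bound.} Both $I^{\mathrm{out}}_{\model,K}(t)$ and $I_\model(t)$ vanish as $t\to\infty$. For the held-out quantity this holds because $D_{\mathrm{KL}}(p_t^\Phi\mid q_{B,t})\to0$ as the signal $\alpha_t$ is washed out. Writing each as $-\int_t^\infty$ of its rate, the rate inequality reverses under the integral and gives $I^{\mathrm{out}}_{\model,K}(t)\ge I_\model(t)$.

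\textbf{Main obstacle.} The algebra is routine; the delicate part is justifying the interchanges. These are: differentiating under $\E_{B,\Phi}$, applying Fubini to the cross term (which needs $\E\|s_{B,t}-\bar s_t\|^2<\infty$ under $\bar p_t$), and the vanishing limit at $t\to\infty$. I would handle these under the same finite-second-moment and decay assumptions already invoked for Proposition~\ref{prop:model_information_rate_score_based}, where Gaussian smoothing makes all scores of at most linear growth.
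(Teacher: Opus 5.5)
Your proposal is correct and follows the paper's proof step for step. The rate comes from the KL-dissipation identity with $q_{B,t}$ in place of $\bar p_t$. The cross term in the decomposition around $\bar s_t$ vanishes by independence of $B$ and $\Phi$, and integrating from $t$ to $\infty$ gives the information bound. Your cancellation $\E_\Phi[p_t^\Phi(x)(s_t^\Phi(x)-\bar s_t(x))]=0$ is just the paper's posterior-mean identity $\E_{\Phi\mid \hat X_t=x}[s_t^\Phi(x)-\bar s_t(x)]=0$ multiplied by $\bar p_t(x)$.
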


\begin{proof}
    Again, the rate expression follows from the KL-dissipation identity. Write
    \begin{equation}
        s_t^\Phi(X_t)-s_{B,t}(X_t)
        =
        \left(s_t^\Phi(X_t)-\bar s_t(X_t)\right)
        +
        \left(\bar s_t(X_t)-s_{B,t}(X_t)\right).
    \end{equation}
    The cross term vanishes because $B$ is independent of $(\Phi,X_t)$ and
    \begin{equation}
        \E_{\Phi\mid X_t=x}
        \left[
            s_t^\Phi(x)-\bar s_t(x)
        \right]
        =
        0.
    \end{equation}
    Therefore,
    \begin{align}
        &\E_{B,\Phi,\,X_t\sim p_t^\Phi}
        \|s_t^\Phi(X_t)-s_{B,t}(X_t)\|^2 \notag\\
        &\qquad=
        \E_{\Phi,\,X_t\sim p_t^\Phi}
        \|s_t^\Phi(X_t)-\bar s_t(X_t)\|^2
        +
        \E_B\E_{X_t\sim\bar p_t}
        \|\bar s_t(X_t)-s_{B,t}(X_t)\|^2.
        \label{eq:two_bank_rate_decomposition}
    \end{align}
    Proposition~\ref{prop:model_information_rate_score_based} then
    gives $\dot I_{\model,K}^{\mathrm{out}}(t)\leq\dot I_\model(t)$.
    Integrating from $t$ to $\infty$ gives
    \begin{equation}
        I_{\model,K}^{\mathrm{out}}(t)
        =
        -\int_t^\infty\dot I_{\model,K}^{\mathrm{out}}(s)\,ds
        \geq
        -\int_t^\infty\dot I_\model(s)\,ds
        =
        I_\model(t).
    \end{equation}
\end{proof}

Thus the two constructions give a lower and an upper bound on model information:
\begin{equation}
    I_{\model,K}^{\mathrm{in}}(t)
    \leq
    I_\model(t)
    \leq
    I_{\model,K}^{\mathrm{out}}(t).
    \label{eq:finite_bank_information_bracketing}
\end{equation}
Equivalently, direct KL decompositions give
\begin{equation}
    I_\model(t)-I_{\model,K}^{\mathrm{in}}(t)
    =
    \E_BD_{\mathrm{KL}}(q_{B,t}\mid\bar p_t),
    \qquad
    I_{\model,K}^{\mathrm{out}}(t)-I_\model(t)
    =
    \E_BD_{\mathrm{KL}}(\bar p_t\mid q_{B,t}).
    \label{eq:finite_bank_kl_gaps}
\end{equation}

\paragraph{Choice of construction.} The held-out construction preserves
the singular pure-memorization limit. If independently drawn clean
model laws are mutually singular, then $I_\model(0)=\infty$. For a
set of $K$ mutually singular models, however,
$q_{B,0}=K^{-1}p_0^{\Psi_j}$ on the support of $p_0^{\Psi_j}$, giving
$I_{\model,K}^{\mathrm{in}}(0)=\log K$. An independent evaluation
model is instead almost surely singular with respect to $q_{B,0}$, so
$I_{\model,K}^{\mathrm{out}}(0)=\infty$. We therefore use the
held-out construction in our experiments.

\subsection{Mixture Score Estimators}
\label{sub:conditional_score_expectation_estimator}

The experiments use $K_{\mathrm{train}}$ models
$\Psi_1,\ldots,\Psi_{K_{\mathrm{train}}}$ to approximate the learned
model law $\Pi$. These models are independent of the
$K_{\mathrm{val}}$ models used for evaluation. Their noised mixture
density and posterior weights are
\begin{equation}
    q_{\mathrm{train},t}(x)
    :=
    \frac{1}{K_{\mathrm{train}}}
    \sum_{j=1}^{K_{\mathrm{train}}} p_t^{\Psi_j}(x),
    \qquad
    W_{j,\mathrm{train}}(x,t)
    :=
    \frac{p_t^{\Psi_j}(x)}
    {\sum_{\ell=1}^{K_{\mathrm{train}}}p_t^{\Psi_\ell}(x)}.
    \label{eq:finite_bank_posterior_weights}
\end{equation}
Differentiating the mixture density gives
\begin{equation}
    \nabla\log q_{\mathrm{train},t}(x)
    =
    \sum_{j=1}^{K_{\mathrm{train}}}
    W_{j,\mathrm{train}}(x,t)\nabla\log p_t^{\Psi_j}(x).
    \label{eq:finite_bank_mixture_score}
\end{equation}
We consider three approximations to this score, all constructed
solely from the reference models and therefore independent of the
validation models.

The first fits $\nabla\log q_{\mathrm{train},t}$ directly by
denoising score matching on samples from a uniformly chosen
reference model:
\begin{equation}
    \hat s_{B,t}
    \in
    \argmin_s
    \E_{\substack{J\sim\operatorname{Unif}[K_{\mathrm{train}}],
            \\ \hat X\sim\Psi_J,
    \epsilon\sim\mathcal N(0,I) \\ \hat X_t=\alpha_t\hat X+\sigma_t\epsilon}}
    \|
    s(\hat X_t)
    -
    (\alpha_t\hat X-\hat X_t)/\sigma_t^2
    \|^2.
    \label{eq:direct_estimator_reference_objective}
\end{equation}
With unlimited generated data and sufficient capacity, the minimizer
is $\nabla\log q_{\mathrm{train},t}$. We call this the
\emph{distilled ensemble score} estimator.

The second estimator learns the weights in
\eqref{eq:finite_bank_posterior_weights}. We train a time conditioned
$K_{\mathrm{train}}$ class classifier on labelled samples from the training
models after forward noising. With equal class frequencies, the population
minimizer of the cross entropy loss is $W_{j,\mathrm{train}}(x,t)$.
Substitution into \eqref{eq:finite_bank_mixture_score} gives the classifier
mixture score. We call this the \emph{model-based classifier} estimator,
since it requires training an additional network.

The third estimator obtains the same weights from the training model
densities. For each training model, the diffusion density identity of
\citet{kong2023information} gives
\begin{equation}
    \log p_t^{\Psi_j}(x)
    =
    \log p_{G_j,t}(x)
    +
    \frac{1}{2}\int_0^\infty
    \left[
        m_{G_j,t}(x;\gamma)-m_{j,t}(x;\gamma)
    \right]d\gamma,
    \label{eq:finite_bank_diffusion_density_identity}
\end{equation}
where $G_j$ is a moment matched Gaussian reference and
$m_{j,t}(x;\gamma)$ is the mean squared reconstruction error for a
Gaussian channel of signal to noise ratio $\gamma$ starting at
$(x,t)$. Tweedie's formula expresses this reconstruction through the
score of $\Psi_j$ at the later channel time. We evaluate
\eqref{eq:finite_bank_diffusion_density_identity} by quadrature and
Monte Carlo with shared random numbers, then apply a softmax over the
estimated log densities. This estimator requires score evaluations
from the training models but no additional network, and we call it the
\emph{model-free classifier} estimator.

All three approaches can be used for $\hat s_{B,t}$ in
\eqref{eq:general_two_bank_rate_bound}. By
Corollary~\ref{cor:general_two_bank_upper_bound}, any additional
error from using finitely many reference models or from score estimation
contributes a non-negative term
to the expected information-loss rate. The main-text figures use the
model-free classifier estimator; the model-based classifier and
distilled estimators are evaluated on the same noised validation samples.

\begin{corollary}
    \label{cor:general_two_bank_upper_bound}
    Let $\hat s_{B,t}$ be any square-integrable score approximation
    constructed from the reference models $B$ independently of the
    evaluation model $\Phi$. Then
    \begin{equation}
        -\frac{g(t)^2}{2}
        \E_{B,\Phi,\,X_t\sim p_t^\Phi}
        \|s_t^\Phi(X_t)-\hat s_{B,t}(X_t)\|^2
        \leq
        \dot I_\model(t).
        \label{eq:general_two_bank_rate_bound}
    \end{equation}
    Integrating from $t$ to $\infty$ therefore gives an upper bound
    on $I_\model(t)$.
\end{corollary}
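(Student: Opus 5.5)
The bound follows by the same orthogonal-decomposition argument as Proposition~\ref{prop:two_bank_upper_bound}, with $s_{B,t}$ replaced by an arbitrary reference-model-measurable approximation $\hat s_{B,t}$. Fix $t$ and write
\begin{equation*}
    s_t^\Phi(X_t)-\hat s_{B,t}(X_t)
    =
    \bigl(s_t^\Phi(X_t)-\bar s_t(X_t)\bigr)
    +
    \bigl(\bar s_t(X_t)-\hat s_{B,t}(X_t)\bigr).
\end{equation*}
Square and take the expectation over $(B,\Phi,X_t)$ with $X_t\sim p_t^\Phi$. The key structural fact is that $B$, and hence $\hat s_{B,t}$, is independent of $(\Phi,X_t)$. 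Therefore, conditional on $X_t=x$ and $B$, the second bracket is a fixed vector. The first bracket has conditional mean
\begin{equation*}
    \E_{\Phi\mid X_t=x}\bigl[s_t^\Phi(x)\bigr]-\bar s_t(x)=0,
\end{equation*}
by the posterior-mean identity \eqref{eq:mixture_score_expectation}. So the cross term vanishes after conditioning on $(B,X_t)$ and applying the tower property.

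This gives the Pythagorean identity
\begin{equation*}
    \E\|s_t^\Phi-\hat s_{B,t}\|^2
    =
    \E\|s_t^\Phi-\bar s_t\|^2
    +
    \E_B\E_{X_t\sim\bar p_t}\|\bar s_t-\hat s_{B,t}\|^2,
\end{equation*}
where the marginal law of $X_t$ is $\bar p_t$. The second term is nonnegative. Multiplying by $-g(t)^2/2$ and invoking Proposition~\ref{prop:model_information_rate_score_based} to identify the first term with $\dot I_\model(t)$ yields \eqref{eq:general_two_bank_rate_bound}. Note that no claim is made that the left-hand side is the derivative of any KL divergence, so no Fokker--Planck argument is needed; only the inequality between rates is used.

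For the integrated statement, integrate the rate inequality over $[t,\infty)$, or over $[t,T]$ with $\alpha_T=0$. Since $I_\model(s)\to0$ as $s\to\infty$ (respectively $I_\model(T)=0$), we get
\begin{equation*}
    I_\model(t)=-\int_t^\infty\dot I_\model(s)\,ds
    \le
    \int_t^\infty \frac{g(s)^2}{2}\,\E\|s_s^\Phi-\hat s_{B,s}\|^2\,ds.
\end{equation*}

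The main obstacle is the measure-theoretic justification of the cross-term cancellation. It requires the cross term to be integrable, which holds by Cauchy--Schwarz: $s^\Phi_t$ has finite second moment under the standing assumptions (finite Fisher information of Gaussian-smoothed laws with finite second moment), $\bar s_t$ has finite second moment by Jensen applied to the posterior mean, and $\hat s_{B,t}$ is square-integrable by hypothesis. It also requires the posterior-mean identity to hold for $\bar p_t$-almost every $x$. I would check that identity by differentiating $\bar p_t=\int p_t^\phi\,\Pi(d\phi)$ under the integral, which the standing assumptions permit.
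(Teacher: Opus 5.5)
Your proposal is correct and is the paper's argument: the same orthogonal decomposition of $s_t^\Phi-\hat s_{B,t}$ around $\bar s_t$, with the cross term removed by the posterior-mean identity together with the independence of $B$ from $(\Phi,X_t)$, leaving the nonnegative remainder $\E_B\E_{X_t\sim\bar p_t}\|\bar s_t-\hat s_{B,t}\|^2$. Your integrability remarks (Cauchy--Schwarz, using the finite Fisher information of Gaussian-smoothed laws) and the explicit integration over $[t,\infty)$ only fill in details that the paper leaves implicit.
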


\begin{proof}
    The same orthogonal decomposition gives
    \begin{align}
        &\E_{B,\Phi,\,X_t\sim p_t^\Phi}
        \|s_t^\Phi(X_t)-\hat s_{B,t}(X_t)\|^2 \notag\\
        &\qquad=
        \E_{\Phi,\,X_t\sim p_t^\Phi}
        \|s_t^\Phi(X_t)-\bar s_t(X_t)\|^2
        +
        \E_B\E_{X_t\sim\bar p_t}
        \|\bar s_t(X_t)-\hat s_{B,t}(X_t)\|^2,
    \end{align}
    and the second term is non-negative.
\end{proof}

\subsection{Model Information via Integration}
\label{sec:model_information_integration}
Let \(t_{\min}=t_0<t_1<\cdots<t_R=T\) be an evaluation grid. We use the
trapezoidal estimate in the log signal-to-noise ratio
\(\lambda_t=\log(\alpha_t^2/\sigma_t^2)\), which decreases in \(t\):
\begin{equation}
    \hat I_\model(t_{\min})
    :=
    -\sum_{r=0}^{R-1}
    \frac{\lambda_{t_r}-\lambda_{t_{r+1}}}{2}
    \left[
        \frac{\widehat{\dot I}_\model(t_r)}{|\dot\lambda_{t_r}|}
        +
        \frac{\widehat{\dot I}_\model(t_{r+1})}{|\dot\lambda_{t_{r+1}}|}
    \right].
    \label{eq:finite_interval_information_estimate}
\end{equation}
With exact scores and exact integration, this quantity equals
$I_\model(t_{\min})-I_\model(T)$. The terminal term vanishes when
$\alpha_T=0$ and is negligible when $\sigma_T / \alpha_T$ is large. Our
variance-exploding (EDM) schedule approaches this
condition at its terminal endpoint.

The clean-sample information $I_\model(0)$ can be infinite when the
learned distributions contain model-specific atoms. We therefore take
a positive $t_{\min}$ as the lower bound of the interval, hence estimating $I_\model(t_{\min})$. In the image experiments, we
use 25 points spaced uniformly in $t\in[0.05,0.99]$. Every reported estimate of model information is
\eqref{eq:finite_interval_information_estimate} with $t_{\min}=0.05$ and
$T=0.99$, which we write as $\hat I_\model(P,N)$ for models with $P$
parameters trained on $N$ samples.

For the numerical estimation of model information, the main
approximation errors are the error from finitely many reference models,
the score-estimation error
described above, and trapezoidal quadrature error. Interpreting
\eqref{eq:finite_interval_information_estimate} as an estimate of
$I_\model(t_{\min})$ additionally incurs the terminal remainder $I_\model(T)$.

\section{Properties of Model Information}
\label{app:properties_model_information}

This appendix collects descriptions, proofs, and additional details
regarding structural
properties of model information. We first prove a
posterior-score identity and the associated bias-variance decompositions in
score space and distribution space. We then derive information-theoretic
bounds showing that small model information limits the discrepancy between
model-dependent observables under an individual trained model and under the
ensemble model, with memorization diagnostics as a particular case. Finally,
we discuss how the same construction extends beyond dataset-induced learned
models to other choices of the random hypothesis $\Phi$.

\subsection{Bias-variance decompositions}
\label{app:bias_variance_decompositions}

We first prove the posterior-score identity, 
which states that the ensemble score is the posterior mean of the
individual learned-model scores after observing the noised sample.

\begin{proposition}
    \label{prop:posterior_score_identity}
    Let $\Phi\sim\Pi$ be the random learned model, let $\hat X_t\mid
    \Phi=\phi\sim p_t^\phi$, let $\bar p_t(x)=\mathbb
    E_{\Phi\sim\Pi}[p_t^\Phi(x)]$, and let $\Pi_t(\cdot\mid x)$ be the
    law of $\Phi$ given $\hat X_t=x$. Assume that $\bar p_t(x)>0$ and
    that differentiation under the integral defining $\bar p_t$ is
    justified. Then
    \begin{equation}
        \bar s_t(x)
        =
        \mathbb E_{\Phi\sim\Pi_t(\cdot\mid x)}
        \left[
            s_\Phi(x,t)
        \right].
        \label{eq:app_posterior_score_mean_identity}
    \end{equation}
    Consequently, $\bar s_t(x)$ is the Bayes-optimal predictor of
    $s_\Phi(x,t)$ under squared error conditional on $\hat X_t=x$.
\end{proposition}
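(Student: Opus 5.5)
We obtain the identity by differentiating the mixture density under the integral sign and then rewriting the result with Bayes' rule. Throughout, $s_\phi(x,t)=\nabla\log p_t^\phi(x)$ and $\bar s_t=\nabla\log\bar p_t$.

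\textbf{Step 1: differentiate the mixture.} For $t>0$ with $\sigma_t>0$, each $p_t^\phi$ is smooth and strictly positive, as noted before Proposition~\ref{prop:model_information_rate_score_based}. By the assumed justification of differentiation under the integral in \eqref{eq:noisy_marginal_law},
\begin{equation*}
\nabla\bar p_t(x)=\int \nabla p_t^\phi(x)\,\Pi(d\phi)=\int p_t^\phi(x)\,s_\phi(x,t)\,\Pi(d\phi).
\end{equation*}

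\textbf{Step 2: divide and apply Bayes' rule.} Divide by $\bar p_t(x)>0$. The joint law of $(\Phi,\hat X_t)$ is $\Pi(d\phi)\,p_t^\phi(x)\,dx$, so the conditional law of $\Phi$ given $\hat X_t=x$ is
\begin{equation*}
\Pi_t(d\phi\mid x)=\frac{p_t^\phi(x)\,\Pi(d\phi)}{\bar p_t(x)}.
\end{equation*}
This gives a version of the regular conditional distribution, defined for every $x$ with $\bar p_t(x)>0$ by the explicit density. Substituting into Step 1,
\begin{equation*}
\bar s_t(x)=\frac{\nabla\bar p_t(x)}{\bar p_t(x)}=\int s_\phi(x,t)\,\Pi_t(d\phi\mid x),
\end{equation*}
which is \eqref{eq:app_posterior_score_mean_identity}.

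\textbf{Step 3: Bayes optimality.} Fix $x$ and let $c\in\mathbb{R}^d$ be any candidate prediction of $s_\Phi(x,t)$. Expanding around the posterior mean,
\begin{equation*}
\mathbb E_{\Pi_t(\cdot\mid x)}\|s_\Phi(x,t)-c\|^2=\mathbb E_{\Pi_t(\cdot\mid x)}\|s_\Phi(x,t)-\bar s_t(x)\|^2+\|\bar s_t(x)-c\|^2.
\end{equation*}
The cross term vanishes by Step 2, because the posterior mean of $s_\Phi(x,t)-\bar s_t(x)$ is zero. Hence $c=\bar s_t(x)$ minimizes the conditional squared error. The same expansion with $c=\nabla\log p_t^{\phi_\star}(x)$ yields Proposition~\ref{thm:model_information_bias_variance} directly.

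\textbf{Main obstacle.} The only delicate points are integrability. Step 1 rests entirely on the stated hypothesis that differentiation under the integral is justified. Step 3 needs the posterior second moment of $s_\Phi(x,t)$ to be finite, which we will add as a standing assumption (``whenever the relevant conditional expectations are well-defined''). We will also note that the representation of $\Pi_t$ in Step 2 should be used for every $x$ with $\bar p_t(x)>0$, rather than an arbitrary almost-everywhere version of the conditional law, since the identity is claimed pointwise.
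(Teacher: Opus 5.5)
Your proof is correct and follows essentially the same route as the paper: differentiate the mixture under the integral, divide by $\bar p_t(x)$, and recognize the Bayes posterior $p_t^\phi(x)\,\Pi(d\phi)/\bar p_t(x)$. The only difference is in the Bayes-optimality step: you prove it with the explicit orthogonal expansion around the posterior mean, while the paper cites the standard fact that the conditional expectation minimizes conditional mean-squared error.
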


\begin{proof}
    By Bayes' rule, the posterior distribution over learned models
    after observing $\hat X_t=x$ is
    \begin{equation}
        \Pi_t(d\phi\mid x)
        =
        \frac{p_t^\phi(x)\Pi(d\phi)}{\bar p_t(x)}.
    \end{equation}
    Using the mixture identity $\bar p_t(x)=\int
    p_t^\phi(x)\Pi(d\phi)$ and differentiating under the integral
    sign, we obtain
    \begin{equation}
        \nabla \bar p_t(x)
        =
        \int \nabla p_t^\phi(x)\Pi(d\phi)
        =
        \int p_t^\phi(x)s_\phi(x,t)\Pi(d\phi).
    \end{equation}
    Dividing by $\bar p_t(x)$ gives
    \begin{equation}
        \nabla \log \bar p_t(x)
        =
        \int s_\phi(x,t)\frac{p_t^\phi(x)\Pi(d\phi)}{\bar p_t(x)}
        =
        \int s_\phi(x,t)\Pi_t(d\phi\mid x).
    \end{equation}
    Therefore,
    \begin{equation}
        \bar s_t(x)
        =
        \mathbb E_{\Phi\sim\Pi_t(\cdot\mid x)}
        \left[
            s_\Phi(x,t)
        \right].
    \end{equation}
    The Bayes-optimality statement follows from the standard
    characterization of conditional expectation as the minimizer of
    conditional mean-squared error:
    \begin{equation}
        \mathbb E_{\Phi\sim\Pi_t(\cdot\mid x)}
        \left[
            s_\Phi(x,t)
        \right]
        =
        \arg\min_{a\in\mathbb R^d}
        \mathbb E_{\Phi\sim\Pi_t(\cdot\mid x)}
        \left[
            \|s_\Phi(x,t)-a\|^2
        \right].
    \end{equation}
\end{proof}

We next prove a bias--variance decomposition mirroring that of
\citet{kadkhodaie2023generalization}.
The first decomposition
is a score-space identity, while the second is a distributional identity in
which the model information appears as the variance-like term.

\begin{proposition}
    \label{prop:app_model_information_bias_variance}
    For every fixed $x$ such that the relevant conditional
    expectations are well-defined,
    \begin{equation}
        \mathbb E_{\Phi\sim\Pi_t(\cdot\mid x)}
        [
            \|s_\Phi(x,t)-s_t^\star(x)\|^2
        ]
        =
        \|\bar s_t(x)-s_t^\star(x)\|^2
        +
        \mathbb E_{\Phi\sim\Pi_t(\cdot\mid x)}
        [
            \|s_\Phi(x,t)-\bar s_t(x)\|^2
        ].
        \label{eq:app_score_bias_variance_decomposition}
    \end{equation}
    Moreover,
    \begin{equation}
        \mathbb E_{\Phi\sim\Pi}
        [
            D_{\mathrm{KL}}(p_t^\Phi\|p_t^{\phi_\star})
        ]
        =
        I(\Phi;\hat X_t)
        +
        D_{\mathrm{KL}}(\bar p_t\|p_t^{\phi_\star}).
        \label{eq:app_kl_bias_variance_decomposition}
    \end{equation}
\end{proposition}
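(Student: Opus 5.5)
The plan splits along the two displayed identities. For the score-space identity \eqref{eq:app_score_bias_variance_decomposition}, fix $x$ and work entirely under the posterior $\Pi_t(\cdot\mid x)$. Write
\[
s_\Phi(x,t)-s_t^\star(x)=\bigl(s_\Phi(x,t)-\bar s_t(x)\bigr)+\bigl(\bar s_t(x)-s_t^\star(x)\bigr),
\]
square it, and take the conditional expectation. The second bracket is deterministic given $x$, since $s_t^\star$ does not depend on $\Phi$. The cross term is therefore $2\langle \mathbb E_{\Pi_t(\cdot\mid x)}[s_\Phi(x,t)]-\bar s_t(x),\,\bar s_t(x)-s_t^\star(x)\rangle$. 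This vanishes by Proposition~\ref{prop:posterior_score_identity}, which identifies $\bar s_t(x)$ as exactly this posterior mean. The only hypothesis needed is that $s_\Phi(x,t)$ is square-integrable under the posterior, which is what ``well-defined'' grants.

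For the distributional identity \eqref{eq:app_kl_bias_variance_decomposition}, the plan is the standard ``compensation'' identity for mutual information. For each $\phi$, split the log-ratio through the mixture:
\[
\log\frac{p_t^\phi}{p_t^{\phi_\star}}=\log\frac{p_t^\phi}{\bar p_t}+\log\frac{\bar p_t}{p_t^{\phi_\star}}.
\]
Integrating against $p_t^\phi$ and then averaging over $\phi\sim\Pi$, the first term becomes $\int D_{\mathrm{KL}}(p_t^\phi\|\bar p_t)\,\Pi(d\phi)=I(\Phi;\hat X_t)$, by the mixture representation \eqref{eq:appendix_mutual_information_kl}. For the second term, Fubini gives $\int\!\int p_t^\phi(x)\log\frac{\bar p_t(x)}{p_t^{\phi_\star}(x)}\,dx\,\Pi(d\phi)=\int \bar p_t\log\frac{\bar p_t}{p_t^{\phi_\star}}\,dx=D_{\mathrm{KL}}(\bar p_t\|p_t^{\phi_\star})$.

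The main obstacle is justifying the splitting and the interchange of integrals when some of these quantities might be infinite or the integrands not absolutely integrable. For $t>0$ all densities are smooth and strictly positive, so the log-ratios are finite pointwise. The integrals can then be handled as follows:
\begin{itemize}
\item If $\mathbb E_\Pi D_{\mathrm{KL}}(p_t^\Phi\|p_t^{\phi_\star})<\infty$, first show the two right-hand terms are finite.
\item The second term, $D_{\mathrm{KL}}(\bar p_t\|p_t^{\phi_\star})$, is at most $\mathbb E_\Pi D_{\mathrm{KL}}(p_t^\Phi\|p_t^{\phi_\star})$ by convexity of KL, so it is finite.
\item Then split each integrand into positive and negative parts. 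The negative parts are controlled because $\int p\log(p/q)^-\le 1$ for densities, which makes Tonelli applicable.
\item If the left side is infinite, the identity reads $\infty=\infty$. This holds because $I(\Phi;\hat X_t)$ and $D_{\mathrm{KL}}(\bar p_t\|p_t^{\phi_\star})$ are both nonnegative and their sum dominates the left side.
\end{itemize}
Alternatively, the statement can simply be restricted to the case where all terms are finite, consistent with the ``well-defined'' proviso.
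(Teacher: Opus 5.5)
Your proposal is correct and follows the paper's proof step for step. You use the same orthogonal split of $s_\Phi(x,t)-s_t^\star(x)$, with the cross term killed by Proposition~\ref{prop:posterior_score_identity}, and the same splitting of $\log(p_t^\phi/p_t^{\phi_\star})$ through $\bar p_t$, followed by Fubini with $\mathbb E_{\Phi\sim\Pi}[p_t^\Phi]=\bar p_t$.

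Your integrability discussion goes beyond the paper, which computes formally. One small point: the ``$\infty=\infty$'' bullet is circular as phrased, since ``their sum dominates the left side'' is the identity itself. No case split is needed, though, because your positive/negative-part argument already gives the identity in $[0,\infty]$ directly. That argument combines the bound $\int p\,(\log(p/q))^-\le 1$, the fact that the negative part of $p_t^\phi\log(\bar p_t/p_t^{\phi_\star})$ is integrable for $\Pi$-a.e.\ $\phi$, and Tonelli.
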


\begin{proof}
    For the score-space decomposition, fix $x$ and write
    \begin{equation}
        s_\Phi(x,t)-s_t^\star(x)
        =
        \left(s_\Phi(x,t)-\bar s_t(x)\right)
        +
        \left(\bar s_t(x)-s_t^\star(x)\right).
    \end{equation}
    Taking squared norms and averaging over $\Phi\mid \hat X_t=x$ gives
    \begin{align}
        \mathbb E_{\Phi\sim\Pi_t(\cdot\mid x)}
        \left[
            \|s_\Phi(x,t)-s_t^\star(x)\|^2
        \right]
        &=
        \mathbb E_{\Phi\sim\Pi_t(\cdot\mid x)}
        \left[
            \|s_\Phi(x,t)-\bar s_t(x)\|^2
        \right]
        +
        \|\bar s_t(x)-s_t^\star(x)\|^2 \nonumber \\
        &\quad+
        2
        \left\langle
        \mathbb E_{\Phi\sim\Pi_t(\cdot\mid x)}
        \left[
            s_\Phi(x,t)-\bar s_t(x)
        \right],
        \bar s_t(x)-s_t^\star(x)
        \right\rangle.
    \end{align}
    The cross term vanishes by
    Proposition~\ref{prop:posterior_score_identity}, since $\mathbb
    E_{\Phi\sim\Pi_t(\cdot\mid x)}[s_\Phi(x,t)-\bar s_t(x)]=0$. This proves
    \eqref{eq:app_score_bias_variance_decomposition}.

    For the distributional decomposition, decompose the logarithm
    inside the KL divergence as
    \begin{equation}
        \log \frac{p_t^\Phi(x)}{p_t^{\phi_\star}(x)}
        =
        \log \frac{p_t^\Phi(x)}{\bar p_t(x)}
        +
        \log \frac{\bar p_t(x)}{p_t^{\phi_\star}(x)}.
    \end{equation}
    Therefore,
    \begin{align}
        \mathbb E_{\Phi\sim\Pi}
        [
            D_{\mathrm{KL}}(p_t^\Phi\|p_t^{\phi_\star})
        ]
        &=
        \mathbb E_{\Phi\sim\Pi}
        \int p_t^\Phi(x)
        \log \frac{p_t^\Phi(x)}{\bar p_t(x)}
        dx \nonumber \\
        &\quad+
        \mathbb E_{\Phi\sim\Pi}
        \int p_t^\Phi(x)
        \log \frac{\bar p_t(x)}{p_t^{\phi_\star}(x)}
        dx.
    \end{align}
    The first term is the mutual information between $\Phi$ and $\hat X_t$:
    \begin{equation}
        \mathbb E_{\Phi\sim\Pi}
        \int p_t^\Phi(x)
        \log \frac{p_t^\Phi(x)}{\bar p_t(x)}
        dx
        =
        I(\Phi;\hat X_t).
    \end{equation}
    For the second term, using $\mathbb
    E_{\Phi\sim\Pi}[p_t^\Phi(x)]=\bar p_t(x)$ gives
    \begin{equation}
        \mathbb E_{\Phi\sim\Pi}
        \int p_t^\Phi(x)
        \log \frac{\bar p_t(x)}{p_t^{\phi_\star}(x)}
        dx
        =
        \int \bar p_t(x)
        \log \frac{\bar p_t(x)}{p_t^{\phi_\star}(x)}
        dx
        =
        D_{\mathrm{KL}}\left(\bar p_t\,\|\,p_t^{\phi_\star}\right).
    \end{equation}
    Combining the two terms proves
    \eqref{eq:app_kl_bias_variance_decomposition}.
\end{proof}

In score space, the term $\mathbb E_{\Phi\sim\Pi_t(\cdot\mid
x)}[\|s_\Phi(x,t)-\bar s_t(x)\|^2]$ measures the posterior dispersion of
trained-model scores around the ensemble score, while $\|\bar
s_t(x)-s_t^\star(x)\|^2$ measures the systematic discrepancy between the
ensemble score and the population score. In distribution space, $I(\Phi;\hat
X_t)$ measures the part of the average KL discrepancy due to the
distinguishability of individual learned models from the ensemble, while
$D_{\mathrm{KL}}(\bar p_t\,\|\,p_t^{\phi_\star})$ measures the discrepancy
between the ensemble law and the population law.

We may connect the two terms in
\eqref{eq:app_score_bias_variance_decomposition} to a conditional bias-variance
decomposition, as discussed by
\citet{kadkhodaie2023generalization}.
The first term $\|\nabla \log \bar
p_t(x)-\nabla \log p_t^{\phi_{\star}}(x)\|^2$ measures the systematic
discrepancy between
the ensemble score and the population score, and therefore
corresponds to a bias term.
The second term $\mathbb E_{\Phi \sim \Pi \mid
\{\hat{X}_t = x\}}
[
    \|\nabla \log p_t^{\Phi}(x)-\nabla \log \bar{p}_t(x)\|^2
]$ measures the dispersion of trained model scores around their
posterior average, and therefore corresponds to a variance term.

\subsection{Observable and memorization bounds}
\label{app:observable_memorization_bounds}

We next prove that model information controls the extent to which
model-dependent observables can fluctuate between an individual learned model
and the ensemble model. We allow the observable to additionally depend on side
information $Z$ jointly distributed with $\Phi$, provided that
$Z\to\Phi\to\hat X_t$ forms a Markov chain.

\begin{proposition}
    \label{prop:subgaussian_observable_stability}
    Let $(Z,\Phi)$ be jointly distributed, with marginal $\Phi\sim\Pi$,
    and suppose that $Z\to\Phi\to\hat X_t$ forms a Markov chain. Let
    $(z,\phi)\mapsto f_{z,\phi}:\mathcal X\to\mathbb R$ be a measurable
    assignment of observables. Assume that, for every $(z,\phi)$ in the
    support of the joint law, the centered observable
    $f_{z,\phi}-\mathbb E_{X_t\sim\bar p_t}[f_{z,\phi}]$ is
    $\sigma^2$-sub-Gaussian under $\bar p_t$, uniformly in $(z,\phi)$,
    meaning that for all $\lambda\in\mathbb R$,
    \begin{equation}
        \log \mathbb E_{X_t\sim\bar p_t}
        \left[
            \exp\left(
                \lambda\left(f_{z,\phi}(X_t)-\mathbb E_{X_t\sim\bar
                p_t}[f_{z,\phi}]\right)
            \right)
        \right]
        \leq
        \frac{\sigma^2\lambda^2}{2}.
    \end{equation}
    Then
    \begin{equation}
        \mathbb E_{Z,\Phi}
        \left[
            \left|
            \mathbb E_{X_t\sim p_t^\Phi}[f_{Z,\Phi}]
            -
            \mathbb E_{X_t\sim\bar p_t}[f_{Z,\Phi}]
            \right|
        \right]
        \leq
        \sqrt{2\sigma^2 I_m(t)}.
        \label{eq:subgaussian_observable_stability}
    \end{equation}
\end{proposition}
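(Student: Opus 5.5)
The plan is the standard Donsker--Varadhan / sub-Gaussian transportation argument, applied conditionally on $(Z,\Phi)$ and then averaged using the Markov property. Fix a realization $(z,\phi)$ in the support. The Donsker--Varadhan variational formula gives, for every $\lambda\in\mathbb R$ and $P=p_t^\phi$, $Q=\bar p_t$,
\begin{equation}
    \lambda\left(\mathbb E_{p_t^\phi}[f_{z,\phi}]-\mathbb E_{\bar p_t}[f_{z,\phi}]\right)
    \leq
    D_{\mathrm{KL}}(p_t^\phi\|\bar p_t)
    +
    \log\mathbb E_{\bar p_t}\left[e^{\lambda(f_{z,\phi}-\mathbb E_{\bar p_t}f_{z,\phi})}\right]
    \leq
    D_{\mathrm{KL}}(p_t^\phi\|\bar p_t)+\frac{\sigma^2\lambda^2}{2}.
\end{equation}
Optimizing over $\lambda$ of either sign gives the pointwise bound
\begin{equation}
    \left|\mathbb E_{p_t^\phi}[f_{z,\phi}]-\mathbb E_{\bar p_t}[f_{z,\phi}]\right|
    \leq
    \sqrt{2\sigma^2 D_{\mathrm{KL}}(p_t^\phi\|\bar p_t)}.
\end{equation}
If the KL divergence is infinite, the bound is trivial.

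The key point is that the bound depends on $z$ only through the observable. The sub-Gaussian constant is uniform in $(z,\phi)$, and the divergence depends only on $\phi$. The Markov condition $Z\to\Phi\to\hat X_t$ is what makes the relevant law of $\hat X_t$ given $(Z,\Phi)$ equal to $p_t^\Phi$, so this pointwise inequality is indeed the right object to average. Substituting $(Z,\Phi)$ and taking expectations, Jensen's inequality for the concave square root gives
\begin{equation}
    \mathbb E_{Z,\Phi}\left[\left|\mathbb E_{p_t^\Phi}[f_{Z,\Phi}]-\mathbb E_{\bar p_t}[f_{Z,\Phi}]\right|\right]
    \leq
    \sqrt{2\sigma^2\,\mathbb E_{\Phi\sim\Pi}\left[D_{\mathrm{KL}}(p_t^\Phi\|\bar p_t)\right]}.
\end{equation}
By the mixture representation \eqref{eq:appendix_mutual_information_kl}, this last expectation is exactly $I_m(t)$, which proves the claim.

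The main point requiring care is measurability and integrability. The map $(z,\phi)\mapsto$ (the expectations of $f_{z,\phi}$) must be jointly measurable so that the outer expectation makes sense. The sub-Gaussian hypothesis guarantees that $f_{z,\phi}$ is integrable under $\bar p_t$. Donsker--Varadhan then ensures integrability under $p_t^\phi$ whenever the divergence is finite, since it applies to the positive and negative parts with small $\lambda$. I would handle this by stating these standing measurability assumptions, as the paper does elsewhere. I would also note that the case $I_m(t)=\infty$ is vacuous.
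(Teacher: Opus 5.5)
Your proposal is correct and matches the paper's proof step for step. Both combine the Donsker--Varadhan inequality with the sub-Gaussian bound, optimize over $\lambda$ (for both $f$ and $-f$) to get the pointwise bound $\sqrt{2\sigma^2 D_{\mathrm{KL}}(p_t^\phi\|\bar p_t)}$, and then average with Jensen for the square root and the identity $\mathbb E_{\Phi\sim\Pi}[D_{\mathrm{KL}}(p_t^\Phi\|\bar p_t)]=I_m(t)$. One minor remark: the inner expectation is defined directly under $p_t^\Phi$, so the Markov condition is never actually used; only the marginal $\Phi\sim\Pi$ is needed to pass from $\mathbb E_{Z,\Phi}$ to $\mathbb E_{\Phi}$, and the Markov condition matters only for interpreting the bound.
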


\begin{proof}
    Fix $(z,\phi)$ and write $P=p_t^\phi$, $Q=\bar p_t$, and
    $f=f_{z,\phi}$. By the variational form of the KL divergence, for
    every $\lambda>0$,
    \begin{equation}
        \lambda\left(\mathbb E_{X_t\sim P}[f]-\mathbb E_{X_t\sim Q}[f]\right)
        \leq
        D_{\mathrm{KL}}(P\|Q)
        +
        \log\mathbb E_{X_t\sim Q}
        \left[
            \exp\left(\lambda(f-\mathbb E_{X_t\sim Q}[f])\right)
        \right].
    \end{equation}
    Using the sub-Gaussian assumption gives
    \begin{equation}
        \mathbb E_{X_t\sim P}[f]-\mathbb E_{X_t\sim Q}[f]
        \leq
        \frac{1}{\lambda}D_{\mathrm{KL}}(P\|Q)
        +
        \frac{\sigma^2\lambda}{2}.
    \end{equation}
    Optimizing over $\lambda>0$ yields
    \begin{equation}
        \mathbb E_{X_t\sim P}[f]-\mathbb E_{X_t\sim Q}[f]
        \leq
        \sqrt{2\sigma^2D_{\mathrm{KL}}(P\|Q)}.
    \end{equation}
    Applying the same argument to $-f$ gives
    \begin{equation}
        \left|
        \mathbb E_{X_t\sim p_t^\phi}[f_{z,\phi}]
        -
        \mathbb E_{X_t\sim\bar p_t}[f_{z,\phi}]
        \right|
        \leq
        \sqrt{2\sigma^2D_{\mathrm{KL}}(p_t^\phi\|\bar p_t)}.
    \end{equation}
    Averaging over $(Z,\Phi)$ and using Jensen's inequality gives
    \begin{align}
        \mathbb E_{Z,\Phi}
        \left[
            \left|
            \mathbb E_{X_t\sim p_t^\Phi}[f_{Z,\Phi}]
            -
            \mathbb E_{X_t\sim\bar p_t}[f_{Z,\Phi}]
            \right|
        \right]
        &\leq
        \sqrt{2\sigma^2}
        \mathbb E_{Z,\Phi}
        \left[
            \sqrt{
                D_{\mathrm{KL}}(p_t^\Phi\|\bar p_t)
            }
        \right] \nonumber \\
        &=
        \sqrt{2\sigma^2}
        \mathbb E_{\Phi\sim\Pi}
        \left[
            \sqrt{
                D_{\mathrm{KL}}(p_t^\Phi\|\bar p_t)
            }
        \right] \nonumber \\
        &\leq
        \sqrt{
            2\sigma^2
            \mathbb E_{\Phi\sim\Pi}
            \left[
                D_{\mathrm{KL}}(p_t^\Phi\|\bar p_t)
            \right]
        }.
    \end{align}
    Since
    \begin{equation}
        \mathbb E_{\Phi\sim\Pi}
        \left[
            D_{\mathrm{KL}}(p_t^\Phi\|\bar p_t)
        \right]
        =
        I(\Phi;\hat X_t)
        =
        I_m(t),
    \end{equation}
    we obtain the claim.
\end{proof}

A useful special case is obtained when the observables are uniformly bounded.

\begin{corollary}
    \label{cor:bounded_observable_stability}
    Under the assumptions above, let
    $(z,\phi)\mapsto f_{z,\phi}:\mathcal X\to\mathbb R$ be a measurable
    assignment of observables such that $f_{z,\phi}(x)\in[a,b]$ for all
    $x$ and all $(z,\phi)$ in the support of the joint law. Then
    \begin{equation}
        \mathbb E_{Z,\Phi}
        \left[
            \left|
            \mathbb E_{X_t\sim p_t^\Phi}[f_{Z,\Phi}]
            -
            \mathbb E_{X_t\sim\bar p_t}[f_{Z,\Phi}]
            \right|
        \right]
        \leq
        (b-a)\sqrt{1-\exp(-I_m(t))}.
        \label{eq:bounded_observable_stability}
    \end{equation}
\end{corollary}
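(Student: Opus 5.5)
Fix $(z,\phi)$ and write $P=p_t^\phi$, $Q=\bar p_t$, $f=f_{z,\phi}$. The plan is to bound the pointwise discrepancy by a total variation distance, control that distance by a divergence bound that depends only on $D_{\mathrm{KL}}(P\|Q)$, and then average over $(Z,\Phi)$ with a concavity argument. This parallels the proof of Proposition~\ref{prop:subgaussian_observable_stability}, with Bretagnolle--Huber replacing the sub-Gaussian Donsker--Varadhan step.

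\textbf{Pointwise bound.} Since $f$ takes values in $[a,b]$, shifting by $(a+b)/2$ changes neither expectation difference. Hence
\begin{equation}
\left|\mathbb E_P[f]-\mathbb E_Q[f]\right|\le \tfrac{b-a}{2}\,\|P-Q\|_1=(b-a)\,\mathrm{TV}(P,Q).
\end{equation}
Equivalently, one can take $\sup$ over $f$ with range $[a,b]$ and use the layer-cake representation $f=a+\int_a^b \mathbf 1\{f>s\}\,ds$, which gives $|\mathbb E_P f-\mathbb E_Q f|\le (b-a)\sup_A|P(A)-Q(A)|$. Next I will apply the Bretagnolle--Huber inequality $\mathrm{TV}(P,Q)\le\sqrt{1-\exp(-D_{\mathrm{KL}}(P\|Q))}$ to get
\begin{equation}
\left|\mathbb E_{p_t^\phi}[f_{z,\phi}]-\mathbb E_{\bar p_t}[f_{z,\phi}]\right|\le (b-a)\,h\!\left(D_{\mathrm{KL}}(p_t^\phi\|\bar p_t)\right),
\qquad h(u):=\sqrt{1-e^{-u}}.
\end{equation}
The right-hand side depends on $(z,\phi)$ only through $\phi$. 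This is exactly where the Markov assumption $Z\to\Phi\to\hat X_t$ enters: $p_t^\phi$ does not depend on $z$, and $\bar p_t$ is the $\Pi$-mixture.

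\textbf{Averaging.} Taking $\mathbb E_{Z,\Phi}$ reduces the left side of \eqref{eq:bounded_observable_stability} to at most $(b-a)\,\mathbb E_{\Phi\sim\Pi}[h(D_{\mathrm{KL}}(p_t^\Phi\|\bar p_t))]$. I then need $h$ to be concave and nondecreasing on $[0,\infty]$, with $h(\infty)=1$. It is nondecreasing since $1-e^{-u}$ is increasing. For concavity, $1-e^{-u}$ is concave and nonnegative, and the square root is concave and nondecreasing, so the composition is concave. Jensen then gives $\mathbb E[h(D)]\le h(\mathbb E D)=h(I_m(t))$, using the identity $\mathbb E_{\Phi}[D_{\mathrm{KL}}(p_t^\Phi\|\bar p_t)]=I_m(t)$ from \eqref{eq:appendix_mutual_information_kl}. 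This yields the claim. If $I_m(t)=\infty$ the bound reads $b-a$, which holds trivially.

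\textbf{Main obstacle.} The key step is citing or proving Bretagnolle--Huber in the form $\mathrm{TV}\le\sqrt{1-e^{-D}}$. If I prove it, I would write $\int\min(p,q)\,\int\max(p,q)\ge(\int\sqrt{pq})^2$ by Cauchy--Schwarz. Next I use $(\int\sqrt{pq})^2=\exp(2\log\int p\sqrt{q/p})\ge \exp(-D_{\mathrm{KL}}(P\|Q))$ by Jensen. Finally, $\int\min\cdot\int\max=(1-\mathrm{TV})(1+\mathrm{TV})=1-\mathrm{TV}^2$. Measurability of $(z,\phi)\mapsto$ these quantities is taken as part of the hypothesis on the assignment. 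Proposition~\ref{thm:event_stability_bound} then follows with $f=\mathbf 1_{A_{Z,\Phi}}$, $a=0$, $b=1$.
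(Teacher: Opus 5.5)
Your proposal is correct and is essentially the paper's proof: bound the pointwise discrepancy by $(b-a)\,d_{\mathrm{TV}}(p_t^\phi,\bar p_t)$, apply Bretagnolle--Huber, and average over $(Z,\Phi)$ using Jensen with the concavity of $u\mapsto\sqrt{1-e^{-u}}$ and the identity $\mathbb E_{\Phi\sim\Pi}[D_{\mathrm{KL}}(p_t^\Phi\|\bar p_t)]=I_m(t)$. The extra pieces you supply are sound and fill in steps the paper only cites or leaves implicit: the self-contained proof of Bretagnolle--Huber via Cauchy--Schwarz and Jensen, the explicit concavity check, and the $I_m(t)=\infty$ case.
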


\begin{proof}
    Fix $(z,\phi)$ and write $P=p_t^\phi$, $Q=\bar p_t$, and
    $f=f_{z,\phi}$. Since $f\in[a,b]$,
    \begin{equation}
        \left|
        \mathbb E_{X_t\sim P}[f]
        -
        \mathbb E_{X_t\sim Q}[f]
        \right|
        \leq
        (b-a)d_{\mathrm{TV}}(P,Q).
    \end{equation}
    The Bretagnolle--Huber inequality gives
    \begin{equation}
        d_{\mathrm{TV}}(P,Q)
        \leq
        \sqrt{1-\exp\left(-D_{\mathrm{KL}}(P\|Q)\right)}.
    \end{equation}
    Hence
    \begin{equation}
        \left|
        \mathbb E_{X_t\sim p_t^\phi}[f_{z,\phi}]
        -
        \mathbb E_{X_t\sim\bar p_t}[f_{z,\phi}]
        \right|
        \leq
        (b-a)
        \sqrt{
            1-\exp\left(-D_{\mathrm{KL}}(p_t^\phi\|\bar p_t)\right)
        }.
    \end{equation}
    Averaging over $(Z,\Phi)$ and using Jensen's inequality, together
    with the concavity of $u\mapsto\sqrt{1-e^{-u}}$, gives
    \begin{align}
        \mathbb E_{Z,\Phi}
        \left[
            \left|
            \mathbb E_{X_t\sim p_t^\Phi}[f_{Z,\Phi}]
            -
            \mathbb E_{X_t\sim\bar p_t}[f_{Z,\Phi}]
            \right|
        \right]
        &\leq
        (b-a)
        \sqrt{
            1-\exp\left(
                -\mathbb E_{\Phi\sim\Pi}
                \left[
                    D_{\mathrm{KL}}(p_t^\Phi\|\bar p_t)
                \right]
            \right)
        } \nonumber\\
        &=
        (b-a)\sqrt{1-\exp(-I_m(t))}.
    \end{align}
\end{proof}

Alternatively, applying Pinsker's inequality gives the bound
$(b-a)\sqrt{I_m(t)/2}$ in
Corollary~\ref{cor:bounded_observable_stability}. This is tighter for
small $I_m(t)$, whereas the exponential form remains bounded by $b-a$
for arbitrary model information.

The event-stability bound follows by applying the bounded-observable result to
indicator functions.

\begin{corollary}
    \label{cor:event_stability_bound}
    For any measurable assignment $(z,\phi)\mapsto
    A_{z,\phi}\subseteq\mathcal X$, we have
    \begin{equation}
        \mathbb E_{Z,\Phi}
        \left[
            \left|
            P_t^\Phi(A_{Z,\Phi})
            -
            \bar P_t(A_{Z,\Phi})
            \right|
        \right]
        \leq
        \sqrt{1-\exp(-I_m(t))}.
        \label{eq:app_event_stability_bound}
    \end{equation}
\end{corollary}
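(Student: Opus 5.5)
The plan is to obtain this as a direct specialization of Corollary~\ref{cor:bounded_observable_stability} to indicator observables. Given the measurable assignment $(z,\phi)\mapsto A_{z,\phi}$, define
\begin{equation*}
    f_{z,\phi}(x) := \mathbf 1\{x\in A_{z,\phi}\},
\end{equation*}
which takes values in $[a,b]=[0,1]$ for every $x$ and every $(z,\phi)$. With this choice, $\mathbb E_{X_t\sim p_t^\phi}[f_{z,\phi}] = P_t^\phi(A_{z,\phi})$ and $\mathbb E_{X_t\sim\bar p_t}[f_{z,\phi}] = \bar P_t(A_{z,\phi})$, so the left-hand side of \eqref{eq:bounded_observable_stability} is exactly the left-hand side of \eqref{eq:app_event_stability_bound}. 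The corollary then gives the bound $(b-a)\sqrt{1-\exp(-I_m(t))}=\sqrt{1-\exp(-I_m(t))}$.

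The only point needing verification is that the hypotheses of Corollary~\ref{cor:bounded_observable_stability} hold. First, $(z,\phi,x)\mapsto f_{z,\phi}(x)$ must be jointly measurable, which is what "measurable assignment of sets" means: the set $\{(z,\phi,x): x\in A_{z,\phi}\}$ is measurable in the product space. This ensures that $P_t^\Phi(A_{Z,\Phi})$ and $\bar P_t(A_{Z,\Phi})$ are random variables, by Fubini/Tonelli applied to the densities $p_t^\phi$ and $\bar p_t$. Second, the Markov condition $Z\to\Phi\to\hat X_t$ is inherited verbatim from the standing assumptions.

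I do not expect a real obstacle. The step that carries the content has already been done in Corollary~\ref{cor:bounded_observable_stability}: Bretagnolle--Huber, followed by Jensen's inequality for the concave map $u\mapsto\sqrt{1-e^{-u}}$, together with $\mathbb E_\Phi D_{\mathrm{KL}}(p_t^\Phi\|\bar p_t)=I_m(t)$. The Markov chain is used only implicitly, because the expectation over $Z$ collapses to an expectation over $\Phi$ once the per-$(z,\phi)$ bound depends only on $\phi$.

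For the Pinsker variant mentioned after Proposition~\ref{thm:event_stability_bound}, I would replace Bretagnolle--Huber with Pinsker's inequality in the same chain. This gives $\sqrt{I_m(t)/2}$, again via Jensen's inequality for the concave square root.
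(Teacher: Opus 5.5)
Your proposal is correct and matches the paper's own proof: set $f_{z,\phi}=\mathbf 1_{A_{z,\phi}}\in[0,1]$, identify the two expectations with $P_t^\Phi(A_{Z,\Phi})$ and $\bar P_t(A_{Z,\Phi})$, and invoke Corollary~\ref{cor:bounded_observable_stability} with $b-a=1$. Your remarks on joint measurability and on the Pinsker variant are harmless additions that the paper leaves implicit.
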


\begin{proof}
    Take $f_{Z,\Phi}=\mathbf 1_{A_{Z,\Phi}}$. Then
    $f_{Z,\Phi}\in[0,1]$, and
    \begin{equation}
        \mathbb E_{X_t\sim p_t^\Phi}[f_{Z,\Phi}]
        =
        P_t^\Phi(A_{Z,\Phi}),
        \qquad
        \mathbb E_{X_t\sim\bar p_t}[f_{Z,\Phi}]
        =
        \bar P_t(A_{Z,\Phi}).
    \end{equation}
    The claim follows from Corollary~\ref{cor:bounded_observable_stability}.
\end{proof}

We now specialize this bound to dataset-specific memorization
diagnostics. Let $\mathcal D$ be the random training dataset jointly
inducing the learned model $\Phi$, and let
$A_{\mathrm{mem}}(\mathcal D)\subseteq\mathcal X$ be the set of samples
declared memorized relative to $\mathcal D$ by a chosen diagnostic. For
instance, a nearest-neighbor-ratio diagnostic with threshold
$\tau\in(0,1)$:
\begin{equation}
    A_{\mathrm{mem}}(\mathcal D)
    =
    \left\{
        x\in\mathcal X:
        \frac{
            \|x-x_{(1)}(\mathcal D)\|
        }{
            \|x-x_{(2)}(\mathcal D)\|
        }
        \leq
        \tau
    \right\},
    \label{eq:memorization_set_definition}
\end{equation}
where $x_{(1)}(\mathcal D)$ and $x_{(2)}(\mathcal D)$ denote the nearest
and second-nearest elements of $\mathcal D$ to $x$. Since generation
from the trained model satisfies
$\mathcal D\to\Phi\to\hat X_t$, applying
Corollary~\ref{cor:event_stability_bound} with $Z=\mathcal D$ and
$A_{\mathcal D,\Phi}=A_{\mathrm{mem}}(\mathcal D)$ gives
\begin{equation}
    \mathbb E_{\mathcal D,\Phi}
    \left[
        P_t^\Phi
        \left(
            A_{\mathrm{mem}}(\mathcal D)
        \right)
    \right]
    \leq
    \mathbb E_{\mathcal D}
    \left[
        \bar P_t
        \left(
            A_{\mathrm{mem}}(\mathcal D)
        \right)
    \right]
    +
    \sqrt{1-\exp(-I_m(t))}.
    \label{eq:app_memorization_ratio_bound}
\end{equation}
The left-hand side is the expected memorization ratio of samples
generated by models relative to their own training datasets. The
first term on the right-hand side is an ensemble baseline: it
measures how often the ensemble model generates samples that the same
diagnostic would call memorized relative to the random training
dataset $\mathcal D$. Thus, model information controls the excess
dataset-specific memorization above this baseline. Following the
discussion in Appendix~\ref{app:bias_variance_decompositions}, if the
space of possible learned models has low bias, then the ensemble
model approximates the population distribution, and this baseline
approaches the memorization rate intrinsic to the data distribution.

\subsection{Extensions: alternative hypotheses}
\label{app:extensions_alternative_hypotheses}

We briefly discuss the broader viewpoint from which the model information
construction arose.

The definition of model information does not require
diffusion models: for any random generative mechanism $\Phi$ and any sample
$\hat X\sim \Phi$, the mutual information $I(\Phi;\hat X)$ measures how much
the sample reveals about the mechanism that produced it. Diffusion is useful
here because it supplies a canonical noising channel $\hat X\mapsto \hat X_t$
and turns this quantity into a time-resolved information curve with a
tractable score-based production rate. From this perspective, the starting
point is to view diffusion as a noisy communication channel through which
information about a random source hypothesis $\Phi$ is progressively erased.
The main modeling question is then which hypothesis $\Phi$ one chooses to
track. In this paper, we take $\Phi$ to be the learned model induced by
data set resampling, because this choice turns the resulting mutual information
into a measure of finite-data dependence and, hence, into a measure of
generalization and memorization. Other choices of $\Phi$ may reveal different
structures erased by diffusion, and remain interesting directions for future
work.

Let $\Phi$ be a source hypothesis taking values in a hypothesis space
$\mathcal H$ with prior $\Pi$. Conditional on $\Phi=\phi$, a clean
sample is drawn from a source-specific law $p_0^\phi$, and then
transmitted through the forward Gaussian channel
\begin{equation}
    X_t=\alpha_tX_0+\sigma_t\epsilon,
    \qquad
    \epsilon\sim\mathcal N(0,I).
\end{equation}
Thus the relevant communication chain is
\begin{equation}
    \Phi \;\longrightarrow\; X_0 \;\longrightarrow\; X_t.
\end{equation}
The receiver observes $X_t$ and attempts to infer which source
hypothesis generated the sample. The choice of $\mathcal H$ and $\Pi$
therefore specifies which structure is tracked through the noising channel.

Each hypothesis $\phi$ induces the noised law
\begin{equation}
    p_t^\phi(x_t)
    :=
    \int p_t(x_t\mid x_0)p_0^\phi(x_0)\,dx_0,
\end{equation}
and the corresponding prior-averaged law is
\begin{equation}
    \bar p_t(x_t)
    :=
    \int p_t^\phi(x_t)\,\Pi(d\phi).
\end{equation}
Whenever $\bar p_t(x_t)>0$, Bayes' rule gives
\begin{equation}
    \Pi_t(d\phi\mid x_t)
    =
    \frac{p_t^\phi(x_t)\Pi(d\phi)}{\bar p_t(x_t)}.
    \label{eq:posterior_source_hypothesis}
\end{equation}
The associated hypothesis information is
\begin{equation}
    I(\Phi;X_t)
    =
    \mathbb E_{\Phi\sim\Pi}
    \left[
        D_{\mathrm{KL}}\left(p_t^\Phi\,\|\,\bar p_t\right)
    \right]
    =
    \mathbb E_{X_t\sim\bar p_t}
    \left[
        D_{\mathrm{KL}}\left(\Pi_t(\cdot\mid X_t)\,\|\,\Pi\right)
    \right].
    \label{eq:general_hypothesis_information}
\end{equation}
Hence $I(\Phi;X_t)$ measures how much observing the noised sample
changes the receiver's belief about the source hypothesis.

Whenever the hypothesis-conditioned laws evolve under the same
forward diffusion, the same information-production identity applies.
In the scalar diffusion setting considered in the paper,
\begin{equation}
    \frac{d}{dt}I(\Phi;X_t)
    =
    -\frac{g(t)^2}{2}
    \mathbb E_{\Phi\sim\Pi,\,X_t\sim p_t^\Phi}
    \left[
        \left\|
        \nabla\log p_t^\Phi(X_t)
        -
        \nabla\log\bar p_t(X_t)
        \right\|^2
    \right].
    \label{eq:general_hypothesis_information_rate}
\end{equation}
Thus, the rate identity is common across choices of $\Phi$, while its
interpretation is determined by the hypothesis space. For example,
taking $\Phi=X_0$ measures the information retained about the clean
sample \citep{kong2023information,stancevic2026entropic}; taking
$\Phi$ to be a class or latent semantic variable tracks the emergence
of semantic structure \citep{franzese2025latent,handke2026the};
taking $\Phi$ to be a conditioning variable, such as a text caption,
quantifies the information relating the conditioning signal to the
generated sample \citep{kong2023interpretable}. Training-example
identities provide another choice, whose information content has
recently been related to memorization-generalization transitions
\citep{hunt2026exact}.

\section{A Linear-Gaussian Model}
\label{app:linear_gaussian}

In this section, we study a linear-Gaussian model in which model
information can be characterized explicitly. The purpose of this
example is to provide a
tractable analogue of the scaling behavior observed empirically. When the
number $N$ of training samples is small relative to the dimension $d$,
model information is approximately constant in $N$ and generated samples
remain strongly dependent on the particular training dataset. Around
$N=d$, this dependence decreases rapidly. With sufficiently
many samples, model information enters a generalization regime in which it
decays as $1/N$. An example of the resulting model-information curve is shown
in Figure~\ref{fig:linear_models}.

\begin{figure}[t]
    \centering

    \begin{tikzpicture}
        \begin{axis}[
            width=0.60\linewidth,
            height=0.43\linewidth,
            xmode=log,
            ymode=log,
            xmin=1e-3,
            xmax=1e1,
            ymin=5e-2,
            ymax=2,
            xlabel={$N/d$},
            ylabel={model information per dimension, $I/d$},
            xtick={1e-3,1e-2,1e-1,1,10},
            ytick={1e-2,1e-1,1},
            grid=major,
            grid style={line width=0.1pt, draw=gray!24},
            tick label style={font=\scriptsize},
            label style={font=\scriptsize},
            axis line style={black!65},
            tick style={black!65},
            scale only axis,
            enlargelimits=false,
            trim axis left,
            trim axis right
        ]

        \fill[gray!28]
            (axis cs:1e-3,5e-2)
            rectangle
            (axis cs:1,2);

        \fill[gray!12]
            (axis cs:1,5e-2)
            rectangle
            (axis cs:10,2);

        \addplot[
            black,
            very thick
        ] table[row sep=\\] {
            x          y \\
            0.0010     1.5218 \\
            0.0013     1.5216 \\
            0.0017     1.5214 \\
            0.0022     1.5210 \\
            0.0028     1.5203 \\
            0.0036     1.5192 \\
            0.0046     1.5173 \\
            0.0058     1.5144 \\
            0.0073     1.5106 \\
            0.0089     1.5070 \\
            0.0100     1.5058558 \\
            0.0117478  1.5033520 \\
            0.0138010  1.5004742 \\
            0.0162131  1.4971683 \\
            0.0190467  1.4933726 \\
            0.0223756  1.4890169 \\
            0.0262863  1.4840217 \\
            0.0308806  1.4782966 \\
            0.0362777  1.4717394 \\
            0.0426182  1.4642344 \\
            0.0500668  1.4556512 \\
            0.0588173  1.4458428 \\
            0.0690972  1.4346441 \\
            0.0811737  1.4218702 \\
            0.0953609  1.4073144 \\
            0.112028   1.3907468 \\
            0.131607   1.3719124 \\
            0.154609   1.3505303 \\
            0.181631   1.3262923 \\
            0.213376   1.2988637 \\
            0.250669   1.2678837 \\
            0.294480   1.2329690 \\
            0.345948   1.1937191 \\
            0.406411   1.1497257 \\
            0.477442   1.1005877 \\
            0.560887   1.0459339 \\
            0.658917   0.9854580 \\
            0.774079   0.9189702 \\
            0.909370   0.8464752 \\
            1.06831    0.7682867 \\
            1.25502    0.6851969 \\
            1.47437    0.5987151 \\
            1.73205    0.5113655 \\
            2.03477    0.4269005 \\
            2.39040    0.3499264 \\
            2.80818    0.2842584 \\
            3.29899    0.2309785 \\
            3.87557    0.1886641 \\
            4.55293    0.1550838 \\
            5.34867    0.1282325 \\
            6.28349    0.1065611 \\
            7.38169    0.0889178 \\
            8.67183    0.0744465 \\
            10.0000    0.0637 \\
        };

        \node[
            font=\small\bfseries,
            black!80,
            anchor=north west
        ] at (axis cs:0.0025,1.45) {Memorization};
        
        \node[
            font=\small\bfseries,
            black!80,
            anchor=north east
        ] at (axis cs:11.00,1.45) {Generalization};

        \end{axis}
    \end{tikzpicture}

    \caption{
        Model information per dimension in the linear-Gaussian model in the
        $d\to\infty$ limit, shown as a function of $N/d$. The darker and
        lighter shaded regions indicate the memorization and generalization
        regimes.
    }
    \label{fig:linear_models}
\end{figure}

The connection with neural networks is limited, but we use this
toy model because its model information is analytically tractable.
For example, the trained model does not literally reproduce individual
training examples. Instead, for $N<d$, the learned
covariance retains a strongly dataset-specific subspace, which leads to
divergent model information in the zero-noise limit. The fact that even this
simple mechanism is sufficient to produce a high-information regime, a
capacity-dependent transition, and a low-information scaling regime suggests
that these phenomena may be more general. We therefore hope that this toy
model provides some insight into the qualitative shape of the scaling laws
observed in more complex generative models.

\subsection{Problems Setup}

Let the training dataset be
\begin{equation}
    \mathcal{D}=\{x_\mu\}_{\mu=1}^N,
    \qquad
    x_\mu \overset{\mathrm{i.i.d.}}{\sim} \mathcal N(0,I_d),
\end{equation}
and define the empirical covariance
\begin{equation}
    S_D
    =
    \frac{1}{N}\sum_{\mu=1}^N x_\mu x_\mu^\top
    =
    \frac{1}{N}XX^\top,
\end{equation}
where
\begin{equation}
    X=[x_1,\ldots,x_N]\in\mathbb R^{d\times N}.
\end{equation}
We take the learned model to be
\begin{equation}
    \Phi=S_D.
\end{equation}
A sample from the learned model therefore satisfies
\begin{equation}
    \widehat X\mid\Phi
    \sim
    \mathcal N(0,\Phi).
\end{equation}

When $N<d$, the empirical covariance is singular. Therefore, as in our
experiments, we evaluate model information at a later diffusion time, where
a small amount of Gaussian smoothing has been added:
\begin{equation}
    \widehat X_\rho
    =
    \widehat X+\sqrt{\rho}\,\epsilon,
    \qquad
    \epsilon\sim\mathcal N(0,I_d),
\end{equation}
so that
\begin{equation}
    \widehat X_\rho\mid\Phi
    \sim
    \mathcal N(0,\Phi+\rho I_d).
\end{equation}

We write
\begin{equation}
    I_\rho(N,d)
    :=
    I(\Phi;\widehat X_\rho).
    \label{eq:linear_gaussian_information_definition}
\end{equation}

The assumption of unit population covariance is without loss of generality
within the isotropic family. If instead
\begin{equation}
    x_\mu\sim\mathcal N(0,\tau^2I_d),
\end{equation}
then rescaling by $\tau$ reduces the problem to the same model with effective
noise $\rho/\tau^2$.

Since the conditional distribution is Gaussian,
\begin{equation}
    H[\widehat X_\rho\mid\Phi]
    =
    \frac{d}{2}\log(2\pi e)
    +
    \frac{1}{2}
    \mathbb E_D
    \left[
        \log\det(S_D+\rho I_d)
    \right].
    \label{eq:linear_gaussian_conditional_entropy}
\end{equation}
Hence
\begin{equation}
    I_\rho(N,d)
    =
    H[\widehat X_\rho]
    -
    \frac{d}{2}\log(2\pi e)
    -
    \frac{1}{2}
    \mathbb E_D
    \left[
        \log\det(S_D+\rho I_d)
    \right].
    \label{eq:linear_gaussian_information_entropy}
\end{equation}
The remaining difficulty is the marginal entropy $H[\widehat X_\rho]$.
In the isotropic Gaussian case, this marginal distribution has a simple
one-dimensional representation.

\subsection{Exact calculations}

Let
\begin{equation}
    Z\in\mathbb R^{d\times N}
\end{equation}
have independent standard Gaussian entries. Then
\begin{equation}
    S_D=\frac{1}{N}ZZ^\top.
\end{equation}
Conditionally on $Z$, a sample from the model can equivalently be written as
\begin{equation}
    \widehat X_\rho
    =
    \frac{1}{\sqrt N}Zg+\sqrt{\rho}\,\epsilon,
    \label{eq:linear_gaussian_alternative_sampling}
\end{equation}
where
\begin{equation}
    g\sim\mathcal N(0,I_N),
    \qquad
    \epsilon\sim\mathcal N(0,I_d),
\end{equation}
independently. Indeed, conditioned on $Z$,
\begin{equation}
    \frac{1}{\sqrt N}Zg \mid Z
    \sim
    \mathcal N\left(0,\frac{1}{N}ZZ^\top\right).
\end{equation}

We may instead condition on $g$. Since the rows of $Z$ are independent
standard Gaussian vectors,
\begin{equation}
    \frac{1}{\sqrt N}Zg \mid g
    \sim
    \mathcal N\left(
        0,
        \frac{\|g\|^2}{N}I_d
    \right).
\end{equation}
Define
\begin{equation}
    U
    :=
    \frac{\|g\|^2}{N}
    \sim
    \frac{1}{N}\chi_N^2.
\end{equation}
We obtain the exact marginal representation
\begin{equation}
    \widehat X_\rho\mid U
    \sim
    \mathcal N\left(0,(U+\rho)I_d\right).
    \label{eq:linear_gaussian_scale_mixture}
\end{equation}
Thus, after averaging over the random training dataset, the mixture over
random Wishart covariance matrices reduces to a one-dimensional Gaussian
scale mixture.

The density of $U$ is
\begin{equation}
    f_N(u)
    =
    \frac{(N/2)^{N/2}}{\Gamma(N/2)}
    u^{N/2-1}
    e^{-Nu/2},
    \qquad u>0.
\end{equation}
The marginal distribution of $\widehat X_\rho$ is rotationally invariant.
Writing
\begin{equation}
    q_\rho(r)
    =
    \int_0^\infty
    \frac{
        \exp\left(
            -\frac{r^2}{2(u+\rho)}
        \right)
    }{
        [2\pi(u+\rho)]^{d/2}
    }
    f_N(u)\,du,
    \label{eq:linear_gaussian_radial_density}
\end{equation}
we have,
\begin{equation}
    H[\widehat X_\rho]
    =
    -
    |\mathbb S^{d-1}|
    \int_0^\infty
    r^{d-1}
    q_\rho(r)
    \log q_\rho(r)\,dr,
    \label{eq:linear_gaussian_marginal_entropy}
\end{equation}
where
\begin{equation}
    |\mathbb S^{d-1}|
    =
    \frac{2\pi^{d/2}}{\Gamma(d/2)}.
\end{equation}

Equations~\eqref{eq:linear_gaussian_conditional_entropy}
and~\eqref{eq:linear_gaussian_marginal_entropy} therefore provide an
exact finite-dimensional representation of the model information.

\subsection{Asymptotic behavior of model information}

We now consider the proportional asymptotic limit
\begin{equation}
    N,d\rightarrow\infty,
    \qquad
    \frac{N}{d}\rightarrow\eta\in(0,\infty).
\end{equation}

\begin{proposition}
    \label{prop:linear_gaussian_asymptotic}
    Fix $\rho>0$ and suppose that $N,d\rightarrow\infty$ with
    $N/d = \eta$. Then
    \begin{equation}
        \frac{1}{d}I_\rho(N,d)
        \longrightarrow
        \mathcal I_\rho(\eta),
        \label{eq:linear_gaussian_asymptotic_limit}
    \end{equation}
    where
    \begin{equation}
        \mathcal I_\rho(\eta)
        =
        \frac{1}{2}
        \left[
            \log(1+\rho)
            -
            \int
            \log(\lambda+\rho)
            \,d\mu_\eta(\lambda)
        \right],
        \label{eq:linear_gaussian_mp_information}
    \end{equation}
    and $\mu_\eta$ is the Marchenko--Pastur distribution associated with the
    eigenvalues of $N^{-1}XX^\top$.
\end{proposition}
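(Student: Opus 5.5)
We split the normalized information using \eqref{eq:linear_gaussian_information_entropy} into a marginal-entropy term and a conditional-entropy term, and take the limit of each separately.

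\emph{Conditional term.} Write
\[
\frac1{2d}\mathbb E_D\log\det(S_D+\rho I_d)=\frac12\,\mathbb E\int\log(\lambda+\rho)\,d\mu_{S_D}(\lambda),
\]
where $\mu_{S_D}$ is the empirical spectral distribution of $S_D$. By the Marchenko--Pastur theorem \citep{bai2010spectral}, $\mu_{S_D}\to\mu_\eta$ weakly almost surely; this includes the atom at zero of mass $1-\eta$ when $\eta<1$. Since $\rho>0$, the function $\lambda\mapsto\log(\lambda+\rho)$ is continuous and bounded below on $[0,\infty)$. It is, however, unbounded above. To pass from almost-sure weak convergence to convergence of the expectation, we need uniform integrability. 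We get this from $\log(\lambda+\rho)\le\log(1+\rho)+\lambda$ together with $\mathbb E\int\lambda\,d\mu_{S_D}=\frac1d\mathbb E\operatorname{tr}S_D=1$, and the second moment $\frac1d\mathbb E\operatorname{tr}S_D^2=1+d/N+1/N$, which stays bounded. Hence
\[
\frac1{2d}\mathbb E_D\log\det(S_D+\rho I_d)\to\tfrac12\int\log(\lambda+\rho)\,d\mu_\eta .
\]

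\emph{Marginal term.} This is the main obstacle. By the scale-mixture representation \eqref{eq:linear_gaussian_scale_mixture}, $\widehat X_\rho\mid U\sim\mathcal N(0,(U+\rho)I_d)$ with $U\sim\chi^2_N/N$. We sandwich the entropy:
\[
H[\widehat X_\rho\mid U]\le H[\widehat X_\rho]\le H[\widehat X_\rho\mid U]+I(U;\widehat X_\rho).
\]
The upper bound is also implied directly by the Gaussian maximum-entropy bound, since $\operatorname{Cov}(\widehat X_\rho)=(1+\rho)I_d$, giving $H[\widehat X_\rho]\le\frac d2\log(2\pi e(1+\rho))$. The lower end equals $\frac d2\log(2\pi e)+\frac d2\mathbb E\log(U+\rho)$.

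Since $U\to1$ in probability with $\operatorname{Var}U=2/N$, and $\log(u+\rho)$ is bounded below by $\log\rho$ and grows at most linearly, we get $\mathbb E\log(U+\rho)\to\log(1+\rho)$. Therefore
\[
\frac1d H[\widehat X_\rho]-\frac12\log(2\pi e)\to\frac12\log(1+\rho).
\]
An alternative route to the same conclusion is to note that $I(U;\widehat X_\rho)\le I(U;\|\widehat X_\rho\|^2)$ is $O(\log d)$, because $U$ is effectively a single scalar parameter. The sandwich argument avoids having to prove this.

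\emph{Combining.} Substituting both limits into \eqref{eq:linear_gaussian_information_entropy}, the $\frac d2\log(2\pi e)$ terms cancel. This leaves
\[
\frac1d I_\rho(N,d)\to\frac12\Big[\log(1+\rho)-\int\log(\lambda+\rho)\,d\mu_\eta(\lambda)\Big]=\mathcal I_\rho(\eta).
\]
The only genuinely delicate point is the uniform-integrability justification in the conditional term near the unbounded edge of the spectrum, which the trace-moment bounds above are meant to handle.
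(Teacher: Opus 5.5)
Your proof is correct and follows the paper's route: you split into conditional and marginal entropy, use Marchenko--Pastur for the log-determinant, and sandwich $H[\widehat X_\rho]$ between $H[\widehat X_\rho\mid U]$ and the entropy of the moment-matched Gaussian. The paper phrases that lower bound as convexity of KL in its first argument, whose gap is exactly $I(U;\widehat X_\rho)$, so the two inequalities coincide. The only differences are that the paper extracts an explicit $O(d/N)$ rate via a Taylor expansion of $a_U-1-\log a_U$, while you supply the uniform-integrability argument (via the second trace moment) needed to pass from almost-sure spectral convergence to convergence of $\frac1d\mathbb E_D\log\det(S_D+\rho I_d)$, which the paper leaves implicit.
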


For the convention $\eta=N/d$, the Marchenko--Pastur law is
\begin{equation}
    d\mu_\eta(\lambda)
    =
    (1-\eta)_+\delta_0
    +
    \frac{\eta}{2\pi\lambda}
    \sqrt{
        (\lambda_+-\lambda)
        (\lambda-\lambda_-)
    }
    \,
    \mathbf 1_{[\lambda_-,\lambda_+]}(\lambda)
    \,d\lambda,
    \label{eq:linear_gaussian_mp_distribution}
\end{equation}
where
\begin{equation}
    \lambda_\pm
    =
    \left(
        1\pm\frac{1}{\sqrt\eta}
    \right)^2.
\end{equation}

\begin{proof}
    The conditional entropy follows from convergence of the empirical spectral
    distribution of $S_D$. In particular,
    \begin{equation}
        \frac{1}{d}
        \log\det(S_D+\rho I_d)
        \longrightarrow
        \int
        \log(\lambda+\rho)
        \,d\mu_\eta(\lambda).
        \label{eq:linear_gaussian_logdet_limit}
    \end{equation}

    It remains to characterize the marginal entropy. From
    Equation~\eqref{eq:linear_gaussian_scale_mixture},
    \begin{equation}
        \operatorname{Cov}(\widehat X_\rho)
        =
        (1+\rho)I_d.
    \end{equation}
    Let
    \begin{equation}
        G
        \sim
        \mathcal N(0,(1+\rho)I_d).
    \end{equation}
    Since $G$ has the same mean and covariance as $\widehat X_\rho$, the
    cross-entropy of $\widehat X_\rho$ with respect to $G$ equals the entropy of
    $G$. Therefore,
    \begin{equation}
        D_{\mathrm{KL}}
        \left(
            p_{\widehat X_\rho}
            \,\Vert\,
            p_G
        \right)
        =
        H[G]-H[\widehat X_\rho].
    \end{equation}
    By convexity of KL divergence in its first argument,
    \begin{equation}
        D_{\mathrm{KL}}
        \left(
            p_{\widehat X_\rho}
            \,\Vert\,
            p_G
        \right)
        \leq
        \mathbb E_U
        D_{\mathrm{KL}}
        \left(
            \mathcal N(0,(U+\rho)I_d)
            \,\Vert\,
            \mathcal N(0,(1+\rho)I_d)
        \right).
    \end{equation}
    Define
    \begin{equation}
        a_U
        =
        \frac{U+\rho}{1+\rho}.
    \end{equation}
    The Gaussian KL divergence is
    \begin{equation}
        D_{\mathrm{KL}}
        \left(
            \mathcal N(0,(U+\rho)I_d)
            \,\Vert\,
            \mathcal N(0,(1+\rho)I_d)
        \right)
        =
        \frac{d}{2}
        \left(
            a_U-1-\log a_U
        \right).
    \end{equation}
    Since
    \begin{equation}
        \mathbb E[U]=1,
        \qquad
        \operatorname{Var}(U)=\frac{2}{N},
    \end{equation}
    a Taylor expansion around $U=1$ gives
    \begin{equation}
        \mathbb E
        \left[
            a_U-1-\log a_U
        \right]
        =
        O\left(\frac{1}{N}\right).
    \end{equation}
    Therefore,
    \begin{equation}
        H[G]-H[\widehat X_\rho]
        =
        O\left(\frac{d}{N}\right)
        =
        O(1)
    \end{equation}
    since $\frac{d}{N} = \frac{1}{\eta}$ is kept constant. Dividing
    by $d$ yields
    \begin{equation}
        \frac{1}{d}
        \left(
            H[G]-H[\widehat X_\rho]
        \right)
        \longrightarrow 0.
    \end{equation}
    Hence
    \begin{equation}
        \frac{1}{d}
        H[\widehat X_\rho]
        \longrightarrow
        \frac{1}{2}
        \log\left(
            2\pi e(1+\rho)
        \right).
    \end{equation}
    Combining this expression with
    Equation~\eqref{eq:linear_gaussian_logdet_limit} proves
    Equation~\eqref{eq:linear_gaussian_mp_information}.
\end{proof}

Hence, Equation~\eqref{eq:linear_gaussian_mp_information} gives three
qualitatively
different regimes.

\paragraph{The regime $N\ll d$.}

Consider first
\begin{equation}
    \eta=\frac{N}{d}\ll1.
\end{equation}
Then
\begin{equation}
    \mathcal I_\rho(\eta)
    \longrightarrow
    \frac{1}{2}
    \log\left(
        1+\frac{1}{\rho}
    \right),
    \qquad
    \eta\rightarrow0.
    \label{eq:linear_gaussian_plateau}
\end{equation}
Thus model information approaches a finite limit when the number of
training samples is much smaller than the dimension.

The role of the evaluation noise becomes especially clear as
$\rho\rightarrow0$. For $\eta<1$, the Marchenko--Pastur distribution
contains an atom of mass $1-\eta$ at zero,
\begin{equation}
    \mu_\eta
    =
    (1-\eta)\delta_0
    +
    \mu_\eta^+.
\end{equation}
It follows that
\begin{equation}
    \mathcal I_\rho(\eta)
    =
    \frac{1-\eta}{2}
    \log\frac{1}{\rho}
    +
    O(1),
    \qquad
    \rho\rightarrow0.
    \label{eq:linear_gaussian_small_rho}
\end{equation}

At exactly $\rho=0$, model information diverges whenever $N<d$.
Hence, a positive diffusion time regularizes this singular dataset dependence.

This regime therefore provides a simple analogue of memorization: the model
retains strong information about the particular dataset from which it was
learned, although this information is encoded in a learned subspace rather
than in copies of individual training examples.

\paragraph{The regime $N\approx d$.}

The qualitative transition occurs around
\begin{equation}
    N\simeq d,
\end{equation}
or equivalently
\begin{equation}
    \eta\simeq1.
\end{equation}
For $N<d$, a fraction $1-N/d$ of the eigenvalues of the empirical
covariance are zero. For $N\ge d$, the empirical covariance has full rank
almost surely. For $\rho>0$, $\mathcal I_\rho(\eta)$ is continuous in
$\eta$, so model information changes continuously across $\eta=1$.

Thus, around $N\sim d$, the model moves from a strongly dataset-dependent
regime to one in which independently trained models increasingly agree on
the population covariance.

\paragraph{Generalization regime.}

We now consider the regime
\begin{equation}
    \eta=\frac{N}{d}\gg1.
\end{equation}
In this limit, the empirical covariance concentrates around the population
covariance.

Let $\lambda$ denote an eigenvalue distributed according to
$\mu_\eta$. For our convention $\eta=N/d$, the first two moments of the
Marchenko--Pastur distribution are
\begin{equation}
    \int \lambda\,d\mu_\eta(\lambda)
    =
    1,
\end{equation}
and
\begin{equation}
    \int \lambda^2\,d\mu_\eta(\lambda)
    =
    1+\frac{1}{\eta}.
\end{equation}
Therefore,
\begin{equation}
    \int
    (\lambda-1)^2
    \,d\mu_\eta(\lambda)
    =
    \frac{1}{\eta}.
    \label{eq:mp_variance}
\end{equation}

We can relate these expressions to model information. Recall from
Equation~\eqref{eq:linear_gaussian_mp_information} that
\begin{equation}
    \mathcal I_\rho(\eta)
    =
    \frac{1}{2}
    \left[
        \log(1+\rho)
        -
        \int
        \log(\lambda+\rho)
        \,d\mu_\eta(\lambda)
    \right].
\end{equation}
For $\eta\gg1$, the spectrum is concentrated near $\lambda=1$. Writing
\begin{equation}
    \lambda=1+\delta,
\end{equation}
we expand
\begin{align}
    \log(\lambda+\rho)
    &=
    \log(1+\rho+\delta)
    \\
    &=
    \log(1+\rho)
    +
    \frac{\delta}{1+\rho}
    -
    \frac{\delta^2}{2(1+\rho)^2}
    +
    O(\delta^3).
    \label{eq:log_mp_expansion}
\end{align}

Averaging over the Marchenko--Pastur distribution, the linear term vanishes
because
\begin{equation}
    \int
    \delta
    \,d\mu_\eta
    =
    \int
    (\lambda-1)
    \,d\mu_\eta
    =
    0.
\end{equation}
The first non-vanishing contribution is therefore quadratic. Using
Equation~\eqref{eq:mp_variance},
\begin{align}
    \int
    \log(\lambda+\rho)
    \,d\mu_\eta(\lambda)
    &=
    \log(1+\rho)
    -
    \frac{1}{2(1+\rho)^2}
    \int
    (\lambda-1)^2
    \,d\mu_\eta(\lambda)
    +
    O(\eta^{-2})
    \\
    &=
    \log(1+\rho)
    -
    \frac{1}{2(1+\rho)^2}
    \frac{1}{\eta}
    +
    O(\eta^{-2}).
\end{align}
Substituting this into the expression for model information gives
\begin{equation}
    \mathcal I_\rho(\eta)
    =
    \frac{1}{4(1+\rho)^2}
    \frac{1}{\eta}
    +
    O(\eta^{-2}).
    \label{eq:linear_gaussian_large_eta}
\end{equation}

Since $\eta=N/d$, at fixed effective model dimension we therefore obtain
\begin{equation}
    I_\rho(N,d)
    \propto
    N^{-1}.
    \label{eq:linear_gaussian_inverse_N}
\end{equation}

The Marchenko--Pastur law implies that the
typical spectral fluctuations of the empirical covariance are of order
$\eta^{-1/2}$. Since the empirical covariance is centered on the population
covariance, the linear contribution to model information vanishes, and the
leading contribution is quadratic in these fluctuations.

This observation suggests a more general interpretation of the scaling
regime beyond the linear-Gaussian model. Suppose that, after leaving the
memorization regime, models trained on different datasets fluctuate around
a common population solution, and that the standard deviation of these
dataset-induced fluctuations scales as $N^{-\gamma/2}$, then the
model information scales as $I_m(N) \propto N^{-\gamma}$.

In the present model, the central-limit scaling of the empirical covariance
gives $\gamma=1$, and the measurements on trained networks are
consistent with this: in the $N^{-1}$ regime on ImageNet, model
information minus its estimate at $N=N_{\mathrm{full}}$ has
log-log slope between
$-0.91$ and $-1.02$ in $N$ (Section~\ref{sub:information_scaling_law}).

\subsection{From the Linear-Gaussian Model to the Scaling Law}
\label{sub:linear_gaussian_to_law}

The scaling law \eqref{eq:linear_gaussian_information_law} is built
from the linear-Gaussian model in two steps, which we derive in turn. We then
describe a finite-population correction that we apply on the datasets whose
largest training set is the full dataset.

\paragraph{The small-noise limit.} The measured maximum of model
information over $N$ is far above its values in the generalization regime,
and in \eqref{eq:linear_gaussian_mp_information} this requires small $\rho$.
In this limit
$\mathcal I_\rho$ has a closed form.

\begin{proposition}
    \label{prop:linear_gaussian_small_rho}
    For $\eta\in(0,1)\cup(1,\infty)$, as $\rho\to0$,
    \begin{equation*}
        \mathcal I_\rho(\eta)
        =
        \mathcal I_\ell(\eta)+o(1),
        \qquad
        \ell=\tfrac12\log(1/\rho),
    \end{equation*}
    where $\mathcal I_\ell$ is given by
    \eqref{eq:linear_gaussian_information_profile}.
\end{proposition}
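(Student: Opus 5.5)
The plan is to start from the Marchenko--Pastur representation \eqref{eq:linear_gaussian_mp_information} of Proposition~\ref{prop:linear_gaussian_asymptotic} and take $\rho\to0$ inside the integral. The term $\tfrac12\log(1+\rho)$ tends to $0$. So everything reduces to the asymptotics of $\int\log(\lambda+\rho)\,d\mu_\eta(\lambda)$, which I split into the atom at zero and the absolutely continuous part $\mu_\eta^+$ supported on $[\lambda_-,\lambda_+]$.

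The key observation is that for $\eta\neq1$ we have $\lambda_-=(1-\eta^{-1/2})^2>0$. Hence $\log(\lambda+\rho)\to\log\lambda$ uniformly on the compact support, and dominated convergence gives $\int\log(\lambda+\rho)\,d\mu_\eta^+\to\int\log\lambda\,d\mu_\eta^+$, which is finite. This is exactly where $\eta=1$ must be excluded, since there $\lambda_-=0$ and the uniform bound fails.

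\emph{Case $\eta<1$.} The atom has mass $1-\eta$ and contributes $(1-\eta)\log\rho=-2\ell(1-\eta)$. This yields
\[
\mathcal I_\rho(\eta)=\ell(1-\eta)-\tfrac12\int\log\lambda\,d\mu_\eta^+(\lambda)+o(1).
\]

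\emph{Case $\eta>1$.} There is no atom, so
\[
\mathcal I_\rho(\eta)=-\tfrac12\int\log\lambda\,d\mu_\eta(\lambda)+o(1).
\]

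What remains is the log-moment of the Marchenko--Pastur law, which I expect to be the main computational step. I would use the classical identity (Bai--Silverstein): for the limiting spectrum of a $p\times n$ Wishart matrix $n^{-1}ZZ^\top$ with ratio $c=p/n<1$,
\[
\int\log\lambda\,d\mu=-1-\frac{1-c}{c}\log(1-c).
\]
If I want it self-contained, I would derive it by differentiating $\rho\mapsto\int\log(\lambda+\rho)\,d\mu$, which gives the Stieltjes transform $m(-\rho)$. This transform solves the standard quadratic equation and is explicit. I would then integrate from $\rho=\infty$ down to $0$, matching the $\log\rho$ behaviour at infinity.

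\emph{Applying it for $\eta>1$.} Here $c=1/\eta$. Substituting gives $-\tfrac12\int\log\lambda\,d\mu_\eta=\tfrac12[1+(\eta-1)\log(1-\eta^{-1})]$, which is the second branch of \eqref{eq:linear_gaussian_information_profile}.

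\emph{Applying it for $\eta<1$.} The $N$ nonzero eigenvalues of $N^{-1}XX^\top$ coincide with those of $N^{-1}X^\top X=\eta^{-1}\,(d^{-1}X^\top X)$. The matrix $d^{-1}X^\top X$ is an $N\times N$ Wishart matrix with ratio $c=\eta<1$ and has no atom. Since $\mu_\eta^+$ carries total mass $\eta$, we get
\[
\int\log\lambda\,d\mu_\eta^+=\eta\Big[\log\tfrac1\eta-1-\tfrac{1-\eta}{\eta}\log(1-\eta)\Big]=-\eta\log\eta-\eta-(1-\eta)\log(1-\eta).
\]
Inserting this into the $\eta<1$ expression recovers the first branch $\ell(1-\eta)+\tfrac12[\eta\log\eta+\eta+(1-\eta)\log(1-\eta)]$, which completes the proof.
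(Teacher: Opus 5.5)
Your proposal is correct and follows essentially the same route as the paper: dominated convergence on the continuous part (using $\lambda_->0$ for $\eta\neq1$), the Bai--Silverstein log-moment $-1-\frac{1-c}{c}\log(1-c)$ for the $\eta>1$ branch, and, for the $\eta<1$ branch, the duality between $XX^\top$ and $X^\top X$ (rescaling by $\eta^{-1}$, total mass $\eta$) together with the atom's contribution $(1-\eta)\log\rho$. All your constants check out, and your optional Stieltjes-transform derivation of the log-moment is a self-contained extra where the paper simply cites the identity.
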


\begin{proof}
    Write $y=1/\eta$. For $y<1$, the Marchenko--Pastur law $\mu_\eta$
    has no atom
    and its support $[\lambda_-,\lambda_+]$ has $\lambda_->0$, so by dominated
    convergence $\int\log(\lambda+\rho)\,d\mu_\eta\to\int\log\lambda\,d\mu_\eta
    =-1+(1-y^{-1})\log(1-y)$ \citep{bai2010spectral}. With
    $\log(1+\rho)\to0$, \eqref{eq:linear_gaussian_mp_information} gives
    $\mathcal I_\rho(\eta)\to\frac12[1+(\eta-1)\log(1-\eta^{-1})]$.

    For $\eta<1$, write $\mu_\eta=(1-\eta)\delta_0+\eta\nu_\eta$. The nonzero
    eigenvalues of $N^{-1}XX^\top$ are those of
    $N^{-1}X^\top X=(d/N)\,d^{-1}X^\top X$, and $d^{-1}X^\top X$ is the
    sample covariance of $d$ samples in dimension $N$, whose spectrum
    converges to
    $\mu_{1/\eta}$. Hence $\nu_\eta$ is the law of $\eta^{-1}\lambda$ with
    $\lambda\sim\mu_{1/\eta}$, which is supported away from zero, and
    by the first
    case
    \[
        \int\log\lambda\,d\nu_\eta
        =
        \log\frac1\eta-1-\frac{1-\eta}{\eta}\log(1-\eta).
    \]
    Therefore
    \[
        \int\log(\lambda+\rho)\,d\mu_\eta
        =
        (1-\eta)\log\rho
        -\eta\log\eta-\eta-(1-\eta)\log(1-\eta)+o(1),
    \]
    and substituting into \eqref{eq:linear_gaussian_mp_information} gives the
    claim.
\end{proof}

Both branches equal $\frac12$ at $\eta=1$. The limit is not uniform near
$\eta=1$, but the fitted values of $\ell$ correspond to $\rho<10^{-9}$, where
the difference is negligible at every \((P,N)\) pair in our experiments. For $\eta\to\infty$ the
second branch is $1/(4\eta)+1/(12\eta^2)+O(\eta^{-3})$, in agreement with
\eqref{eq:linear_gaussian_large_eta}.

\paragraph{Independent groups.} Partition the $d$ coordinates into groups
$G_1,\ldots,G_m$ of sizes $d_1,\ldots,d_m$, and let the learned model
be block diagonal, $\Phi=\operatorname{diag}(\Phi_1,\ldots,\Phi_m)$,
with $\Phi_j$ the empirical
covariance of the coordinates in $G_j$. The $\Phi_j$ are functions of
disjoint sets of independent coordinates, so they are independent. Given
$\Phi$, the blocks $\widehat X_{\rho,j}\sim\mathcal N(0,\Phi_j+\rho I)$ are
independent. Hence the pairs $(\Phi_j,\widehat X_{\rho,j})$ are independent
and
\[
    I(\Phi;\widehat X_\rho)
    =
    \sum_{j=1}^m I(\Phi_j;\widehat X_{\rho,j})
    \approx
    \sum_{j=1}^m d_j\,\mathcal I_\ell(N/d_j)
\]
by Propositions~\ref{prop:linear_gaussian_asymptotic}
and~\ref{prop:linear_gaussian_small_rho}. In the scaling law
\eqref{eq:linear_gaussian_information_law}, the group sizes are
log-normal, $d_j\sim N_{\mathrm c}(P)e^{\sigma z}$, and the number of
groups $m$ is absorbed into $k\widetilde P^{-b}$, so that
$\sum_jd_j\,\mathcal I_\ell(N/d_j)=m\,\E_z[d_z\,\mathcal I_\ell(N/d_z)]$. With a single group size,
the decrease of model information at $N=d$ is sharp; with spread $\sigma$, it extends to about
a factor $e^{2\sigma}$ on either side of $N_{\mathrm c}(P)$.

\paragraph{The finite-population correction.} The scaling law describes
training sets drawn from the data distribution. In the experiments, each
training set is drawn without replacement from a fixed full dataset of
$N_{\mathrm{full}}$ images. Given the full dataset, $S_D$ fluctuates around
the covariance of the full dataset, and each entry of $S_D$ has variance
$\frac1N\frac{N_{\mathrm{full}}-N}{N_{\mathrm{full}}-1}$ times its population
value \citep{cochran1977sampling}. For $N_{\mathrm{full}}\gg1$ this is
$N^{-1}-N_{\mathrm{full}}^{-1}$. In the generalization regime model
information is quadratic in these fluctuations (the derivation of
\eqref{eq:linear_gaussian_large_eta}), so the leading term becomes
$\frac d4(N^{-1}-N_{\mathrm{full}}^{-1})$. On CIFAR-10, Fashion-MNIST, and
FFHQ, the largest training set is the full dataset, and we replace $N/d_z$ by
$1/(d_z(N^{-1}-N_{\mathrm{full}}^{-1}))$ in
\eqref{eq:linear_gaussian_information_law}. This substitution is exact at
leading order in the generalization regime, makes $E(P,N)$ zero at
$N=N_{\mathrm{full}}$, where every model is trained on the same data, and
changes $\eta$ by a factor $1-N/N_{\mathrm{full}}\ge0.95$ wherever
$N<N_{\mathrm c}(P)$ on these datasets. On ImageNet every training-set size is
at most $0.08\,N_{\mathrm{full}}$, and we use
\eqref{eq:linear_gaussian_information_law} as written; the correction does not
change the fit there (Table~\ref{tab:appendix_law_ablation}).

\section{Additional Experimental Results}\label{sec:experiments}

The main text makes three claims on ImageNet: the scaling law
\eqref{eq:linear_gaussian_information_law} describes model information, model
information tracks memorization for small \(N\) and FID for large \(N\), and
together these relationships predict memorization and FID at a model size that
is not used in any fit. This appendix gives the evidence for each claim on all
four datasets. After the experimental protocol
(Appendix~\ref{sub:appendix_experimental_protocol}), we describe how the
scaling law is fitted (Appendix~\ref{sub:appendix_scaling_law_fit}), compare
each fitted term with the linear-Gaussian model
(Appendices~\ref{sub:appendix_scaling_law_derivation}
and~\ref{sub:appendix_scaling_law_bias}), and compare the scaling law with
purely empirical alternatives (Appendix~\ref{sub:appendix_law_comparison}). We
then relate model information to memorization and FID
(Appendix~\ref{sub:appendix_imagenet_details}) and test the resulting
predictions (Appendix~\ref{sub:appendix_extrapolation}).
Appendix~\ref{sub:appendix_cifar} shows the measurements on CIFAR-10,
Fashion-MNIST, and FFHQ, Appendix~\ref{sub:appendix_bank_sensitivity} checks
that the conclusions do not depend on the number of trained models used by the
estimator, and Appendix~\ref{sub:appendix_visualization} shows samples across
the transition.

\subsection{Experimental Protocol}
\label{sub:appendix_experimental_protocol}

\paragraph{Data.} We use four image datasets, summarized in
Table~\ref{tab:appendix_grids}. Each image is converted to RGB, resized by
bicubic interpolation with a center crop, and mapped to $[-1,1]^d$;
Fashion-MNIST is grayscale replicated across three channels, so it shares
CIFAR-10's dimension at half the resolution of ImageNet and FFHQ. For each
$(P,N)$ we draw the training subsets independently and without replacement
from the full training split of $N_{\mathrm{full}}$ images, so subsets may
overlap across models. The dataset draw and all training randomness use
recorded seeds. At $N=N_{\mathrm{full}}$ every subset is the full dataset. On CIFAR-10,
Fashion-MNIST, and FFHQ the largest training-set size $N_{\max}$ is
$N_{\mathrm{full}}$. On ImageNet the largest training-set size in the list,
$N_{\max}=100{,}000$, is $7.9\%$ of the full dataset, and we separately train
models on the full dataset (with minibatches sampled from all
$N_{\mathrm{full}}$ images). On every dataset, the models at
$N=N_{\mathrm{full}}$ are excluded from all fits, and we use them only to
check the fitted $F(P)$.

\begin{table}[htbp!]
    \centering
    \caption{\small \textbf{The four image datasets.} $M$ is the number of
        models trained per $(P,N)$ pair, split into
        $K_{\mathrm{train}}$ reference models that fit the mixture score and
        $K_{\mathrm{val}}$ held-out models that are evaluated against it. We
        sweep over the listed values of $K_{\mathrm{train}}$ in
        Appendix~\ref{sub:appendix_bank_sensitivity}.
        $N_{\max}$ is the largest training-set size in the list of values of
        $N$, and $N_{\mathrm{full}}$ is the size of the full dataset from
        which every training set is drawn; on ImageNet we also train models on
        the full dataset at each model size.}
    \label{tab:appendix_grids}
    \small
    \resizebox{\linewidth}{!}{%
        \begin{tabular}{lrrrrrrrrl}
            \toprule
            dataset & resolution & \(d\) & \(\sigma_{\mathrm{data}}\) &
            \(N\) values & \(N_{\max}\) & \(N_{\mathrm{full}}\) & depths \(L\) &
            \(M\) & \(K_{\mathrm{train}}\) / \(K_{\mathrm{val}}\) \\
            \midrule
            ImageNet & \(64\times64\) & 12288 & 0.55 & 12 & 100{,}000 &
            1{,}271{,}167 & 2,3,4,5 & 4 & \(\{1,2\}\) / 2 \\
            CIFAR-10 & \(32\times32\) & 3072 & 0.52 & 14 & 50{,}000 &
            50{,}000 & 1,2,3,4,5 & 4 & \(\{1,2\}\) / 2 \\
            Fashion-MNIST & \(32\times32\) & 3072 & 0.80 & 7 & 60{,}000 &
            60{,}000 & 1,2,4 & 10 & \(\{2,3,4,5\}\) / 5 \\
            FFHQ & \(64\times64\) & 12288 & 0.55 & 7 & 60{,}000 &
            60{,}000 & 1,2,4 & 4 & \(\{1,2\}\) / 2 \\
            \bottomrule
        \end{tabular}
    }
\end{table}

On ImageNet we use log-spaced training-set sizes,
\[
    N\in\{10,63,400,630,1000,1600,2500,4000,6300,16000,40000,100000\},
\]
on CIFAR-10 we use log-linear-spaced training-set sizes
\[
    N\in\{10,41,170,710,2900,6500,9000,12000,16000,25000,35000,40000,45000,50000\},
\]
and on Fashion-MNIST and FFHQ we use
\(N\in\{10,43,180,780,3300,14000,60000\}\).

\paragraph{Models.}
The denoiser is an unconditional DDPM++ UNet
\citep{song2021scorebasedgenerativemodelingstochastic} under EDM
preconditioning \citep{karras2022elucidating}, predicting $x_0$. Channel
multipliers are $(1,2,2)$ so that the trunk halves twice, with a single
self-attention block at the $16\times16$ resolution. Depth $L$ sets both the
width $h=32L$ and the number of residual blocks per resolution level, so
$P$ grows roughly as $h^3$. The five depths give
\[
    P\in\{1.1 \times 10^{6},\ 6.7 \times 10^{6},\ 2.0 \times
    10^{7},\ 4.4 \times 10^{7},\ 8.3 \times 10^{7}\},
\]
on ImageNet we use $L\in\{2,3,4,5\}$, and on Fashion-MNIST and
FFHQ we use $L\in\{1,2,4\}$. The parameter count does not depend on the input resolution, so the same architectures appear on all four datasets.

\paragraph{Training.} All models use the EDM noise schedule with
$\sigma\in[0.002,80]$, $\rho=7$, and the per-dataset $\sigma_{\mathrm{data}}$
of Table~\ref{tab:appendix_grids}. Diffusion times are sampled from a
logit-normal distribution in log signal-to-noise ratio with location $2.4$ and
scale $2.4$. The batch size is $B=\min(128,N)$ and the number of training steps is
$\max(10000,\,200\lfloor N/B\rfloor)$, so runs below $N=128$ see $10^4 N$
images and runs above it are trained for at least $200$ epochs. We use Muon
with the spectral $\mu$P parameterization, peak learning rate
$3\times10^{-4}$, no weight decay, and no dropout. Training is in BF16. The
ImageNet models on the full dataset are trained for the same number of steps as
at $N=100{,}000$, so they differ from the models at $N=100{,}000$ only in the
training data.
Evaluation uses an exponential moving average whose half-life is a fixed
fraction of the number of training steps, giving decay $0.99922$ at $10^4$ steps. Samples
are generated with a second-order Heun sampler, $18$ steps at $32\times32$ and
$40$ at $64\times64$.

\paragraph{Model information evaluation.} The main text reports
$K_{\mathrm{train}}=2$ and $K_{\mathrm{val}}=2$ on ImageNet: models zero and
one fit the mixture score and models two and three are held out for
evaluation. CIFAR-10 and FFHQ use the same split. Reported estimates
use the model-free classifier estimator. We
generate $8192$ samples from each reference model for the estimator computations
and $256$ evaluation samples from every trained model. The estimator evaluates
$25$ diffusion times spaced uniformly in $t\in[0.05,0.99]$. At each time it averages over $64$ held-out samples as Monte Carlo
evaluation points, and it evaluates the integral in
\eqref{eq:finite_bank_diffusion_density_identity} by quadrature at $16$
log-spaced signal-to-noise ratios $\gamma$, with four shared Gaussian draws
per quadrature node.
The same validation samples and forward noise draws are used throughout. We
integrate the resulting rate according to
\eqref{eq:finite_interval_information_estimate}. On Fashion-MNIST we train
ten models per $(P,N)$ pair rather than four and run all three estimators of
Appendix~\ref{sec:model_information_computations} on the same evaluation
points, which allows the comparison of
Appendix~\ref{sub:appendix_bank_sensitivity}.

\paragraph{Memorization.}
We report a nearest-neighbor diagnostic alongside model
information. For each trained model, we generate $256$ clean samples and scan
its complete finite training set. A sample $\hat{X}$ is counted as memorized
if its nearest training example \(X^{(1)}(\hat{X})\) has distance less than
one-third of the distance to its second nearest training example
\(X^{(2)}(\hat{X})\) \citep{yoon2023diffusion}, i.e.,
\begin{equation}
    \hat{X}\ \text{is memorized} \iff \frac{\|\hat{X} -
    X^{(1)}(\hat{X})\|_2}{\|\hat{X} - X^{(2)}(\hat{X})\|_2} \leq \frac{1}{3}.
    \label{eq:appendix_copy_ratio}
\end{equation}
The figures use Euclidean distance on the complete image. Reported means and
standard errors are taken across the trained models at each $(P,N)$, and
exclude samples drawn from the fitted mixture, which is not in the support of
$\Pi$.

\paragraph{Sample quality.} We report clean-FID
\citep{heusel2017gans,parmar2022aliased} between $8192$ generated images and
$50000$ reference images, computed with a fixed resizing and quantization
protocol to remove the implementation sensitivity of the standard metric. We
also report the FID between two disjoint sets of $8192$ real images. It is
$5.33$ on ImageNet, $3.92$ on CIFAR-10, and $2.44$ on both Fashion-MNIST and
FFHQ, and it approximates the smallest FID a model can attain at this number
of samples.

\subsection{Fitting the Scaling Law}
\label{sub:appendix_scaling_law_fit}

We fit \eqref{eq:linear_gaussian_information_law} by least squares in
log-information over all \((P,N)\) pairs with \(N<N_{\mathrm{full}}\). The
models at \(N=N_{\mathrm{full}}\) are not part of the fit; we use them to check
the fitted \(F(P)\) (Appendix~\ref{sub:appendix_scaling_law_bias}). We choose
\(P_{\mathrm{ref}}\) as the median model size of the dataset, evaluate the
expectation over \(z\) by \(31\)-point Gauss--Hermite quadrature, start the
optimizer from \(16\) initial points, and keep the best fit. We bound
\(|\beta|\le3\) and \(\ell\le e^5\).
Table~\ref{tab:appendix_scaling_law_coefficients} gives the fitted parameters.

The scaling law describes the measurements on all four datasets
(Table~\ref{tab:appendix_scaling_law_coefficients} and
Figures~\ref{fig:imagenet_scaling}, \ref{fig:edm_scaling},
and~\ref{fig:appendix_replication}). Its two terms are determined by different
measurements. The parameters of \(E(P,N)\) are determined mostly by the
measurements below and around the transition. \(F(P)\) is determined by the
measurements in the \(N^{-1}\) regime, where \(E(P,N)\) is a known function of
\(N\). Table~\ref{tab:appendix_scaling_law_coefficients} compares the fitted
\(F(P)\) with the estimates at \(N=N_{\mathrm{full}}\), which are not part of
the fit, on all four datasets. On CIFAR-10, \(\beta\) reaches its bound, so \(F(P)\) is constant
except at the smallest model size.

Table~\ref{tab:appendix_law_ablation} shows that the parts of the scaling law
that come from the linear-Gaussian model are needed. A single group size
(\(\sigma=0\)) cannot describe the gradual transition, and fixing \(\ell\) at
\(\log(\sigma_{\mathrm{data}}/\sigma_{t_{\min}})\), its value in the
linear-Gaussian model at the evaluation noise, cannot describe the ratio
of the measurements below and above the transition. The finite-population
correction of Appendix~\ref{sub:linear_gaussian_to_law}, which we apply on the
other three datasets, makes no difference on ImageNet, where every
training-set size in the fit is at most \(0.08\,N_{\mathrm{full}}\).

\begin{table}[htbp!]
    \centering
    \caption{\small \textbf{Scaling-law coefficients.} The nine parameters of
        \eqref{eq:linear_gaussian_information_law}: those of \(E(P,N)\),
        \((k,b,N_{\mathrm c}(P_{\mathrm{ref}}),a,\sigma,\ell)\), and those of
        \(F(P)\), \((c_0,c_1,\beta)\). \(P_{\mathrm{ref}}\) is the median model
        size of the dataset. The fit bounds are \(|\beta|\le3\) and
        \(\ell\le e^5\approx148.4\). The last two columns are the log RMSE over all
        \((P,N)\) pairs, and the log RMSE of the fitted \(F(P)\) against the
        estimates at \(N=N_{\mathrm{full}}\),
    which are not part of the fit.}
    \label{tab:appendix_scaling_law_coefficients}
    \small
    \resizebox{\linewidth}{!}{%
        \begin{tabular}{lrrrrrrrrrrrrr}
            \toprule
            dataset & \((P,N)\) pairs & \(P_{\mathrm{ref}}\) & \(k\) & \(b\) &
            \(N_{\mathrm c}(P_{\mathrm{ref}})\) & \(a\) & \(\sigma\)
            & \(\ell\) &
            \(c_0\) & \(c_1\) & \(\beta\) & log RMSE &
            \(F(P)\) error \\
            \midrule
            ImageNet UNet & 48 & \(3.2{\times}10^{7}\) & 7.51 & 0.483
            & 603 & 0.345 & 0.622 & 13.75 & 220.1 & 40.4 & 0.80 &
            0.112 & 0.088 \\
            CIFAR-10 UNet & 65 & \(2.0{\times}10^{7}\) & 0.911 &
            0.518 & 1330 & 0.409 & 0.503 & 16.59 & 110.0 &
            \(9.9{\times}10^{-3}\) & 3.00 & 0.101 & 0.066 \\
            Fashion-MNIST UNet & 18 & \(6.7{\times}10^{6}\) & 0.0942
            & 0.378 & 1093 & 0.314 & 0.642 & 148.41 & 101.4 & 0.223 &
            \(-3.00\) & 0.074 & 0.746 \\
            FFHQ UNet & 18 & \(6.7{\times}10^{6}\) & 4.41 & 0.598 &
            485 & 0.488 & 0.571 & 36.18 & 182.5 & 484 & 0.24 & 0.109 &
            0.297 \\
            \bottomrule
        \end{tabular}
    }
\end{table}

\begin{table}[htbp!]
    \centering
    \caption{\small \textbf{Log RMSE of the scaling law and of restricted
        versions of it.} ``\(\ell\) fixed'' sets
        \(\ell=\log(\sigma_{\mathrm{data}}/\sigma_{t_{\min}})\) and refits the
        other eight parameters. ``Correction switched'' adds the
        finite-population correction of Appendix~\ref{sub:linear_gaussian_to_law}
        on ImageNet and removes it on the other three datasets. On ImageNet, the
        fit with \(\sigma=0\) is off by up to a factor of two at the \((P,N)\)
        pairs around \(N=N_{\mathrm c}(P)\). A dash marks a variant that we did
    not run.}
    \label{tab:appendix_law_ablation}
    \small
    \begin{tabular}{lcccc}
        \toprule
        variant & ImageNet & CIFAR-10 & Fashion-MNIST & FFHQ \\
        \midrule
        \eqref{eq:linear_gaussian_information_law} & 0.112 & 0.101 & 0.074 &
        0.109 \\
        single group size, \(\sigma=0\) & 0.249 & 0.159 & --- & --- \\
        \(\ell\) fixed & 0.242 & 0.246 & 0.161 & 0.224 \\
        correction switched & 0.112 & 0.103 & 0.074 & 0.108 \\
        \bottomrule
    \end{tabular}
\end{table}

\subsection{The Fitted Terms and the Linear-Gaussian Model}
\label{sub:appendix_scaling_law_derivation}

Each term of \eqref{eq:linear_gaussian_information_law} is derived
from the linear-Gaussian model in
Appendix~\ref{sub:linear_gaussian_to_law}. Here we compare the fitted
terms with the values that this model and the measurement protocol give.

\paragraph{The maximum \(\hat I_{\max}(P)\).} Take a model that reproduces its
training set \(S\), and let the reference models it is evaluated against
reproduce an independently drawn set \(S'\) of the same size. At the smallest
evaluation time \(t_{\min}\) both are seen through Gaussian noise
of scale \(\sigma_{t_{\min}}\), so the two distributions are mixtures
of \(N\) isotropic Gaussian components of that width, centered on
\(S\) and on \(S'\). At
\(\sigma_{t_{\min}}=6.3\times10^{-3}\) the components are separated
by many widths
(the typical distance between two CIFAR-10 images is \(\approx 39\) in the
\([-1,1]^{3072}\) cube, against a component radius of
\(\sigma_{t_{\min}}\sqrt d\approx0.35\)), so the entropy of either mixture
is \(\tfrac d2\log(2\pi e\sigma_{t_{\min}}^2)+\log N\) up to exponentially
small corrections, and the divergence between them is governed by the
entropy of the data distribution against that of a single component:
\[
    \Lambda
    \;\approx\;
    h(p_{\mathrm{data}})-\tfrac d2\log(2\pi e\sigma_{t_{\min}}^2)-\log N
    \;\approx\;
    d\log(\sigma_{\mathrm{data}}/\sigma_{t_{\min}})-\log N ,
\]
the second step using a Gaussian approximation of \(p_{\mathrm{data}}\) with
per-pixel standard deviation \(\sigma_{\mathrm{data}}\). The \(\log N\)
correction is at most \(11\) nats in our experiments, against values of
\(\hat I_{\max}(P)\) of \(10^4\) or more, so we drop it. This estimate is the
maximum over \(N\) of model information in the linear-Gaussian model with all
\(d\) input dimensions in the groups and
\(\rho=(\sigma_{t_{\min}}/\sigma_{\mathrm{data}})^2\), since then
\(\ell=\log(\sigma_{\mathrm{data}}/\sigma_{t_{\min}})\). This value depends on the data scale and the diffusion schedule alone, so in theory it should not change with \(P\). Empirically, Table~\ref{tab:appendix_description_length}
compares the value of \(\Lambda\) read off from the problem parameters with the fitted \(\hat I_{\max}(P)-F(P)\). On every dataset and model size, \(\Lambda\) is
within a small factor of the fitted \(\hat I_{\max}(P)-F(P)\), across a large
change in the fitted value between datasets. In the empirical fit, \(\hat I_{\max}(P)-F(P)\) decreases with \(P\) as \(\widetilde P^{a-b}\), but slowly, since \(b-a\) is small on all four datasets.

\begin{table}[htbp!]
    \centering
    \caption{\small \textbf{Predicted and fitted maximum of model
        information.} The prediction
        \(\Lambda = d\log(\sigma_{\mathrm{data}}/\sigma_{t_{\min}})\) at
        \(\sigma_{t_{\min}}=6.266\times10^{-3}\), against
        \(\hat I_{\max}(P)-F(P)\) of the fitted scaling law
        \eqref{eq:linear_gaussian_information_law} over the model sizes of
        each dataset; the prediction contains no fitted quantity. The ratio is
        fitted over predicted. The fitted \(\hat I_{\max}(P)-F(P)\) is proportional
        to \(\widetilde P^{a-b}\).}
    \label{tab:appendix_description_length}
    \small
    \begin{tabular}{lrrrrr}
        \toprule
        dataset & \(d\) & predicted \(\Lambda\) &
        fitted \(\hat I_{\max}(P)-F(P)\) & ratio & \(b-a\) \\
        \midrule
        ImageNet & 12288 & \(5.50{\times}10^4\) &
        \(6.63\)--\(9.37{\times}10^4\) & 1.21--1.70 & 0.14 \\
        CIFAR-10 & 3072 & \(1.36{\times}10^4\) &
        \(1.95\)--\(3.13{\times}10^4\) & 1.44--2.30 & 0.11 \\
        Fashion-MNIST & 3072 & \(1.49{\times}10^4\) &
        \(1.66\)--\(2.10{\times}10^4\) & 1.12--1.41 & 0.06 \\
        FFHQ & 12288 & \(5.50{\times}10^4\) &
        \(7.39\)--\(11.08{\times}10^4\) & 1.34--2.01 & 0.11 \\
        \bottomrule
    \end{tabular}
\end{table}

\paragraph{Training sets from the data distribution.} The estimator evaluates
each model against \(K_{\mathrm{train}}\) reference models whose training sets
are drawn independently of the training set of the evaluated model. For
\(N\ll N_{\mathrm c}(P)\) these training sets rarely overlap, since
\(K_{\mathrm{train}}N\ll N_{\mathrm{full}}\), so the reference models stand
in for models trained on independent draws from the data distribution, and the
estimate is the quantity \(\Lambda\) above, which does not depend on
\(N_{\mathrm{full}}\).

\paragraph{The transition.} In the scaling law, model information decreases
where \(N\) passes the group sizes \(d_z\), and \(\sigma\) sets the width of
this range of \(N\). The fitted \(\sigma\) is similar on the four datasets
(Table~\ref{tab:appendix_scaling_law_coefficients}): on each, the middle
two-thirds of the groups lie within a factor \(e^{\sigma}\approx1.6\) to
\(1.9\) of \(N_{\mathrm c}(P)\). A single group size describes the
measurements much worse (Table~\ref{tab:appendix_law_ablation}). The
memorization ratio also changes at similar values of \(N/N_{\mathrm c}(P)\) on
all four datasets (Table~\ref{tab:appendix_memorization}).

\paragraph{The \(N^{-1}\) regime.} Above the transition, \(E(P,N)\) decays as
\(k\widetilde P^{-b}N_{\mathrm
c}(P)^2\,e^{2\sigma^2}/(4N)\) to leading order, with \(N^{-1}\) replaced by
\(N^{-1}-N_{\mathrm{full}}^{-1}\) where we apply the finite-population
correction. The exponent of \(N\) is fixed at \(-1\) by the central-limit
scaling of the empirical covariance, and is not a fitted parameter. The
measurements agree with it: on ImageNet, the measured model information minus
its estimate at \(N=N_{\mathrm{full}}\) has log-log slope close to \(-1\) at
every model size (Table~\ref{tab:imagenet_fit}).

\subsection{The Term \texorpdfstring{$F(P)$}{F(P)}}
\label{sub:appendix_scaling_law_bias}

For a deterministic training map that converges to a unique population
solution, $I_\model(t;P,N)$ converges to zero as $N\to\infty$ at fixed $t>0$.
In our experiments the estimates remain positive at \(N=N_{\mathrm{full}}\),
which the scaling law models by \(F(P)\). Three terms contribute to \(F(P)\).

First, the mixture-score approximation produces the integrated error
\begin{equation}
    B_{\mathrm{mix}}
    =
    \frac{1}{2}\int_{t_{\min}}^T
    g(t)^2
    \E_{X_t\sim\bar p_t}
    \|\bar s_t(X_t)-\hat{\bar s}_t(X_t)\|^2dt,
    \label{eq:appendix_integrated_mixture_bias}
\end{equation}
as follows from \eqref{eq:two_bank_rate_decomposition}. This term
contains the error from finite $K_{\mathrm{train}}$ and the numerical error of
the selected posterior estimator. It is nonnegative, so fewer reference models
should give a larger estimate;
Appendix~\ref{sub:appendix_bank_sensitivity} confirms this.

Second, the distribution of trained models includes optimization randomness.
Let $\mathcal{D}$ be the training dataset, let $\mathcal{U}$ collect
the initialization and
minibatch randomness, and write $\Phi=A(\mathcal{D},\mathcal{U})$.
Since a generated sample
$\hat X$ is conditionally independent of $\mathcal{D}$ given $\Phi$, the chain
rule gives
\begin{equation}
    I(\Phi;\hat X_t)
    =
    I(\mathcal{D};\hat X_t)
    +
    I(\Phi;\hat X_t\mid \mathcal{D}).
    \label{eq:appendix_dataset_seed_information_split}
\end{equation}
The first term measures dataset dependence. The second term measures the
variation between training runs on a fixed dataset. At $N=N_{\mathrm{full}}$, every model trains on the full dataset, so the first term is exactly zero, and the estimate there is entirely the conditional term together with the numerical contributions above.

Third, the lowest evaluation times receive negligible probability under the
training-time distribution, so the trained scores can vary substantially in
this extrapolation region.

We consequently fit \(F(P)=c_0+c_1\widetilde
P^{-\beta}\). As \(P\) grows, models trained on the same data become
more alike, but the variation between training seeds remains, so
\(F(P)\) tends to \(c_0\). Table~\ref{tab:appendix_critical_sample_sizes}
compares the fitted \(F(P)\) with the estimates at \(N=N_{\mathrm{full}}\),
which are not part of the fit. On every
dataset the estimate at \(N=N_{\mathrm{full}}\) is below one percent of
\(\hat I_{\max}(P)\), and divided by the input dimension it is similar across the
four datasets. Empirically, it varies mildly with $P$.

\begin{table}[htbp!]
    \centering
    \caption{\small \textbf{Fitted critical training-set sizes and
        \(F(P)\).} The critical
        training-set size \(N_{\mathrm c}(P)=N_{\mathrm
        c}(P_{\mathrm{ref}})\widetilde P^{a}\), its ratio to the width
        \(h\), and \(\hat I_{\max}(P)-F(P)\). Then the fitted \(F(P)\) and the
        estimate \(\hat I_\model(P,N_{\mathrm{full}})\), where all
        models share one dataset and which is not part of the fit, followed by
        this estimate divided by \(\hat I_{\max}(P)-F(P)\) and by the input
        dimension \(d\) (in nats).}
    \label{tab:appendix_critical_sample_sizes}
    \small
    \resizebox{\linewidth}{!}{%
        \begin{tabular}{llrrrrrrrr}
            \toprule
            dataset & architecture & \(P\) & \(N_{\mathrm c}(P)\) &
            \(N_{\mathrm c}/h\) &
            \(\hat I_{\max}-F\) & \(F(P)\) & \(\hat I_\model(P,N_{\mathrm{full}})\) &
            \(\hat I_\model(P,N_{\mathrm{full}})/(\hat I_{\max}-F)\) &
            \(\hat I_\model(P,N_{\mathrm{full}})/d\) \\
            \midrule
            ImageNet UNet & \(h=64,L=2\) & \(6.7{\times}10^{6}\) &
            350 & 5.47 & 93683 & 361.7 & 330.1 & 0.4\% & 0.027 \\
            ImageNet UNet & \(h=96,L=3\) & \(2.0{\times}10^{7}\) &
            511 & 5.33 & 80611 & 279.2 & 266.1 & 0.3\% & 0.022 \\
            ImageNet UNet & \(h=128,L=4\) & \(4.4{\times}10^{7}\) &
            673 & 5.26 & 72264 & 251.4 & 225.7 & 0.3\% & 0.018 \\
            ImageNet UNet & \(h=160,L=5\) & \(8.3{\times}10^{7}\) &
            836 & 5.23 & 66303 & 239.1 & 218.2 & 0.3\% & 0.018 \\
            \midrule
            CIFAR-10 UNet & \(h=32,L=1\) & \(1.1{\times}10^{6}\) &
            410 & 12.82 & 31282 & 165.8 & 148.2 & 0.5\% & 0.048 \\
            CIFAR-10 UNet & \(h=64,L=2\) & \(6.7{\times}10^{6}\) &
            850 & 13.28 & 25731 & 110.2 & 109.8 & 0.4\% & 0.036 \\
            CIFAR-10 UNet & \(h=96,L=3\) & \(2.0{\times}10^{7}\) &
            1330 & 13.85 & 22822 & 110.0 & 101.5 & 0.4\% & 0.033 \\
            CIFAR-10 UNet & \(h=128,L=4\) & \(4.4{\times}10^{7}\) &
            1842 & 14.39 & 20915 & 110.0 & 104.6 & 0.5\% & 0.034 \\
            CIFAR-10 UNet & \(h=160,L=5\) & \(8.3{\times}10^{7}\) &
            2380 & 14.88 & 19526 & 110.0 & 111.0 & 0.6\% & 0.036 \\
            \midrule
            Fashion-MNIST UNet & \(h=32,L=1\) & \(1.1{\times}10^{6}\)
            & 625 & 19.53 & 21046 & 101.4 & 81.5 & 0.4\% & 0.027 \\
            Fashion-MNIST UNet & \(h=64,L=2\) & \(6.7{\times}10^{6}\)
            & 1093 & 17.08 & 18781 & 101.7 & 56.4 & 0.3\% & 0.018 \\
            Fashion-MNIST UNet & \(h=128,L=4\) &
            \(4.4{\times}10^{7}\) & 1979 & 15.46 & 16645 &
            166.5 & 53.9 & 0.3\% & 0.018 \\
            \midrule
            FFHQ UNet & \(h=32,L=1\) & \(1.1{\times}10^{6}\) & 203 &
            6.35 & 110781 & 923.2 & 751.2 & 0.7\% & 0.061 \\
            FFHQ UNet & \(h=64,L=2\) & \(6.7{\times}10^{6}\) & 485 &
            7.58 & 90995 & 666.8 & 478.9 & 0.5\% & 0.039 \\
            FFHQ UNet & \(h=128,L=4\) & \(4.4{\times}10^{7}\) & 1220
            & 9.53 & 73852 & 491.0 & 350.8 & 0.5\% & 0.029 \\
            \bottomrule
        \end{tabular}
    }
\end{table}

\subsection{Comparison with Other Functional Forms}
\label{sub:appendix_law_comparison}

To test whether the form derived from the linear-Gaussian model matters, we
compare \eqref{eq:linear_gaussian_information_law} with two forms in which
we replace \(\mathcal I_\ell\) by a sigmoid in \(\log N\).
Both use the
term \(F(P)\) of \eqref{eq:linear_gaussian_information_law} and a critical
size \(N_{\mathrm{sig}}(P)=N_{\mathrm{ref}}\lambda\widetilde P^{\gamma}\), with
\(N_{\mathrm{ref}}\) the median training-set size of the dataset. In the first,
we multiply by a term that is linear in \(\log N\) and zero at
\(N_{\mathrm{full}}\):
\begin{equation}
    \label{eq:appendix_sigmoid_ramp}
    \hat I_\model(P,N)\approx F(P)+R(N,P)\bigg[
    \frac{C\widetilde P^{-\delta}}{1+(N/N_{\mathrm{sig}}(P))^{s}}+A\bigg],
\end{equation}
where
\begin{equation*}
    R(N,P)=\min\bigg\{1,\frac{\log(N_{\mathrm{full}}/N)}{\log(N_{\mathrm{full}}/N_{\mathrm{sig}}(P))}\bigg\}_+,
\end{equation*}
with the nine parameters \(c_0,c_1,\beta,A,C,\delta,s,\lambda,\gamma\). In the
second, we add a power law in \(N\) instead:
\begin{equation}
    \label{eq:appendix_sigmoid_power}
    \hat I_\model(P,N)\approx F(P)+
    \frac{C\widetilde P^{-\delta}}{1+(N/N_{\mathrm{sig}}(P))^{s}}
    +B\big(N_{\mathrm{ref}}(N^{-1}-N_{\mathrm{full}}^{-1})\big)^{\alpha},
\end{equation}
with the ten parameters \(c_0,c_1,\beta,C,\delta,s,\lambda,\gamma,B,\alpha\).
We fit all three forms in the same way, with \(16\) or more initial points
each, and without the models at \(N=N_{\mathrm{full}}\).
Table~\ref{tab:appendix_law_comparison} reports the log RMSE over all
\((P,N)\) pairs, the Bayesian information criterion
\(\mathrm{BIC}=n\log(\mathrm{RSS}/n)+k\log n\) for \(n\) points and \(k\)
parameters, the log RMSE of the fitted \(F(P)\) against the estimates at
\(N=N_{\mathrm{full}}\), and two tests in which we withhold some of the
\((P,N)\) pairs.

The Marchenko--Pastur form fits best: it has the lowest log RMSE and the
lowest BIC on all four datasets, with as many or fewer parameters. The forms differ most in \(F(P)\). In the
Marchenko--Pastur form the exponent of \(N\) in the \(N^{-1}\) regime is fixed
at \(-1\), so \(F(P)\) is determined by the measurements in this regime, and
on ImageNet only this form has \(F(P)\) close to the estimates at
\(N=N_{\mathrm{full}}\), which no fit uses. In the sigmoid--power form this
exponent is a fitted parameter; on ImageNet the fit makes the decay much
flatter than the measured slope near \(-1\), and its \(F(P)\) is far from the
estimates at \(N=N_{\mathrm{full}}\).

When we withhold the largest model size, the Marchenko--Pastur form has the
smallest error on ImageNet, CIFAR-10, and FFHQ. When we withhold the
training-set sizes above the median, it has the smallest error on ImageNet
and CIFAR-10.

\begin{table}[htbp!]
    \centering
    \caption{\small \textbf{Comparison of functional forms.} MP is
        \eqref{eq:linear_gaussian_information_law}; sigmoid--log is
        \eqref{eq:appendix_sigmoid_ramp}; sigmoid--power is
        \eqref{eq:appendix_sigmoid_power}. Errors are root-mean-square in
        log-information. Column \(F(P)\) compares the fitted \(F(P)\) with the
        estimates at \(N=N_{\mathrm{full}}\), which no fit uses. The last two columns give the
        error on withheld points / the error on the same points when they are
        included in the fit. In ``largest \(P\)'' we withhold every
        measurement at
        the largest model size, and in ``large \(N\)'' every dataset size
        above the median. A dash marks a test with fewer than four remaining
        dataset sizes. Lower is better in every column, and the best
        value on each
        dataset is in bold. On ImageNet the sigmoid--power form has fitted
    exponent \(\alpha=0.15\), a log-log slope of \(-0.15\).}
    \label{tab:appendix_law_comparison}
    \small
    \resizebox{\linewidth}{!}{%
        \begin{tabular}{llrrrrrr}
            \toprule
            dataset & form & params & log RMSE & BIC & \(F(P)\) &
            largest \(P\) & large \(N\) \\
            \midrule
            ImageNet & MP & 9 & \textbf{0.112} & \(\mathbf{-175.0}\)
            & \textbf{0.088} & \textbf{0.183} / 0.143 &
            \textbf{0.950} / 0.072 \\
            ImageNet & sigmoid--log & 9 & 0.141 & \(-153.0\) & 0.544
            & 0.221 / 0.173 & 1.529 / 0.085 \\
            ImageNet & sigmoid--power & 10 & 0.142 & \(-148.9\) &
            2.407 & 0.223 / 0.170 & 1.500 / 0.089 \\
            \midrule
            CIFAR-10 & MP & 9 & \textbf{0.101} & \(\mathbf{-260.8}\)
            & 0.066 & \textbf{0.189} / 0.123 & \textbf{0.105} / 0.068 \\
            CIFAR-10 & sigmoid--log & 9 & 0.102 & \(-259.3\) &
            \textbf{0.047} & 0.200 / 0.142 & 0.205 / 0.069 \\
            CIFAR-10 & sigmoid--power & 10 & 0.104 & \(-252.3\) &
            0.074 & 0.204 / 0.146 & 0.189 / 0.071 \\
            \midrule
            Fashion-MNIST & MP & 9 & \textbf{0.074} &
            \(\mathbf{-67.6}\) & 0.746 & 0.278 / 0.078 & --- \\
            Fashion-MNIST & sigmoid--log & 9 & 0.118 & \(-50.9\) &
            0.628 & 0.223 / 0.154 & --- \\
            Fashion-MNIST & sigmoid--power & 10 & 0.098 & \(-54.8\) &
            \textbf{0.573} & \textbf{0.203} / 0.116 & --- \\
            \midrule
            FFHQ & MP & 9 & \textbf{0.109} & \(\mathbf{-53.7}\) &
            0.297 & \textbf{0.400} / 0.110 & --- \\
            FFHQ & sigmoid--log & 9 & 0.154 & \(-41.3\) & 0.311 &
            0.518 / 0.151 & --- \\
            FFHQ & sigmoid--power & 10 & 0.151 & \(-39.1\) &
            \textbf{0.291} & 0.525 / 0.134 & --- \\
            \bottomrule
        \end{tabular}
    }
\end{table}

\subsection{Memorization and Sample Quality}
\label{sub:appendix_imagenet_details}

The main text relates model information to memorization for small \(N\) and to
FID in the \(N^{-1}\) regime. Table~\ref{tab:appendix_memorization} shows that
the transition in memorization is at similar values of \(N/N_{\mathrm c}(P)\)
on all four datasets: the memorization ratio halves well below
\(N_{\mathrm c}(P)\), and FID has a local maximum near \(N_{\mathrm c}(P)\), at
the first training-set sizes at which the samples are no longer copies of
training images, at every model size on ImageNet, CIFAR-10, and FFHQ, and at
the largest model size on Fashion-MNIST. Across all \((P,N)\) pairs of each dataset, model information
and the memorization ratio are strongly correlated.

Table~\ref{tab:appendix_imagenet_statistics} gives the statistics behind
Figure~\ref{fig:imagenet_correlation} on ImageNet. In the \(N^{-1}\) regime,
\(N\ge e^{2\sigma}N_{\mathrm c}(P)\), we fit
\begin{equation}
    \label{eq:fid_map}
    \log\mathrm{FID}=\omega+\kappa\log E(P,N)+\xi\log P,
\end{equation}
which has \(R^2=0.95\). 

\begin{table}[htbp!]
    \centering
    \caption{\small \textbf{Memorization and FID across the transition.} The
        training-set sizes, in units of \(N_{\mathrm c}(P)\), at which the
        memorization ratio is one half and at which FID has its local maximum,
        as ranges over model sizes (Table~\ref{tab:imagenet_fit} gives them per
        model size on ImageNet). The smallest \(N\) with a memorization ratio below
        \(1\%\) lies between \(0.64\,N_{\mathrm c}(P)\) and
        \(3.4\,N_{\mathrm c}(P)\). On ImageNet the local maximum of
        FID is at this \(N\). On ImageNet the memorization ratio at \(N=10\) is
        between \(0.84\) and \(0.89\). The last column is the Pearson
        correlation between model information and the memorization ratio over
        all \((P,N)\) pairs.}
    \label{tab:appendix_memorization}
    \small
    \begin{tabular}{lccc}
        \toprule
        dataset & memorization ratio \(\tfrac12\) & FID maximum &
        info--mem.\ correlation \\
        \midrule
        ImageNet & 0.22--0.28 & 1.1--1.5 & 0.927 \\
        CIFAR-10 & 0.36--0.42 & 0.84--2.2 & 0.965 \\
        Fashion-MNIST & 0.29--0.36 & 1.7 (\(L=4\)) & 0.952 \\
        FFHQ & 0.14--0.24 & 0.64--1.6 & 0.844 \\
        \bottomrule
    \end{tabular}
\end{table}

\begin{table}[htbp!]
    \centering
    \caption{\small \textbf{Model information, memorization, and FID on
        ImageNet.} Statistics behind Figure~\ref{fig:imagenet_correlation} and
        the predictions of Section~\ref{sub:extrapolation}. The logistic
        function has two parameters and takes the logarithm of model
        information divided by \(\hat I_{\max}(P)-F(P)\). The last three rows are
        the log RMSE of the predicted FID at \(L=5\), over the five
        training-set sizes in the \(N^{-1}\) regime, for three regressions
        fitted on \(L\in\{2,3,4\}\).}
    \label{tab:appendix_imagenet_statistics}
    \small
    \begin{tabular}{lr}
        \toprule
        statistic & value \\
        \midrule
        Pearson correlation, model information and memorization ratio (48
        pairs) & 0.93 \\
        mean absolute error of the logistic fit to the memorization ratio &
        0.020 \\
        Spearman correlation, model information and FID, \(N^{-1}\) regime (24
        pairs), each model size & \(\ge0.94\) \\
        \(R^2\) of \eqref{eq:fid_map} & 0.95 \\
        \(\kappa\), \(\xi\) in \eqref{eq:fid_map} & 0.12, \(-0.17\) \\
        \(R^2\) of \eqref{eq:fid_map} with measured \(\log I_\model\) in place
        of \(\log E\) & 0.91 \\
        \midrule
        predicted FID: \eqref{eq:fid_map}, fitted in the \(N^{-1}\) regime &
        0.066 \\
        predicted FID: \eqref{eq:fid_map}, fitted on all \(N\) & 0.285 \\
        predicted FID: regression on \(\log N\) and \(\log P\), \(N^{-1}\)
        regime & 0.070 \\
        \bottomrule
    \end{tabular}
\end{table}

\subsection{Predictions at a Withheld Model Size}
\label{sub:appendix_extrapolation}

\paragraph{ImageNet at \(L=5\).} We fit the scaling law, the logistic
function, and \eqref{eq:fid_map} on the \(36\) \((P,N)\) pairs at
\(L\in\{2,3,4\}\). In both regressions we use \(\hat I_{\max}(P)\), \(F(P)\), and
\(E(P,N)\) of the scaling law fitted on these pairs. The critical training-set
size at \(L=5\), and with it the range \(N\ge e^{2\sigma}N_{\mathrm c}\) in
which we predict FID, is therefore also a prediction. The predictions are in
Table~\ref{tab:imagenet_prediction}. Model information is predicted almost as
accurately as it is fitted when \(L=5\) is included, the memorization ratio is
predicted closely, and so is FID in the \(N^{-1}\) regime
(Table~\ref{tab:appendix_extrapolation}). Fitting \eqref{eq:fid_map} only in
the \(N^{-1}\) regime is essential: fitted on all training-set sizes, its
prediction is much worse, as FID is not monotonic while model information is (Table~\ref{tab:appendix_imagenet_statistics}).

We also trained four models at \(L=5\) on \(N=250{,}000\) images, outside the
range of every fit, with the same protocol. At this \(N\), \(E(P,N)\) is a few
nats, so the error of the prediction of model information is mostly an error
in \(F(P)\): with all model sizes in the fit, the fitted \(F(P)\) at \(L=5\)
exceeds the estimate at \(N=N_{\mathrm{full}}\)
(Table~\ref{tab:appendix_critical_sample_sizes}) by about as much as the
prediction exceeds the measurement (Table~\ref{tab:imagenet_prediction}).

\paragraph{All datasets.} We repeat this procedure on every dataset: we
withhold the largest model size from every fit, fit
\eqref{eq:linear_gaussian_information_law} and the two regressions on the
other model sizes, and predict model information, the memorization ratio, and
FID at the withheld size. The FID regression uses the \((P,N)\) pairs with
\(N\ge e^{2\sigma}N_{\mathrm c}(P)\), where \(\sigma\) and
\(N_{\mathrm c}(P)\) come from the fitted scaling law.
Table~\ref{tab:appendix_extrapolation} reports the results. The memorization
ratio is predicted accurately on all four datasets, and on CIFAR-10 the FID
predictions are as accurate as on ImageNet.

\begin{table}[htbp!]
    \centering
    \caption{\small \textbf{Predictions at depth \(L=5\) from fits to
        \(L\in\{2,3,4\}\) on ImageNet.} Each entry is predicted / measured.
        The fit to \(L\in\{2,3,4\}\) has \(N_{\mathrm c}=965\) at \(L=5\),
        against \(836\) in the fit to all model sizes, so we predict FID for
        \(N\ge e^{2\sigma}N_{\mathrm c}=3124\). The last two rows are the new
        training-set size \(N=250{,}000\approx299\,N_{\mathrm c}(P)\), outside
        the range of every fit, predicted from the fit to
        \(L\in\{2,3,4\}\) and from the fit to all model sizes.}
    \label{tab:imagenet_prediction}
    \small
    \begin{tabular}{lrrr}
        \toprule
        \(N\) & model information (nats) & memorization ratio & FID \\
        \midrule
        10 & \(64{,}800\) / \(74{,}700\) & 0.85 / 0.88 & --- \\
        400 & \(42{,}600\) / \(41{,}700\) & 0.22 / 0.17 & --- \\
        630 & \(31{,}700\) / \(20{,}100\) & 0.033 / 0.027 & --- \\
        1000 & \(19{,}200\) / \(14{,}200\) & 0.001 / 0.004 & --- \\
        4000 & \(1219\) / \(1098\) & 0 / 0 & 58.2 / 59.9 \\
        16{,}000 & \(357\) / \(395\) & 0 / 0 & 46.1 / 46.5 \\
        100{,}000 & \(247\) / \(240\) & 0 / 0 & 37.2 / 33.3 \\
        \midrule
        250{,}000, fit to \(L\in\{2,3,4\}\) & \(235\) / \(221\) & 0 / 0 &
        33.5 / 28.5 \\
        250{,}000, fit to all \(L\) & \(246.3\) / \(221.2\) & --- & --- \\
        \bottomrule
    \end{tabular}
\end{table}

\begin{table}[htbp!]
    \centering
    \caption{\small \textbf{Predictions at the withheld largest model size.}
        Model information: log RMSE of the prediction / log RMSE of the same
        points in the fit to all model sizes. Memorization: mean absolute error
        of the predicted memorization ratio on the withheld \((P,N)\) pairs / on
        the pairs used in the fit, and Pearson correlation between the
        predicted and measured ratio on the withheld pairs. FID: log RMSE
        on the withheld pairs with \(N\ge e^{2\sigma}N_{\mathrm
        c}(P)\) / on the
        pairs used in the fit, and the number of withheld pairs in
        this range. Fashion-MNIST and FFHQ have three model sizes, so their
        withheld fits use two. \({}^*\)On Fashion-MNIST the FID regression has
        three parameters and is fitted on three pairs, so it fits them exactly.
        The last row pools the \(37\) withheld pairs of the four
    datasets.}
    \label{tab:appendix_extrapolation}
    \small
    \begin{tabular}{lcccc}
        \toprule
        dataset & model information & memorization (MAE) &
        correlation & FID (pairs) \\
        \midrule
        ImageNet & 0.183 / 0.143 & 0.009 / 0.008 & 0.999 & 0.066 / 0.052 (5) \\
        CIFAR-10 & 0.189 / 0.123 & 0.022 / 0.009 & 0.991 & 0.120 / 0.098 (7) \\
        Fashion-MNIST & 0.278 / 0.078 & 0.039 / 0.033 & 0.989 & 0.569 / \(0^*\) (1) \\
        FFHQ & 0.400 / 0.110 & 0.026 / 0.017 & 0.997 & 0.075 / 0.066 (1) \\
        \midrule
        all four & --- & 0.021 & 0.993 & --- \\
        \bottomrule
    \end{tabular}
\end{table}

\begin{table}[htbp!]
    \centering
    \caption{\small \textbf{The fitted scaling law, memorization, and FID at
        each model size on ImageNet.} \(F(P)\) and the estimate at
        \(N=N_{\mathrm{full}}\) are in nats. ``Slope'' is the log-log slope of
        the measured model information minus its estimate at
        \(N=N_{\mathrm{full}}\), between \(N=4000\) and \(N=40{,}000\). The last two columns are the training-set sizes at
        which the memorization ratio is one half and at which FID has its local
    maximum, in units of \(N_{\mathrm c}(P)\).}
    \label{tab:imagenet_fit}
    \small
    \setlength{\tabcolsep}{4pt}
    \begin{tabular}{rrrrrrrr}
        \toprule
        \(L\) & \(P\) & \(N_{\mathrm c}(P)\) &
        fitted \(F(P)\) & estimate at \(N_{\mathrm{full}}\) & slope &
        mem.\ ratio \(\tfrac12\) & FID peak \\
        \midrule
        2 & \(6.7{\times}10^{6}\) & 350 & 362 & 330 &
        \(-0.96\) & 0.26 & 1.14 \\
        3 & \(2.0{\times}10^{7}\) & 511 & 279 & 266 &
        \(-0.91\) & 0.28 & 1.23 \\
        4 & \(4.4{\times}10^{7}\) & 673 & 251 & 226 &
        \(-0.94\) & 0.26 & 1.49 \\
        5 & \(8.3{\times}10^{7}\) & 836 & 239 & 218 &
        \(-1.02\) & 0.22 & 1.20 \\
        \bottomrule
    \end{tabular}
\end{table}

\subsection{CIFAR-10, Fashion-MNIST, and FFHQ}
\label{sub:appendix_cifar}
\label{sub:appendix_replication}

We repeat the measurement of the main text on three more datasets with the
same architectures, training recipe, and estimator
(Table~\ref{tab:appendix_grids}). CIFAR-10 \citep{krizhevsky2009learning}
covers five model sizes and fourteen training-set sizes, and its largest
training-set size is the full training set. Fashion-MNIST keeps CIFAR-10's
resolution and changes the content, and FFHQ quadruples the input dimension.

On every dataset we see the same behavior as on ImageNet
(Figures~\ref{fig:edm_scaling}, \ref{fig:edm_regimes},
and~\ref{fig:appendix_replication}). Model information is close to
\(\hat I_{\max}(P)\) for small \(N\), decreases around \(N_{\mathrm c}(P)\), and in
the \(N^{-1}\) regime decreases to \(F(P)\) as
\(N^{-1}-N_{\mathrm{full}}^{-1}\), with the finite-population correction of
Appendix~\ref{sub:linear_gaussian_to_law}. The critical training-set size increases
with the model size (Table~\ref{tab:appendix_critical_sample_sizes}). The
models memorize below \(N_{\mathrm c}(P)\): with ten training images, the
samples are repeated copies of the training images
(Figures~\ref{fig:edm_regimes} and~\ref{fig:appendix_replication_samples}).
The memorization ratio is zero in the \(N^{-1}\) regime, where FID decreases
with \(N\), and FID has a local maximum just after memorization ends, at
every model size on CIFAR-10 and FFHQ and at the largest model size on
Fashion-MNIST (Table~\ref{tab:appendix_memorization}).

\begin{figure}[htbp!]
    \centering
    \input{figures/pgfplots/edm_styles.tex}

\noindent\makebox[\linewidth][c]{%
    \begin{tikzpicture}
        \begin{axis}[
                name=main,
                width=0.56\linewidth,
                height=0.33\linewidth,
                xmode=log,
                ymode=log,
                xmin=8, xmax=6e4, ymin=60, ymax=4.0e4,
                xlabel={$N$},
                ylabel={model information (nats)},
                grid=major,
                grid style={line width=0.1pt, draw=gray!24},
                tick label style={font=\scriptsize},
                label style={font=\scriptsize},
                legend style={font=\scriptsize, draw=none, fill=white,
                at={(0.03,0.04)}, anchor=south west, legend columns=3},
                axis line style={black!65},
                tick style={black!65}
            ]
            \edmRawFit{figures/pgfplots/data/cifar10_lawfit_l1.csv}{edmDepthOne}
            \edmRawPoints{figures/pgfplots/data/cifar10_raw_l1.csv}{edmDepthOne}{*}{$L=1$}
            \edmRawFit{figures/pgfplots/data/cifar10_lawfit_l2.csv}{edmDepthTwo}
            \edmRawPoints{figures/pgfplots/data/cifar10_raw_l2.csv}{edmDepthTwo}{square*}{$L=2$}
            \edmRawFit{figures/pgfplots/data/cifar10_lawfit_l3.csv}{edmDepthThree}
            \edmRawPoints{figures/pgfplots/data/cifar10_raw_l3.csv}{edmDepthThree}{triangle*}{$L=3$}
            \edmRawFit{figures/pgfplots/data/cifar10_lawfit_l4.csv}{edmDepthFour}
            \edmRawPoints{figures/pgfplots/data/cifar10_raw_l4.csv}{edmDepthFour}{diamond*}{$L=4$}
            \edmRawFit{figures/pgfplots/data/cifar10_lawfit_l5.csv}{edmDepthFive}
            \edmRawPoints{figures/pgfplots/data/cifar10_raw_l5.csv}{edmDepthFive}{pentagon*}{$L=5$}
            \addplot[dashed, black!45, line width=0.7pt, forget plot]
            coordinates {(50000,60) (50000,4.0e4)};
            \draw[black!55, line width=0.5pt]
            (axis cs:9000,90) rectangle (axis cs:6e4,270);
            \coordinate (zoomtop) at (axis cs:6e4,270);
            \coordinate (zoombot) at (axis cs:6e4,90);
        \end{axis}
        \begin{axis}[
                name=zoom,
                at={(main.south east)}, anchor=south west,
                xshift=0.08\linewidth,
                width=0.37\linewidth,
                height=0.33\linewidth,
                xmode=log,
                xmin=9000, xmax=6e4, ymin=90, ymax=270,
                restrict y to domain*=90:270,
                title={},
                xtick={1e4,5e4}, xticklabels={$10^4$,$5\!\cdot\!10^4$},
                ytick={100,150,200,250}, yticklabel pos=right,
                grid=major,
                grid style={line width=0.1pt, draw=gray!24},
                tick label style={font=\scriptsize},
                axis line style={black!55},
                tick style={black!55}
            ]
            \edmRawFit{figures/pgfplots/data/cifar10_lawfit_l1.csv}{edmDepthOne}
            \edmZoomPoints{figures/pgfplots/data/cifar10_raw_l1.csv}{edmDepthOne}{*}
            \edmRawFit{figures/pgfplots/data/cifar10_lawfit_l2.csv}{edmDepthTwo}
            \edmZoomPoints{figures/pgfplots/data/cifar10_raw_l2.csv}{edmDepthTwo}{square*}
            \edmRawFit{figures/pgfplots/data/cifar10_lawfit_l3.csv}{edmDepthThree}
            \edmZoomPoints{figures/pgfplots/data/cifar10_raw_l3.csv}{edmDepthThree}{triangle*}
            \edmRawFit{figures/pgfplots/data/cifar10_lawfit_l4.csv}{edmDepthFour}
            \edmZoomPoints{figures/pgfplots/data/cifar10_raw_l4.csv}{edmDepthFour}{diamond*}
            \edmRawFit{figures/pgfplots/data/cifar10_lawfit_l5.csv}{edmDepthFive}
            \edmZoomPoints{figures/pgfplots/data/cifar10_raw_l5.csv}{edmDepthFive}{pentagon*}
            \addplot[dashed, black!45, line width=0.7pt, forget plot]
            coordinates {(50000,90) (50000,270)};
        \end{axis}
        \draw[black!55, line width=0.4pt, dashed] (zoomtop) --
        (zoom.north west);
        \draw[black!55, line width=0.4pt, dashed] (zoombot) --
        (zoom.south west);
    \end{tikzpicture}%
}
    \vspace{-1.5em}
    \caption{\small \textbf{Model information scaling on CIFAR-10.}
        Model information as a function of training-set size for five
        model sizes, together with the fit of
        \eqref{eq:linear_gaussian_information_law}. The largest dataset
        size is the full dataset, $N_{\max}=N_{\mathrm{full}}=50{,}000$, where
        all models share the same training data and the remaining estimate
        is due to the variation between training seeds and the bias of the
        estimator; these points are not part of the fit. The right panel magnifies the boxed region,
        where $E(P,N)$ decays to zero as
    $N^{-1}-N_{\mathrm{full}}^{-1}$.}
    \label{fig:edm_scaling}
\end{figure}

\begin{figure}[htbp!]
    \centering
    \input{figures/pgfplots/edm_regimes.tex}
    \caption{\small \textbf{The two regimes on CIFAR-10.} Layout and shading
        follow Figure~\ref{fig:imagenet_regimes}. The sample size is
        normalized by the critical training-set size $N_{\mathrm c}(P)$; the
        dark band is
        $N/N_{\mathrm c}(P)\in[e^{-2\sigma},e^{2\sigma}]$, and the
        lighter band the
        $N^{-1}$ regime. Below, we show random samples from
    models at depth $L=5$ at three training set sizes $N$.}
    \label{fig:edm_regimes}
\end{figure}

\begin{figure}[htbp!]
    \centering
    \input{figures/pgfplots/edm_styles.tex}

\begin{minipage}[t]{0.49\linewidth}
    \centering
    \begin{tikzpicture}
        \begin{axis}[
                width=\linewidth, height=0.80\linewidth,
                xmode=log, ymode=log,
                xmin=0.003, xmax=300, ymin=30, ymax=4e4,
                xlabel={$N/N_{\mathrm c}(P)$},
                ylabel={model information (nats)},
                title={(a) Fashion-MNIST: information},
                ytick={1e2,1e3,1e4},
                grid=major, grid style={line width=0.1pt, draw=gray!24},
                tick label style={font=\scriptsize}, label
                style={font=\scriptsize},
                title style={font=\small},
                legend style={font=\scriptsize, draw=none, fill=white,
                at={(0.03,0.03)}, anchor=south west, legend columns=3},
                axis line style={black!65}, tick style={black!65}
            ]
            \edmBandsAt{30}{4e4}{0.277}{3.61}{300}
            \edmLawFit{figures/pgfplots/data/fashion_mnist_lawfit_l1.csv}{edmDepthOne}
            \edmInfoPoints{figures/pgfplots/data/fashion_mnist_raw_l1.csv}{edmDepthOne}{*}{$L=1$}
            \edmLawFit{figures/pgfplots/data/fashion_mnist_lawfit_l2.csv}{edmDepthTwo}
            \edmInfoPoints{figures/pgfplots/data/fashion_mnist_raw_l2.csv}{edmDepthTwo}{square*}{$L=2$}
            \edmLawFit{figures/pgfplots/data/fashion_mnist_lawfit_l4.csv}{edmDepthFour}
            \edmInfoPoints{figures/pgfplots/data/fashion_mnist_raw_l4.csv}{edmDepthFour}{diamond*}{$L=4$}
        \end{axis}
    \end{tikzpicture}
\end{minipage}\hfill
\begin{minipage}[t]{0.49\linewidth}
    \centering
    \begin{tikzpicture}
        \begin{axis}[
                name=fmcopies,
                width=\linewidth, height=0.80\linewidth,
                xmode=log,
                xmin=0.003, xmax=300, ymin=-0.045, ymax=1.30,
                xlabel={$N/N_{\mathrm c}(P)$},
                ylabel={memorization ratio},
                title={(b) Fashion-MNIST: memorization and FID},
                ytick={0,0.5,1},
                grid=major, grid style={line width=0.1pt, draw=gray!24},
                tick label style={font=\scriptsize}, label
                style={font=\scriptsize},
                title style={font=\small},
                legend style={font=\scriptsize, draw=none, fill=white,
                at={(0.02,0.02)}, anchor=south west, legend columns=1},
                axis line style={black!65}, tick style={black!65}
            ]
            \edmBandsAt{-0.045}{1.30}{0.277}{3.61}{300}
            \edmMetricSeries{figures/pgfplots/data/fashion_mnist_memorization_pixel_l1.csv}{edmDepthOne}{*}{$L=1$}
            \edmMetricSeries{figures/pgfplots/data/fashion_mnist_memorization_pixel_l2.csv}{edmDepthTwo}{square*}{$L=2$}
            \edmMetricSeries{figures/pgfplots/data/fashion_mnist_memorization_pixel_l4.csv}{edmDepthFour}{diamond*}{$L=4$}
        \end{axis}
        \begin{axis}[
                at={(fmcopies.south west)}, anchor=south west,
                width=\linewidth, height=0.80\linewidth,
                xmode=log, ymode=log,
                xmin=0.003, xmax=300, ymin=2, ymax=2e3,
                axis x line=none, axis y line*=right,
                ytick={1e1,1e2},
                ylabel={clean-FID},
                tick label style={font=\scriptsize}, label
                style={font=\scriptsize},
                grid=none,
                axis line style={black!65}, tick style={black!65}
            ]
            \edmDashedSeriesNoLegend{figures/pgfplots/data/fashion_mnist_fid_l1.csv}{edmDepthOne}{*}
            \edmDashedSeriesNoLegend{figures/pgfplots/data/fashion_mnist_fid_l2.csv}{edmDepthTwo}{square*}
            \edmDashedSeriesNoLegend{figures/pgfplots/data/fashion_mnist_fid_l4.csv}{edmDepthFour}{diamond*}
            \node[font=\scriptsize, black!70, anchor=north west] at
            (axis cs:0.004,1.7e3) {solid: copies (left)};
            \node[font=\scriptsize, black!70, anchor=north west] at
            (axis cs:0.004,1.05e3) {dashed: FID (right)};
        \end{axis}
    \end{tikzpicture}
\end{minipage}

\vspace{6pt}

\begin{minipage}[t]{0.49\linewidth}
    \centering
    \begin{tikzpicture}
        \begin{axis}[
                width=\linewidth, height=0.80\linewidth,
                xmode=log, ymode=log,
                xmin=0.005, xmax=800, ymin=200, ymax=2.5e5,
                xlabel={$N/N_{\mathrm c}(P)$},
                ylabel={model information (nats)},
                title={(c) FFHQ: information},
                ytick={1e3,1e4,1e5},
                grid=major, grid style={line width=0.1pt, draw=gray!24},
                tick label style={font=\scriptsize}, label
                style={font=\scriptsize},
                title style={font=\small},
                legend style={font=\scriptsize, draw=none, fill=white,
                at={(0.03,0.03)}, anchor=south west, legend columns=3},
                axis line style={black!65}, tick style={black!65}
            ]
            \edmBandsAt{200}{2.5e5}{0.319}{3.13}{800}
            \edmLawFit{figures/pgfplots/data/ffhq_lawfit_l1.csv}{edmDepthOne}
            \edmInfoPoints{figures/pgfplots/data/ffhq_raw_l1.csv}{edmDepthOne}{*}{$L=1$}
            \edmLawFit{figures/pgfplots/data/ffhq_lawfit_l2.csv}{edmDepthTwo}
            \edmInfoPoints{figures/pgfplots/data/ffhq_raw_l2.csv}{edmDepthTwo}{square*}{$L=2$}
            \edmLawFit{figures/pgfplots/data/ffhq_lawfit_l4.csv}{edmDepthFour}
            \edmInfoPoints{figures/pgfplots/data/ffhq_raw_l4.csv}{edmDepthFour}{diamond*}{$L=4$}
        \end{axis}
    \end{tikzpicture}
\end{minipage}\hfill
\begin{minipage}[t]{0.49\linewidth}
    \centering
    \begin{tikzpicture}
        \begin{axis}[
                name=ffcopies,
                width=\linewidth, height=0.80\linewidth,
                xmode=log,
                xmin=0.005, xmax=800, ymin=-0.045, ymax=1.30,
                xlabel={$N/N_{\mathrm c}(P)$},
                ylabel={memorization ratio},
                title={(d) FFHQ: copies and FID},
                ytick={0,0.5,1},
                grid=major, grid style={line width=0.1pt, draw=gray!24},
                tick label style={font=\scriptsize}, label
                style={font=\scriptsize},
                title style={font=\small},
                legend style={font=\scriptsize, draw=none, fill=white,
                at={(0.02,0.02)}, anchor=south west, legend columns=1},
                axis line style={black!65}, tick style={black!65}
            ]
            \edmBandsAt{-0.045}{1.30}{0.319}{3.13}{800}
            \edmMetricSeries{figures/pgfplots/data/ffhq_memorization_pixel_l1.csv}{edmDepthOne}{*}{$L=1$}
            \edmMetricSeries{figures/pgfplots/data/ffhq_memorization_pixel_l2.csv}{edmDepthTwo}{square*}{$L=2$}
            \edmMetricSeries{figures/pgfplots/data/ffhq_memorization_pixel_l4.csv}{edmDepthFour}{diamond*}{$L=4$}
        \end{axis}
        \begin{axis}[
                at={(ffcopies.south west)}, anchor=south west,
                width=\linewidth, height=0.80\linewidth,
                xmode=log, ymode=log,
                xmin=0.005, xmax=800, ymin=4, ymax=4e3,
                axis x line=none, axis y line*=right,
                ytick={1e1,1e2},
                ylabel={clean-FID},
                tick label style={font=\scriptsize}, label
                style={font=\scriptsize},
                grid=none,
                axis line style={black!65}, tick style={black!65}
            ]
            \edmDashedSeriesNoLegend{figures/pgfplots/data/ffhq_fid_l1.csv}{edmDepthOne}{*}
            \edmDashedSeriesNoLegend{figures/pgfplots/data/ffhq_fid_l2.csv}{edmDepthTwo}{square*}
            \edmDashedSeriesNoLegend{figures/pgfplots/data/ffhq_fid_l4.csv}{edmDepthFour}{diamond*}
            \node[font=\scriptsize, black!70, anchor=north west] at
            (axis cs:0.007,3.4e3) {solid: copies (left)};
            \node[font=\scriptsize, black!70, anchor=north west] at
            (axis cs:0.007,2.1e3) {dashed: FID (right)};
        \end{axis}
    \end{tikzpicture}
\end{minipage}
    \caption{\small \textbf{The two regimes on Fashion-MNIST and FFHQ.}
        Layout and shading follow Figure~\ref{fig:edm_regimes}: the darker
        band is $N/N_{\mathrm c}(P)\in[e^{-2\sigma},e^{2\sigma}]$,
        where most groups cross
        $N=d_z$, and the lighter band beyond it is the $N^{-1}$ regime.
        \textbf{(a,c)} model information for three model sizes with the fitted
        scaling law. \textbf{(b,d)} the memorization ratio (solid, left axis) and
        clean-FID (dashed, right axis), with standard errors across the
        trained models at each $(P,N)$. On both datasets the
        results are as in the main text: memorization ends near
        $N=N_{\mathrm c}(P)$,
        and FID falls
        monotonically through the $N^{-1}$ regime. On FFHQ, FID increases by a
        factor of \(1.3\) to \(1.7\) between \(N=43\) and \(N=180\) at \(L=1\),
        and between \(N=180\) and \(N=780\) at \(L=2\) and \(L=4\). On
        Fashion-MNIST at \(L=4\) it increases from \(15.2\) at \(N=780\)
    to \(21.6\) at \(N=3300\).}
    \label{fig:appendix_replication}
\end{figure}

\begin{figure}[htbp!]
    \centering
    \begin{minipage}[t]{0.30\linewidth}
        \centering
        \includegraphics[width=\linewidth]{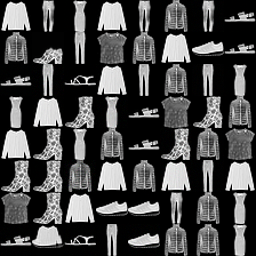}
        \\[1pt] {\scriptsize (a) $N=10$}
    \end{minipage}\hspace{0.02\linewidth}
    \begin{minipage}[t]{0.30\linewidth}
        \centering
        \includegraphics[width=\linewidth]{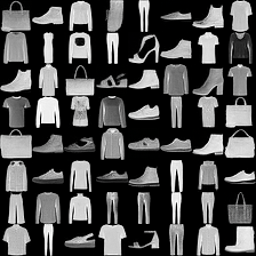}
        \\[1pt] {\scriptsize (b) $N=3300\approx1.7\,N_{\mathrm c}(P)$}
    \end{minipage}\hspace{0.02\linewidth}
    \begin{minipage}[t]{0.30\linewidth}
        \centering
        \includegraphics[width=\linewidth]{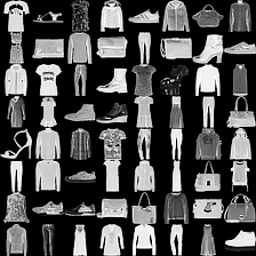}
        \\[1pt] {\scriptsize (c) $N=60{,}000$}
    \end{minipage}

    \vspace{5pt}

    \begin{minipage}[t]{0.30\linewidth}
        \centering
        \includegraphics[width=\linewidth]{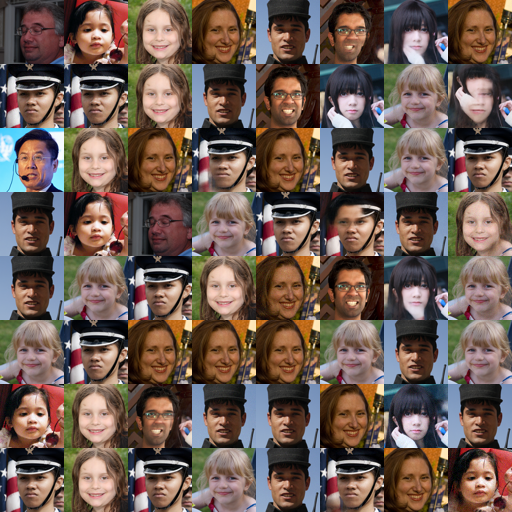}
        \\[1pt] {\scriptsize (d) $N=10$}
    \end{minipage}\hspace{0.02\linewidth}
    \begin{minipage}[t]{0.30\linewidth}
        \centering
        \includegraphics[width=\linewidth]{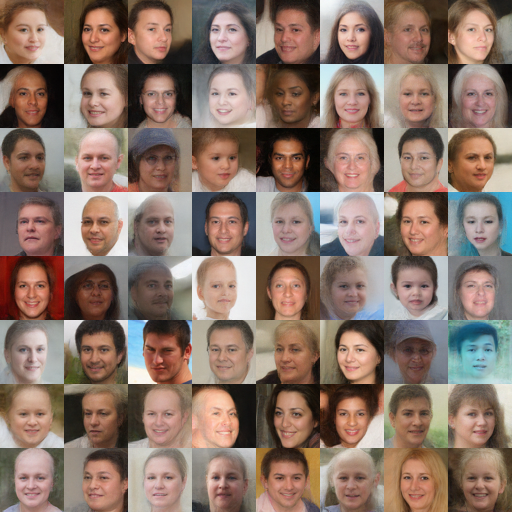}
        \\[1pt] {\scriptsize (e) $N=780\approx0.64\,N_{\mathrm c}(P)$}
    \end{minipage}\hspace{0.02\linewidth}
    \begin{minipage}[t]{0.30\linewidth}
        \centering
        \includegraphics[width=\linewidth]{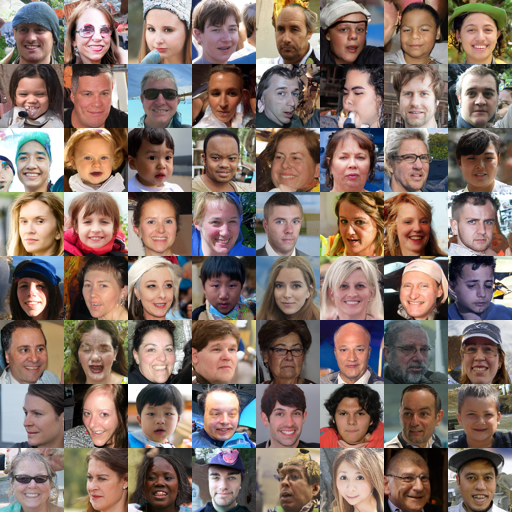}
        \\[1pt] {\scriptsize (f) $N=60{,}000$}
    \end{minipage}
    \caption{\small \textbf{Samples across the transition on Fashion-MNIST
        (top) and FFHQ (bottom).} All six panels come from models at $L=4$.
        \textbf{(a,d)} ten training images, repeated verbatim.
        \textbf{(b,e)} near $N=N_{\mathrm c}(P)$, where the
        memorization ratio has just dropped to nearly zero and FID has its local
        maximum: the images are new, but the least sharp of the three. \textbf{(c,f)} the full dataset, where no copies
    are detected and FID is $3.5$ and $6.6$ respectively.}
    \label{fig:appendix_replication_samples}
\end{figure}

\subsection{Sensitivity of the Model Information to the Number of
Trained Models}
\label{sub:appendix_bank_sensitivity}

The number of reference models $K_{\mathrm{train}}$ and the number of
held-out models $K_{\mathrm{val}}$ are parameters of the estimator and not of
model information, so the fitted scaling law should not depend on them. We
check this in two ways on CIFAR-10, Fashion-MNIST, and FFHQ. First, we sweep
$K_{\mathrm{train}}$ over $\{2,3,4,5\}$ on Fashion-MNIST and over
$\{1,2\}$ on CIFAR-10 and FFHQ, with $K_{\mathrm{val}}$ fixed at five on
Fashion-MNIST and two on the other two datasets. Second, since we train
$K_{\mathrm{train}}+K_{\mathrm{val}}$ models per $(P,N)$ pair, we sweep the two
numbers together on Fashion-MNIST, as they would change with the number of
trained models. We refit \eqref{eq:linear_gaussian_information_law}
independently at each value. Unlike the fits in the rest of this appendix,
these refits include the estimates at $N=N_{\mathrm{full}}$ as measurements of
$F(P)$. This keeps $F(P)$ fixed across values of $K_{\mathrm{train}}$, so that
the comparison isolates the effect of $K_{\mathrm{train}}$ on the other
parameters. On ImageNet we compare $K_{\mathrm{train}}=1$ with
$K_{\mathrm{train}}=2$ on the same two held-out models, without the estimates
at $N=N_{\mathrm{full}}$.

Fewer reference models give slightly larger estimates
(Figure~\ref{fig:appendix_ktrain}). This is the sign that
\eqref{eq:appendix_integrated_mixture_bias} predicts, since the mixture-score
error is nonnegative and decreases as $K_{\mathrm{train}}$ grows, and it means
the main-text numbers at $K_{\mathrm{train}}=2$ are upper bounds on the
estimates with more reference models. The fitted parameters change much less
than the estimates do (Table~\ref{tab:appendix_bank_sensitivity}). To first
order, fewer reference models multiply the estimates by a nearly constant
factor, which changes the prefactor $k$ and not the exponents.

The total number of trained models has the same effect
(Figure~\ref{fig:appendix_budget} and
Table~\ref{tab:appendix_budget_sensitivity}). This matters because the
datasets differ in it: on Fashion-MNIST we train ten models per $(P,N)$ pair,
and on the other datasets four. We rerun the Fashion-MNIST estimates with
$K_{\mathrm{train}}=K_{\mathrm{val}}=k$ for $k$ from two to five, reusing the
trained models so that only the estimator changes. The ten-model row is the
configuration of the other Fashion-MNIST experiments and reproduces them
exactly. With fewer models the estimates are larger on average, by an amount
that decreases as the number of models grows, which is again the sign that
\eqref{eq:appendix_integrated_mixture_bias} predicts. Individual ratios vary
more than in the $K_{\mathrm{train}}$ sweep, because each number of models
uses its own held-out evaluation points instead of a shared set. With four
models per $(P,N)$ pair, model information decreases at the same $N$ as with
ten.

In the same sweep, the three estimators of
Appendix~\ref{sec:model_information_computations} differ
(Table~\ref{tab:appendix_bank_sensitivity}). The fits to the two classifier
estimators are stable at every $K_{\mathrm{train}}$ and qualitatively the
same. The fits to the distillation estimator are not: at small
$K_{\mathrm{train}}$ the fit has $\sigma\approx0$ and $\ell$ at its upper
bound, so the fitted curve differs from those of the classifier estimators,
and only at the largest $K_{\mathrm{train}}$ we ran is it close to them. Many
reference models are necessary to distill a mixture of $K$ models into one
network, and the classifier estimators do not form this mixture explicitly.
For this reason we use the model-free classifier estimator throughout the main
text.

\begin{figure}[htbp!]
    \centering
    \input{figures/pgfplots/edm_styles.tex}

\begin{minipage}[t]{0.49\linewidth}
\centering
\begin{tikzpicture}
\begin{axis}[
  width=\linewidth, height=0.80\linewidth,
  xmode=log, ymode=log,
  xmin=0.003, xmax=60, ymin=40, ymax=4e4,
  xlabel={$N/N_{\mathrm c}(P)$},
  ylabel={model information (nats)},
  title={(a) Fashion-MNIST, $L=4$},
  ytick={1e2,1e3,1e4},
  grid=major, grid style={line width=0.1pt, draw=gray!24},
  tick label style={font=\scriptsize}, label style={font=\scriptsize},
  title style={font=\small},
  legend style={font=\scriptsize, draw=none, fill=white,
    at={(0.03,0.03)}, anchor=south west, legend columns=2},
  axis line style={black!65}, tick style={black!65}
]
\edmInfoSeries{figures/pgfplots/data/fashion_mnist_information_l4_k2.csv}{edmDepthOne}{*}{$K_{\mathrm{train}}=2$}
\edmInfoSeries{figures/pgfplots/data/fashion_mnist_information_l4_k3.csv}{edmDepthTwo}{square*}{$K_{\mathrm{train}}=3$}
\edmInfoSeries{figures/pgfplots/data/fashion_mnist_information_l4_k4.csv}{edmDepthThree}{triangle*}{$K_{\mathrm{train}}=4$}
\edmInfoSeries{figures/pgfplots/data/fashion_mnist_information_l4_k5.csv}{edmDepthFour}{diamond*}{$K_{\mathrm{train}}=5$}
\end{axis}
\end{tikzpicture}
\end{minipage}\hfill
\begin{minipage}[t]{0.49\linewidth}
\centering
\begin{tikzpicture}
\begin{axis}[
  width=\linewidth, height=0.80\linewidth,
  xmode=log,
  xmin=0.003, xmax=500, ymin=0.975, ymax=1.16,
  xlabel={$N/N_{\mathrm c}(P)$},
  ylabel={$\widehat I$ at $K$ / $\widehat I$ at largest $K$},
  title={(b) every smaller $K$, every size},
  ytick={1.00,1.05,1.10,1.15},
  grid=major, grid style={line width=0.1pt, draw=gray!24},
  tick label style={font=\scriptsize}, label style={font=\scriptsize},
  title style={font=\small},
  legend style={font=\scriptsize, draw=none, fill=white,
    at={(0.97,0.97)}, anchor=north east, legend columns=1},
  axis line style={black!65}, tick style={black!65}
]
\addplot[black!45, line width=0.5pt, dashed, no marks, forget plot]
  coordinates {(0.003,1.0) (500,1.0)};
\edmRatio{figures/pgfplots/data/cifar10_kratio_l1_k1.csv}{edmDepthOne}
\edmRatio{figures/pgfplots/data/cifar10_kratio_l2_k1.csv}{edmDepthOne}
\edmRatio{figures/pgfplots/data/cifar10_kratio_l3_k1.csv}{edmDepthOne}
\edmRatio{figures/pgfplots/data/cifar10_kratio_l4_k1.csv}{edmDepthOne}
\edmRatio{figures/pgfplots/data/cifar10_kratio_l5_k1.csv}{edmDepthOne}
\edmRatio{figures/pgfplots/data/fashion_mnist_kratio_l1_k2.csv}{edmDepthThree}
\edmRatio{figures/pgfplots/data/fashion_mnist_kratio_l1_k3.csv}{edmDepthThree}
\edmRatio{figures/pgfplots/data/fashion_mnist_kratio_l1_k4.csv}{edmDepthThree}
\edmRatio{figures/pgfplots/data/fashion_mnist_kratio_l2_k2.csv}{edmDepthThree}
\edmRatio{figures/pgfplots/data/fashion_mnist_kratio_l2_k3.csv}{edmDepthThree}
\edmRatio{figures/pgfplots/data/fashion_mnist_kratio_l2_k4.csv}{edmDepthThree}
\edmRatio{figures/pgfplots/data/fashion_mnist_kratio_l4_k2.csv}{edmDepthThree}
\edmRatio{figures/pgfplots/data/fashion_mnist_kratio_l4_k3.csv}{edmDepthThree}
\edmRatio{figures/pgfplots/data/fashion_mnist_kratio_l4_k4.csv}{edmDepthThree}
\edmRatio{figures/pgfplots/data/ffhq_kratio_l1_k1.csv}{edmSetWarm}
\edmRatio{figures/pgfplots/data/ffhq_kratio_l2_k1.csv}{edmSetWarm}
\edmRatio{figures/pgfplots/data/ffhq_kratio_l4_k1.csv}{edmSetWarm}
\addlegendimage{color=edmDepthOne, line width=0.7pt}\addlegendentry{CIFAR-10}
\addlegendimage{color=edmDepthThree, line width=0.7pt}\addlegendentry{Fashion-MNIST}
\addlegendimage{color=edmSetWarm, line width=0.7pt}\addlegendentry{FFHQ}
\end{axis}
\end{tikzpicture}
\end{minipage}
    \caption{\small \textbf{The number of reference models changes the
        estimates by a nearly constant factor.} \textbf{(a)}
        Fashion-MNIST at $L=4$ with $K_{\mathrm{train}}$ from two to five;
        the four curves lie on top of one another.
        \textbf{(b)} every smaller $K_{\mathrm{train}}$ on every dataset and
        model size, as a ratio to the estimate at the largest
        $K_{\mathrm{train}}$ for that dataset: seventeen curves, all within
        $[0.99,1.12]$ and almost all above one. With fewer reference models
        the estimates are larger, which is the sign of the error term
    \eqref{eq:appendix_integrated_mixture_bias}.}
    \label{fig:appendix_ktrain}
\end{figure}

\begin{table}[htbp!]
    \centering
    \caption{\small \textbf{Refitting the scaling law at every
        \(K_{\mathrm{train}}\).} We fit
        \eqref{eq:linear_gaussian_information_law} independently at each
        $K_{\mathrm{train}}$, on the same $(P,N)$ pairs. The column
        $\hat I_{\max}-F$ is $\hat I_{\max}(P)-F(P)$ at $P=P_{\mathrm{ref}}$. Across
        $K_{\mathrm{train}}$, $N_{\mathrm c}(P)$ changes by at most $2.1\%$ on
        Fashion-MNIST, $2.8\%$ on CIFAR-10, $0.5\%$ on FFHQ, and $1.3\%$ on
        ImageNet. On ImageNet the fit excludes the estimates at
        $N=N_{\mathrm{full}}$; with $K_{\mathrm{train}}=1$ instead of $2$, the
        median ratio of the estimates is $1.016$, $85\%$ of the ratios are at
        most $1.05$, and the largest is $1.19$, at $N=10$ and $L=2$. The fits
        to the two classifier estimators are stable in $K_{\mathrm{train}}$;
        the fit to the distillation estimator is close to them only at
        the largest $K_{\mathrm{train}}$.}
    \label{tab:appendix_bank_sensitivity}
    \small
    \resizebox{\linewidth}{!}{%
        \begin{tabular}{llrrrrrrr}
            \toprule
            dataset & estimator & \(K_{\mathrm{train}}\) &
            \(a\) & \(\sigma\) & \(\ell\) & \(b\) & \(\hat I_{\max}-F\) & log RMSE \\
            \midrule
            ImageNet & model-free & 1 & 0.348 & 0.619 & --- & --- & --- & --- \\
            ImageNet & model-free & 2 & 0.345 & 0.622 & 13.75 & 0.483 &
            \(7.556{\times}10^4\) & 0.112 \\
            \midrule
            CIFAR-10 & model-free & 1 & 0.402 & 0.502 & 15.85 & 0.502
            & \(2.388{\times}10^4\) & 0.1015 \\
            CIFAR-10 & model-free & 2 & 0.409 & 0.500 & 16.20 & 0.519
            & \(2.282{\times}10^4\) & 0.1009 \\
            \midrule
            FFHQ & model-free & 1 & 0.508 & 0.503 & 15.39 & 0.631 &
            \(9.576{\times}10^4\) & 0.1304 \\
            FFHQ & model-free & 2 & 0.509 & 0.505 & 15.10 & 0.625 &
            \(9.108{\times}10^4\) & 0.1205 \\
            \midrule
            Fashion-MNIST & model-free & 2 & 0.357 & 0.506 & 11.66 &
            0.421 & \(1.958{\times}10^4\) & 0.1059 \\
            Fashion-MNIST & model-free & 3 & 0.361 & 0.507 & 11.49 &
            0.428 & \(1.941{\times}10^4\) & 0.1025 \\
            Fashion-MNIST & model-free & 4 & 0.363 & 0.516 & 11.64 &
            0.430 & \(1.903{\times}10^4\) & 0.1057 \\
            Fashion-MNIST & model-free & 5 & 0.358 & 0.513 & 11.64 &
            0.420 & \(1.872{\times}10^4\) & 0.1026 \\
            \midrule
            Fashion-MNIST & model-based & 2 & 0.371 & 0.532 & 12.56 &
            0.428 & \(1.975{\times}10^4\) & 0.1164 \\
            Fashion-MNIST & model-based & 3 & 0.376 & 0.534 & 12.49 &
            0.440 & \(1.951{\times}10^4\) & 0.1156 \\
            Fashion-MNIST & model-based & 4 & 0.380 & 0.548 & 12.69 &
            0.447 & \(1.924{\times}10^4\) & 0.1159 \\
            Fashion-MNIST & model-based & 5 & 0.377 & 0.543 & 12.57 &
            0.440 & \(1.890{\times}10^4\) & 0.1189 \\
            \midrule
            Fashion-MNIST & distilled & 2 & 0.090 & 0.000 & 148.41 &
            0.082 & \(0.522{\times}10^4\) & 0.5536 \\
            Fashion-MNIST & distilled & 3 & 0.047 & 0.001 & 148.41 &
            0.017 & \(0.514{\times}10^4\) & 0.5955 \\
            Fashion-MNIST & distilled & 4 & 0.205 & 0.708 & 147.95 &
            0.215 & \(0.512{\times}10^4\) & 0.4951 \\
            Fashion-MNIST & distilled & 5 & 0.299 & 0.765 & 147.36 &
            0.328 & \(0.491{\times}10^4\) & 0.3405 \\
            \bottomrule
        \end{tabular}
    }
\end{table}

\begin{figure}[htbp!]
    \centering
    \input{figures/pgfplots/edm_styles.tex}

\begin{minipage}[t]{0.49\linewidth}
\centering
\begin{tikzpicture}
\begin{axis}[
  width=\linewidth, height=0.80\linewidth,
  xmode=log, ymode=log,
  xmin=0.003, xmax=60, ymin=40, ymax=4e4,
  xlabel={$N/N_{\mathrm c}(P)$},
  ylabel={model information (nats)},
  title={(a) Fashion-MNIST, $L=4$},
  ytick={1e2,1e3,1e4},
  grid=major, grid style={line width=0.1pt, draw=gray!24},
  tick label style={font=\scriptsize}, label style={font=\scriptsize},
  title style={font=\small},
  legend style={font=\scriptsize, draw=none, fill=white,
    at={(0.03,0.03)}, anchor=south west, legend columns=2},
  axis line style={black!65}, tick style={black!65}
]
\edmInfoSeries{figures/pgfplots/data/fashion_mnist_budget_l4_m4.csv}{edmDepthOne}{*}{$4$ models}
\edmInfoSeries{figures/pgfplots/data/fashion_mnist_budget_l4_m6.csv}{edmDepthTwo}{square*}{$6$ models}
\edmInfoSeries{figures/pgfplots/data/fashion_mnist_budget_l4_m8.csv}{edmDepthThree}{triangle*}{$8$ models}
\edmInfoSeries{figures/pgfplots/data/fashion_mnist_budget_l4_m10.csv}{edmDepthFour}{diamond*}{$10$ models}
\end{axis}
\end{tikzpicture}
\end{minipage}\hfill
\begin{minipage}[t]{0.49\linewidth}
\centering
\begin{tikzpicture}
\begin{axis}[
  width=\linewidth, height=0.80\linewidth,
  xmode=log,
  xmin=0.003, xmax=500, ymin=0.88, ymax=1.22,
  xlabel={$N/N_{\mathrm c}(P)$},
  ylabel={$\widehat I$ at $2k$ models / $\widehat I$ at $10$ models},
  title={(b) every number of models, every size},
  ytick={0.9,1.0,1.1,1.2},
  grid=major, grid style={line width=0.1pt, draw=gray!24},
  tick label style={font=\scriptsize}, label style={font=\scriptsize},
  title style={font=\small},
  legend style={font=\scriptsize, draw=none, fill=white,
    at={(0.97,0.97)}, anchor=north east, legend columns=1},
  axis line style={black!65}, tick style={black!65}
]
\addplot[black!45, line width=0.5pt, dashed, no marks, forget plot]
  coordinates {(0.003,1.0) (500,1.0)};
\edmRatio{figures/pgfplots/data/fashion_mnist_budgetratio_l1_m4.csv}{edmDepthOne}
\edmRatio{figures/pgfplots/data/fashion_mnist_budgetratio_l2_m4.csv}{edmDepthOne}
\edmRatio{figures/pgfplots/data/fashion_mnist_budgetratio_l4_m4.csv}{edmDepthOne}
\edmRatio{figures/pgfplots/data/fashion_mnist_budgetratio_l1_m6.csv}{edmDepthTwo}
\edmRatio{figures/pgfplots/data/fashion_mnist_budgetratio_l2_m6.csv}{edmDepthTwo}
\edmRatio{figures/pgfplots/data/fashion_mnist_budgetratio_l4_m6.csv}{edmDepthTwo}
\edmRatio{figures/pgfplots/data/fashion_mnist_budgetratio_l1_m8.csv}{edmDepthThree}
\edmRatio{figures/pgfplots/data/fashion_mnist_budgetratio_l2_m8.csv}{edmDepthThree}
\edmRatio{figures/pgfplots/data/fashion_mnist_budgetratio_l4_m8.csv}{edmDepthThree}
\addlegendimage{color=edmDepthOne, line width=0.7pt}\addlegendentry{$4$ models}
\addlegendimage{color=edmDepthTwo, line width=0.7pt}\addlegendentry{$6$ models}
\addlegendimage{color=edmDepthThree, line width=0.7pt}\addlegendentry{$8$ models}
\end{axis}
\end{tikzpicture}
\end{minipage}
    \caption{\small \textbf{Four and ten models per \((P,N)\) pair give
        nearly the same estimates.}
        Fashion-MNIST with $K_{\mathrm{train}}=K_{\mathrm{val}}=k$, for a total
        of $2k$ trained models per $(P,N)$ pair; four models
        is the CIFAR-10, FFHQ, and ImageNet configuration. \textbf{(a)} $L=4$ for each
        number of models, as in Figure~\ref{fig:fmnist_budget}. \textbf{(b)}
        the estimates with four, six, and eight models at every model size,
        as a ratio to the estimates with ten models: the average ratio
    is above one and decreases toward one as the number of models grows.}
    \label{fig:appendix_budget}
\end{figure}

\begin{table}[htbp!]
    \centering
    \caption{\small \textbf{Refitting the scaling law for every number of
        trained models.} The
        model-free estimator on Fashion-MNIST with
        $K_{\mathrm{train}}=K_{\mathrm{val}}=k$, indexed by the total number
        $2k$ of trained models per $(P,N)$ pair. The ten-model row is the
        configuration of the other Fashion-MNIST experiments. ``Mean ratio'' is
        the average ratio of the estimates to those with ten models; individual
        ratios lie in $[0.90,1.18]$, and with four models the median deviation
        is $6.2\%$. With four models against ten, $N_{\mathrm c}(P)$ changes
    by at most $8.5\%$.}
    \label{tab:appendix_budget_sensitivity}
    \small
    \begin{tabular}{rrrrrrrr}
        \toprule
        models per \((P,N)\) & mean ratio & \(a\) & \(\sigma\) & \(\ell\) &
        \(b\) & \(\hat I_{\max}-F\) & log RMSE \\
        \midrule
        4 & 1.07 & 0.383 & 0.510 & 11.50 & 0.449 & \(1.963{\times}10^4\) &
        0.1296 \\
        6 & 1.04 & 0.382 & 0.529 & 12.05 & 0.441 & \(1.935{\times}10^4\) &
        0.1276 \\
        8 & 1.02 & 0.397 & 0.519 & 11.80 & 0.472 & \(1.872{\times}10^4\) &
        0.1312 \\
        10 & 1 & 0.358 & 0.513 & 11.64 & 0.420 & \(1.872{\times}10^4\) &
        0.1026 \\
        \bottomrule
    \end{tabular}
\end{table}

\subsection{Visualizing Models Trained on Different Subsets}
\label{sub:appendix_visualization}

Figure~\ref{fig:appendix_sample_regimes} enlarges the CIFAR-10 samples at
two training-set sizes, before and at the local maximum of FID,
from models at $L=5$. Before the maximum, the models produce sharp
near-replicas of their training images, and FID is low. At the maximum, the
models no longer reproduce training images, and their new samples are less
sharp than the reproduced training images, so FID is much higher. At larger $N$, FID
decreases below its value before the maximum and continues to decrease up to
the full dataset.

\begin{figure}[htbp!]
    \centering
    \begin{minipage}[t]{0.49\linewidth}
        \centering
        \includegraphics[width=\linewidth]
        {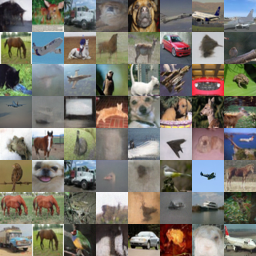}
        \par\smallskip
        \small (a) $L=5$, $N=710\approx0.30\,N_{\mathrm c}(P)$: memorization
        ratio $0.607$, FID $30.6$
    \end{minipage}
    \hfill
    \begin{minipage}[t]{0.49\linewidth}
        \centering
        \includegraphics[width=\linewidth]
        {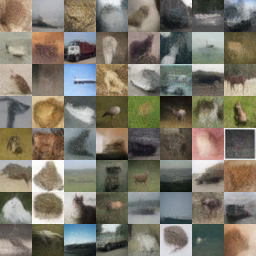}
        \par\smallskip
        \small (b) $L=5$, $N=2900\approx1.2\,N_{\mathrm c}(P)$: memorization
        ratio $0.001$, FID $89.2$
    \end{minipage}
    \caption{\small \textbf{Samples across the memorization transition.}
        Both panels show samples from the same CIFAR-10 $L=5$ architecture,
        with $N_{\mathrm c}(P)\approx2380$.
        The $N=710$ model sometimes repeats sharp training-set images
        and scores better on
        FID. The $N=2900$ model has stopped memorizing and scores substantially
        worse. At $N=6500$ FID is $23.1$, and on the full dataset it is
    $7.8$.}
    \label{fig:appendix_sample_regimes}
\end{figure}

\end{document}